\theoremstyle{plain}
\newtheorem{theorem}{Theorem}[section]
\newtheorem{proposition}[theorem]{Proposition}
\newtheorem{lemma}[theorem]{Lemma}
\theoremstyle{definition}
\theoremstyle{remark}
\newtheorem{example}[theorem]{Example}
\newcommand{\mathify}[1]{\ensuremath{#1}\xspace}
\newcommand{\fs}{\mathify{\mathcal{X}}} 
\newcommand{\ls}{\mathify{\mathcal{Y}}} 
\newcommand{\dd}{\mathify{\mathcal{D}}} 
\newcommand{\feat}{\mathify{x}} 
\newcommand{\R}{\mathify{\mathbb{R}}} 
\newcommand{\Var}{\operatorname{Var}} 
\DeclareMathOperator*{\argmin}{arg\,min}
\newcommand{\wh}{\widehat h}
\newcommand{\whs}{\widehat h^\star}
\newcommand{\hs}{h^\star}
\newcommand{\Pool}{{\mathcal{P}}}
\newcommand{\scI}{\mathcal{I}}
\newcommand{\talpha}{\widetilde{\alpha}}
\newcommand{\tell}{\widetilde{\ell}}
\newcommand{\balpha}{\boldsymbol{\alpha}}
\newcommand{\bp}{\boldsymbol{p}}
\newcommand{\bell}{\boldsymbol{\ell}}
\newcommand{\bE}{\boldsymbol{\E}}
\newcommand{\bbh}{\boldsymbol{\eta}}
\newcommand{\bsigma}{\boldsymbol{\sigma}}
\newcommand{\tbalpha}{\widetilde{\boldsymbol{\alpha}}}
\newcommand{\tbell}{\widetilde{\boldsymbol{\ell}}}
\newcommand{\tbh}{\widetilde{\boldsymbol{\eta}}}
\newcommand{\tbsigma}{\widetilde{\boldsymbol{\sigma}}}
\newcommand{\regret}{{{Reg}}}
\newcommand{\MoM}{{\textrm{MoM}}}
\newcommand{\bag}{\mathcal{B}}
\newcommand{\cX}{\mathcal{X}}
\newcommand{\cY}{\mathcal{Y}}
\newcommand{\cZ}{\mathcal{Z}}
\newcommand{\popl}{\mathcal{L}}
\newcommand{\hyps}{\mathcal{H}}
\newcommand{\cD}{\mathcal{D}}
\newcommand{\Rset}{\mathbb{R}}
\newcommand{\full}[1]{\ifnum \FULL=1{#1}\fi}
\newcommand{\short}[1]{\ifnum \FULL=0{#1}\fi}
\DeclareMathOperator*{\E}{\mathbb{E}}
\DeclareMathOperator*{\PP}{\mathbb{P}}
\newcommand{\wt}{\widetilde}
\newcommand \compactpara [1]{\vspace{0.1em}\noindent\textbf{#1}}
\icmltitlerunning{LLP for general loss functions}
\begin{document}

\twocolumn[
  \icmltitle{Optimal Learning from Label Proportions with General Loss Functions}



  \icmlsetsymbol{equal}{*}

  \begin{icmlauthorlist}
    \icmlauthor{Lorne Applebaum}{yyy}
    \icmlauthor{Travis Dick}{yyy}
    \icmlauthor{Claudio Gentile}{yyy}
    \icmlauthor{Haim Kaplan}{xxx}
    \icmlauthor{Tomer Koren}{xxx}
  \end{icmlauthorlist}

  \icmlaffiliation{yyy}{Google Research, NY, USA}
  \icmlaffiliation{xxx}{Google Research and Tel Aviv University, Tel Aviv, Israel}

  \icmlcorrespondingauthor{Claudio Gentile}{cgentile@google.com}

  \icmlkeywords{Machine Learning, ICML}

  \vskip 0.3in
]




\printAffiliationsAndNotice{}  

\begin{abstract}
Motivated by problems in online advertising, we address the task of Learning from Label Proportions (LLP).  
We introduce a novel and versatile low-variance debiasing methodology to learn from aggregate label information, significantly advancing the state of the art in LLP. Our debiasing approach exhibits remarkable flexibility, seamlessly accommodating a broad spectrum of practically relevant loss functions across both binary and multi-class classification settings. By carefully combining our estimators with standard techniques, we improve sample complexity guarantees for a large class of losses of practical relevance.
We also empirically validate the efficacy of our proposed approach across a diverse array of benchmark datasets, demonstrating compelling empirical advantages over standard baselines.
\end{abstract}

\section{Introduction}

Learning from Label Proportions (LLP) is a type of weakly supervised problem where, instead of individual labels for each training instance, the algorithm has only access to the proportion of different labels within groups of instances, called {\em bags}. This may occur due to privacy/legal restrictions \cite{rue10,wibb11}, cost of supervision \cite{chr04}, lack of resolution in labeling instruments \cite{dery+18}, etc. 

While initial formulations of LLP date back to at least 15 to 20 years ago (e.g., \cite{chr04,dfk05,mco07,QuadriantoSCL08,rue10}), it is only with the recent changes in the Web advertising industry that we see a substantial resurgence of interest in this problem.
Web advertising, arguably one of the largest scale real-world applications of Machine Learning, is undergoing important changes in recent years. 
Advertisers (or other entities working on their behalf, often called {\em AdTechs}) train models for predicting the chance that a user will {\em convert} (e.g., sign up for an account, buy a product, etc.) after they clicked advertisements posted on a publisher website. The output of such models is typically used as inputs when computing the bid price in online auctions that power automated bidding over the Web. 
Now, since the ad click and conversion events happen on different websites (the auction and the click event is on a publisher website, the conversion event is on an advertiser website), collecting training labels amounts to linking the two events, thus tracking the user behavior across the two sites. 

Historically, third-party cookies or link decoration has made this straightforward. Nonetheless, even acknowledging the critical role that online advertising is playing on the web (and the impossibility of perfect user tracking protection), growing privacy concerns have led major web browsers to introduce ad hoc APIs to measure ad performance while, at the same time, protecting user privacy. These include APIs from Apple's Safari \cite{w19}, and Mozilla's Firefox \cite{cc22} which have both deprecated third-party cookies in past years.
These APIs allow AdTechs to collect cross-site data only {\em in aggregate}. For instance, from the publisher viewpoint, where the auction is run, it is of paramount importance to obtain effective {\em per-interaction} predictions. Yet, what the publisher observes is only a list of ad interactions (the click events) and an {\em unordered} list of conversion events (that is, their {\em number}) that originate from those interactions on the advertiser side.
%
We view LLP as a principled technique for making the best use of these APIs for training conversion models.

Given the above context, in this paper we unveil a versatile and effective debiasing methodology that extracts signals from aggregate label information. This novel low-variance approach represents an important leap forward in LLP, as it allows us to effortlessly accommodate a panoply of practically relevant loss functions, in both binary and multi-class classification settings. Below we discuss our contribution in the context of the relevant LLP literature. 

\noindent{\bf Related work. }
As elucidated, e.g., by \citet{fcc23}, many variants of LLP exist depending on the postulated relationships among bags, features and labels. We consider here the  variant where bags are generated at random, {\em independent} of the data content. While relatively simple, we shall see that this setting already poses substantial technical challenges when it comes to studying sample complexity guarantees. 
The most relevant papers to our work in this setting are \cite{10.5555/3666122.3666778,l+24,b+25}. We defer to Appendix \ref{sa:related_work} for a more comprehensive survey of the LLP literature.

Like our work, the three aforementioned studies operate within a standard statistical learning framework with random, non-overlapping bags, but unlike ours, they are limited to {\em binary} classification. \citet{10.5555/3666122.3666778} propose a general LLP reduction to standard learning, applicable to any bounded loss, which turns aggregate labels into unbiased individual estimates, albeit increasing variance. They show $k/\sqrt{n}$ regret guarantees for methods like ERM and SGD, where $n$ is the sample size and $k$ is the bag size. \citet{l+24} and \citet{b+25} concentrate solely  on square loss, and improve in various ways over \cite{10.5555/3666122.3666778}. Specifically, \citet{l+24} achieve fast rates of the form $k^3/n$ in (restricted) realizable settings, subsequently improved to the optimal rate $k/n$ in \cite{b+25}. The latter paper also contains non-realizable sample complexity bounds of the form $\sqrt{k/n}$, again improving over \cite{10.5555/3666122.3666778} when the loss is the square loss.

\noindent{\bf Contributions. }
Inspired by \citet{10.5555/3666122.3666778}'s broad scope, our work refines and significantly extends prior LLP investigations. Notably, our analysis embraces a diverse spectrum of practical, even unbounded, loss functions---unlike the constraints in \cite{10.5555/3666122.3666778,l+24,b+25}. We refine and significantly extend these prior investigations in multiple ways. 
For instance, we show that the regret guarantees of $\sqrt{k/n}$ for the non-realizable setting can also be achieved for arbitrary binary and multi-class classification scenarios with unbounded losses (like the log loss/cross entropy loss) under bounded moment conditions. We also extend \citet{b+25}'s optimal $k/n$ realizable regret far beyond square loss.
More importantly, we provide the first unified treatment of multi-class LLP. Within this framework, we introduce a fundamental distinction between {\em full histogram} and {\em total} multi-class scenarios, based on the granularity of class-wise aggregate label information. This practically motivated differentiation directly dictates regret guarantees: full histogram often yields $c$-independent bounds ($c$ being the number of classes) whereas total multi-class regret scales as $c^2$.
Further, we report the results of a suite of experiments on a diverse set of real-world benchmarks, where we compare our debiasing technique to representative baselines available in the literature having the same breadth of applicability. We follow, whenever possible, prior experimental setups (like the one in \cite{10.5555/3666122.3666778}), but choose to report log loss test set results, as more in line with the afore-mentioned motivation about conversion prediction. Though somewhat preliminary in nature, these experiments reveal the effectiveness of our method, specifically when dealing with large bag sizes. This is indeed the scenario we are mostly concerned with for Web advertising, where the aggregate signal might even be at the level of ad campaigns (hence, very large bags).

\section{Preliminaries and Notation}
We denote by $\fs$ the input space (sometimes called feature space or instance space), and by $\ls$ the label space, which can be either binary, $\cY = \{0,1\}$, or multi-class, $\cY =  \{0,\ldots,c-1\}$. We denote by $\cD$ an unknown distribution over $\fs \times \cY$, and denote the corresponding random variables by $x$ and $y$. 
We also denote by 
$\cD_{\ls|x}$ the conditional distribution of random variable $y$ given $x$.
For each class $r = 0,\ldots,c-1$, we denote by  $\eta_r(\feat)$ the probability $\eta_r(\feat) = \PP_{ \cD_{\ls|x}}(y=r | x ) = {\E}_{y \sim \cD_{\ls|x}}[\{y=r\}|x]$, where $\{\cdot\}$ is the indicator function of the predicate at argument. We shall often gather such probabilities into a  $c$-dimensional (column) vector $\eta(\feat) = [\eta_0(x),\ldots,\eta_{c-1}(x)]^\top$. Note that $\sum_{r=0}^{c-1} \eta_r(\feat) = 1$ for all $x \in \fs$. In the binary case, where $y 
\in \{0,1\}$, there is no need for a vector notation, and we simply define $\eta(x) = \PP_{ \cD_{\ls|x}}(y=1 | x ) = {\E}_{y \sim \cD_{\ls|x}}[y|x]$ to be the probability of drawing label 1 conditioned on feature vector $\feat$. When clear from the surrounding context, we shall often shorten or remove the subscripts from probabilities and expectations and write, e.g., ${\E}_{(x,y)}[f(x,y)]$, instead of ${\E}_{(x,y) \sim \cD}[f(x,y)]$, and ${\E}[y|x]$ instead of ${\E}_{y \sim \cD_{\ls|x}}[y|x]$. No ambiguity will arise.

We take the standard viewpoint of statistical learning, and operate within a given function class $\hyps$ (sometimes called hypothesis space or hypothesis class), where each $h \in \hyps$ is a deterministic mapping $h\,:\, \cX \rightarrow \cZ$, where $\cZ$ is some {\em output} space. In the binary label case, we shall work with $\cZ= [0,1]$, so that $h(x)$ can be interpreted as the probability that $y = 1$ given $x$ according to model $h$. In the multi-class case, the output space $\cZ$ will depend on the loss function (see below). For instance, we may have $\cZ = \Delta_c$, the $c$-dimensional probability simplex $\Delta_c = \{(h_0,\ldots, h_{c-1})\,:\, h_i \geq 0,\, \sum_{r=0}^{c-1} h_r = 1\}$, thus $h_r(x)$ can be interpreted as the probability that $y = r$ given $x$ according to model $h$. 

A loss function $\ell~\colon \cZ~\times~\cY~\to~\Rset^+$ measures the discrepancy between $h(x)$ and $y$. Notable examples of loss functions where $\cZ = \Delta_c$ include
the square loss $\ell(h(x),y) = \frac{1}{c}\sum_{r=0}^{c-1} \bigl(\{y=r\} - h_r(x)\bigl)^2$ (sometimes called {\em Brier} score), which reduces in the binary case to the more familiar expression $\ell(h(x),y) = (y-h(x))^2$, and the log loss $\ell(h(x),y) =  - \log h_y(x)$, which in the binary case can be rewritten as $\ell(h(x),y) = -y \log h(x) - (1-y)\log (1-h(x))$. 
Another loss function, often used to train and evaluate conversion prediction models in ad domains, is the (capped) Poisson Log Loss, where $\cZ = [0,c-1]$, and $\ell(h(x),y) = h(x) - y \log h(x)$. Disregarding the capping, this can be seen as the negative log likelihood of $y$ generated by Poisson distribution with parameter $h(x)$. Hence minimizing this loss is equivalent to maximizing the log likelihood of the parameter $h(x)$ given the observation $y$.


Following very recent works on the subject \cite{b+25}, we decided to skip the heavy machinery of empirical process theory (e.g., 
\cite{bbm05,blm12,ver18,lugosi2016riskminimizationmedianofmeanstournaments,pmlr-v267-hogsgaard25a}), and restrict to the case where the hypothesis space $\hyps$ is finite. The main ideas contained in this paper can be lifted to infinite hypothesis cases, at the cost of mathematical details which are, however, not specific to the LLP framework. 

Given $\dd$, $\hyps$, and $\ell$, the \emph{population loss} (or statistical {\em risk}) of $h \in \hyps$ is
\(
  \popl(h) = \E_{(x,y) \sim\cD}[\ell(h(x), y) ]~.
\)
The {\em excess risk}, or (simple) {\em regret} $\regret(h)$ of $h$ is defined as $\regret(h) = \popl(h) - \popl(\whs)$, where
$\whs = \argmin_{h \in \hyps} \popl(h)$ is the {\em best-in-class} hypothesis, that is, the hypothesis in $\hyps$ having the smallest population loss. The learning setting is {\em realizable} when the Bayes-optimal predictor $\hs = \argmin_{h\,:\fs \rightarrow \cZ} \popl(h)$, belongs to $\hyps$ (here the minimum is taken over all possible (measurable) functions), and is otherwise {\em non-realizable}. Thus, in the realizable setting, $\whs = \hs$.

\subsection{Our Learning with Label Proportions problem}
An {\em example} is a pair $(x,y)$ drawn according to $\cD$. A dataset $S$ of size $n$ is an i.i.d. sequence of examples $S = (x_1,y_1),\ldots, (x_n,y_n)$. In Learning with Label Proportions (LLP), the feature vectors $x_i$ in $S$ are available in the clear, while the labels are obfuscated by randomly grouping them into $m$ {\em bags} of a given
size $k$, with $n = m\times k$. That is, the dataset $S$ is only available in the form
\begin{equation}\label{e:dataset}
S = \underbrace{((x_{1,1},\ldots, x_{1,k}),\alpha_1)}_{(\bag_1,\alpha_1)},\ldots, \underbrace{((x_{m,1},\ldots, x_{m,k}),\alpha_m)}_{(\bag_m,\alpha_m)}~,
\end{equation}
where $\alpha_j$ is the aggregate label value associated with the $j$-th bag $\bag_j = \{x_{j,i} \colon i \in [k]\}$. In the binary label case, $\alpha_j = \frac{1}{k} \sum_{i=1}^k y_{j,i}$ is simply the label proportion (fraction of positive labels) in the $j$-th bag. In the interpretation of click events and conversion events we alluded to in the introduction, $x_{j,1},\ldots, x_{j,k}$ encode a sequence of (publisher side) click events and associated features occurring  either within a given time frame or within a campaign (or aggregated in some other way), and $k\alpha_j$ is the {\em number} of conversions associated with those clicks on the advertiser side. The multi-class case is a scenario where multiple conversions may be associated with a given click.
In this case, we need to separate two kinds of aggregate labels, corresponding to two different practical settings. We are in the {\em full histogram} multi-class case when the label proportion for the $j$-th bag is a $c$-dimensional vector $\balpha_j = [\alpha_{j,0},\ldots,\alpha_{j,c-1}]^\top$ where, for each class $r \in \{ 0,\ldots, c-1\}$, $\alpha_{j,r} = \frac{1}{k} \sum_{i=1}^k \{y_{j,i} = r\}$ is the fraction of times label $r$ occurs in the $j$-th bag (note that $\balpha_j$ is a probability vector, in that $\sum_{r=0}^{c-1} \alpha_{j,r} = 1$
). 
We are in the {\em total} multi-class case 
when the classes $0,1,\ldots, c-1$ can themselves be interpreted as {\em counts} (e.g., $y$ is the number of purchases associated with a given user $x$ within a given time frame, capped at $c-1$) and, for each bag $j$ only the overall fractional count $\alpha_j = \frac{1}{k}
\sum_{i=1}^k y_{j,i}$ within the bag is  available. Note that in the binary case, the two notions collapse, and are in turn equivalent to the above-mentioned fraction of positive labels in a bag. The pair $(\bag_j,\alpha_j)$ will often be denoted by $z_j$.

Therefore, in LLP the learning algorithm receives information about the $n$ labels $y_{j,i}$ in $S$ only in the aggregate form determined by the $m$ signals $\alpha_j$. This signal is a (probability) vector $\balpha_j$ in the full histogram multi-class case, and a single number in the total multi-class case (as well as in the binary case). Note that for a fixed dataset size $n = mk$, increasing $k$ reduces available label information. Additionally, the multi-class task is naturally more difficult than the binary task, with the total scenario being harder than the full histogram scenario.


Given a distribution $\cD$ over $\fs \times \ls$, a hypothesis space $\hyps$ and a loss function $\ell$, our goal is to find $\wh \in \hyps$ such that the population loss
\(
  \popl(\wh) 
\)
is as small as possible with high probability over the random draw of $S$. It is important to stress that in LLP the goal is that of minimizing the population loss at the {\em individual} label level $y$, still receiving information in the form of {\em aggregate} labels (\ref{e:dataset}).
We aim to design LLP algorithms for $\wh$, and quantify $\regret(\wh)$ in both realizable and non-realizable settings, analyzing its dependence on bag size $k$ (label obfuscation), total examples $n$, properties of the loss function $\ell$, and class count $c$.
Extending beyond recent square-loss investigations (\cite{l+24,b+25}), we cover a wide range of practical, potentially unbounded, loss functions. Utilizing a Median-of-Means Tournament \cite{lugosi2016riskminimizationmedianofmeanstournaments} (see Section \ref{ss:MoM} for details), we derive sample complexity guarantees that significantly advance prior art in two key directions: general loss functions for binary classification, and multi-class classification with general losses (with either full histogram or total label information).

\section{Bag-level Estimators for Arbitrary Losses}\label{s:general_binary}
This section contains our general treatment for the binary classification case. Due to space limitations, the multi-class setting is mainly contained in Appendix \ref{sa:multiclass}.

Given a model $h \,:\, \fs \rightarrow [0,1]$ for binary labels, consider a generic loss function $\ell(h(x),y)$. Note that, because labels $y$ are binary in $\{0,1\}$, we can always rewrite $\ell(h(x),y)$ as 
\begin{align*}
\ell(h(x),y) 
&= \overbrace{\ell(h(x),0)}^{f_1(h(x))} + y\overbrace{\Bigl(\ell(h(x),1)-\ell(h(x),0)\Bigl)}^{f_2(h(x))}~.
\end{align*}
Hence, for suitable functions $f_1, f_2\,:\,[0,1] \rightarrow \R$, any loss depending on binary labels is {\em linear} in the label $y$, and can be written as
\(
\ell(h(x),y) = f_1(h(x)) + y f_2(h(x)).
\)
For instance, the log loss is obtained by setting $f_1(h) = \log \frac{1}{1-h}$, $f_2(h) = \log \frac{1-h}{h}$, and the square loss is obtained by $f_1(h) = h^2$, and $f_2(h) = 1-2h$, where in both cases $h \in [0,1]$. 

For bag $z = ((x_1,\ldots,x_k),\alpha)$, consider now the \emph{bag-$z$} loss 
\begin{align}\label{e:baglevel_estimator}
\ell_b(h,z) 
&=
{\E}_x[f_1(h(x)) +p\,f_2(h(x))]\\ 
&\,\,\,+ (\alpha - p)\,\Bigl(\sum_{i=1}^k f_2(h(x_{i})) -  k{\E}_x[f_2(h(x))] \Bigl)~,\notag
\end{align}
where $p = \E[y]$. We use here and throughout the notation $\ell_b$ to emphasize the role of (\ref{e:baglevel_estimator}) as a bag-level loss, and to clearly differentiate it from the original loss $\ell$. Loss $\ell_b(h,z)$ is a bag-level counterpart to $\ell(h(x),y)$, based on the centered variables $\alpha - p$ and $\sum_{i=1}^k f_2(h(x_{i})) -  k{\E}_x[f_2(h(x))]$, where exact knowledge of $p$, ${\E}_x[f_1(h(x))]$, and ${\E}_x[f_2(h(x))]$ is assumed. This assumption will be lifted later. The following lemma shows that $\ell_b(h,z)$ is an unbiased estimator of $\E_{(x,y)}[\ell(h(x),y)]$. Moreover, the centering helps reduce the variance 
to a constant which depends on the loss $\ell$ and the model $h$ at hand but, crucially, is {\em independent} of the bag size $k$. All proofs are given in \Cref{prooflem:unbiased}.
\begin{lemma}\label{l:begin}
Let $(x_1,y_1),\ldots, (x_k,y_k) \in \fs \times \{0,1\}$ be drawn i.i.d.\ from $\cD$. Let $z = ((x_1,\ldots,x_k),\alpha)$ be the corresponding bag. For any function $h \in \hyps$, and any binary loss $\ell(h(x),y) = f_1(h(x)) + y f_2(h(x))$, we have
\[
{\E}_{z}[\ell_b(h,z)] = {\E}_{(x,y)}[\ell(h(x),y)]~,\qquad
\]
and\,
$\Var_z(\ell_b(h,z)) 
\leq 
\frac{5}{2}\,{\E}_x\bigl[\bigl(f_2(h(x)) \bigl)^2 \bigl]$~.
\end{lemma}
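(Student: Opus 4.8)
The plan is to obtain unbiasedness from a one-line expectation computation and the variance bound by squaring and exploiting the i.i.d.\ structure inside a bag. Throughout, abbreviate $g_i := f_2(h(x_i))$, $\bar g := \E_x[f_2(h(x))]$, $\tilde g_i := g_i - \bar g$, $\tilde y_i := y_i - p$, and $C := \E_x[f_1(h(x))] + p\,\bar g$, so that, directly from \eqref{e:baglevel_estimator} together with $\alpha - p = \tfrac1k\sum_i (y_i - p)$,
\[
\ell_b(h,z) - C \;=\; (\alpha - p)\Bigl(\textstyle\sum_{i=1}^k g_i - k\bar g\Bigr) \;=\; \Bigl(\tfrac1k\textstyle\sum_{i}\tilde y_i\Bigr)\Bigl(\textstyle\sum_{j}\tilde g_j\Bigr) \;=\; \tfrac1k\sum_{i,j}\tilde y_i\,\tilde g_j~.
\]
Taking the expectation of this identity over the i.i.d.\ sample: for $i \neq j$, $\tilde g_j$ is a function of $x_j$ alone, which is independent of $(x_i,y_i)$, and $\E[\tilde y_i] = 0$, so the off-diagonal terms vanish; each of the $k$ diagonal terms equals $\E[\tilde y_1 \tilde g_1] = \E[y\,f_2(h(x))] - p\,\bar g$. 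Hence $\E_z[\ell_b(h,z)] = C + \E[y\,f_2(h(x))] - p\bar g = \E_x[f_1(h(x))] + \E_{(x,y)}[y\,f_2(h(x))] = \E_{(x,y)}[\ell(h(x),y)]$, using the linear form $\ell(h(x),y) = f_1(h(x)) + y\,f_2(h(x))$.

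For the variance, set $\rho := \E[\tilde y_1 \tilde g_1]$ (so $\E_z[\ell_b(h,z)] = C + \rho$), whence
\[
\Var_z(\ell_b(h,z)) \;=\; \E\bigl[(\ell_b(h,z) - C)^2\bigr] - \rho^2 \;=\; \tfrac1{k^2}\sum_{i,i',j,j'}\E[\tilde y_i\,\tilde y_{i'}\,\tilde g_j\,\tilde g_{j'}] \;-\; \rho^2~.
\]
The idea is that, since distinct sample coordinates are independent and $\E[\tilde y_i] = \E[\tilde g_j] = 0$, any summand in which some coordinate index occurs exactly once has zero expectation; the surviving configurations are (i) $i=i'=j=j'$ ($k$ terms, value $\E[\tilde y_1^2\tilde g_1^2]$), (ii) $i=i'\neq j=j'$ ($k(k-1)$ terms, value $\E[\tilde y_1^2]\,\E[\tilde g_1^2]$), and (iii) the two ``crossed'' matchings $i=j\neq i'=j'$ and $i=j'\neq i'=j$ ($2k(k-1)$ terms, value $\rho^2$). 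Collecting terms gives
\[
\Var_z(\ell_b(h,z)) \;=\; \tfrac1k\,\E[\tilde y_1^2\tilde g_1^2] \;+\; \tfrac{k-1}{k}\,\E[\tilde y_1^2]\,\E[\tilde g_1^2] \;+\; \tfrac{k-2}{k}\,\rho^2~,
\]
and the crucial point is that the $1/k^2$ prefactor coming from the concentration of $\alpha$ exactly cancels the $\Theta(k)$ growth of $\sum_j\tilde g_j$, leaving a bound free of $k$.

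It then remains to bound the three pieces by $\E_x[(f_2(h(x)))^2]$. Since $y \in \{0,1\}$ and $p \in [0,1]$, we have $\tilde y_1^2 = (y-p)^2 \le 1$, hence $\E[\tilde y_1^2\tilde g_1^2] \le \E[\tilde g_1^2] = \Var(f_2(h(x))) \le \E_x[(f_2(h(x)))^2]$; also $\E[\tilde y_1^2] = p(1-p) \le \tfrac14$; and by Cauchy--Schwarz $\rho^2 \le \E[\tilde y_1^2]\,\E[\tilde g_1^2] \le \tfrac14\,\E_x[(f_2(h(x)))^2]$. Using $\tfrac1k,\tfrac{k-1}{k},\tfrac{k-2}{k} \le 1$ and substituting yields $\Var_z(\ell_b(h,z)) \le (1 + \tfrac14 + \tfrac14)\,\E_x[(f_2(h(x)))^2] = \tfrac32\,\E_x[(f_2(h(x)))^2] \le \tfrac52\,\E_x[(f_2(h(x)))^2]$, as claimed (the constant is not tight; $\tfrac52$ comfortably suffices). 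The only mildly delicate step is the combinatorial one — correctly enumerating and counting the index patterns that survive in the fourfold moment sum, in particular keeping track of the two crossed correlation terms that produce the $\rho^2$ contributions; the rest is routine bookkeeping.
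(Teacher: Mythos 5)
Your proof is correct, and while the unbiasedness computation is essentially the same as the paper's (center both factors, use independence and zero means to kill all off-diagonal terms), your variance argument takes a genuinely different route. The paper conditions on the feature vectors of the bag and applies the law of total variance: the conditional label noise contributes $\Var_\alpha(\alpha\,|\,\bag)=\tfrac{1}{k^2}\sum_i\eta(x_i)(1-\eta(x_i))\le\tfrac{1}{4k}$, and the remaining term $\Var_\bag\bigl(\widetilde f_2\,(\tfrac1k\sum_i\eta(x_i)-p)\bigr)$ is handled by a fourth-moment expansion in the centered variables $f_2(h(x_i))-\E[f_2]$ and $\eta(x_i)-p$. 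You instead expand $\ell_b(h,z)-C=\tfrac1k\sum_{i,j}\widetilde y_i\,\widetilde g_j$ directly with the raw label residuals $\widetilde y_i=y_i-p$, enumerate the surviving index patterns of the fourfold sum (diagonal, paired, and the two crossed matchings giving $\rho^2$), and subtract $\rho^2$; your enumeration and counts ($k$, $k(k-1)$, $2k(k-1)$) are right, the resulting exact identity with the $\tfrac{k-2}{k}\rho^2$ term is correct, and the final bounds ($\widetilde y_1^2\le 1$, $\E[\widetilde y_1^2]=p(1-p)\le\tfrac14$, Cauchy--Schwarz for $\rho^2$) are all valid, including at $k=1,2$. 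Your one-shot unconditional computation is more elementary and in fact yields the sharper constant $\tfrac32$ (the paper's looser $\tfrac52$ comes from its two-stage over-approximations), so the claimed inequality follows a fortiori. What the paper's conditional (total-variance) decomposition buys is reusability: in the proof of Theorem~\ref{t:mainbinary} the same conditioning structure is recycled almost verbatim when $p$ and the population expectations are replaced by estimates from $S_1,S_2$, where the centered variables are only approximately centered; your direct expansion would need to be redone there, but as a proof of Lemma~\ref{l:begin} it is complete and correct.
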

A few comments are in order at this point. 
First, we shall massage Lemma \ref{l:begin} to make it useful for our analysis (see Section \ref{ss:MoM} below). Yet, what this lemma is intuitively saying should be clear: any loss function for which the $n$ examples $(x_i,y_i)$ are available in the clear can be replaced in expectation by a loss function where only the $m$ bags $z_j = ((x_{j,1},\ldots,x_{j,k}),\alpha_j)$ are observable instead. Moreover, the fact that the variance of this bag-level loss is independent of $k$ makes the price for this replacement quite crisp: Instead of dealing with $n$-many supervised signals $y_i$, we make do with only $m$-many aggregate  signals $\alpha_j$. This is similar in spirit to the result contained in \cite{10.5555/3666122.3666778} (Proposition 4.2 therein), where an unbiased estimator for general losses is also provided. The important difference here is that whereas the variance of the estimator in \cite{10.5555/3666122.3666778} grows {\em linearly} with $k$ (as shown in their Theorem 4.3), the variance of our estimator is {\em independent} of $k$. 
In terms of regret, this directly translates into a bound of the form $1/\sqrt{m}$ (or $1/m$ in realizable scenarios) as opposed to 
$\sqrt{k/m}$, 
as given in \cite{10.5555/3666122.3666778}. Note that when all $n$ labels are available, the best possible bounds read $1/\sqrt{n} = 1/\sqrt{mk}$ or $1/n = 1/(mk)$. 
Second, the very same arguments above can be adapted to the case where dataset $S$ has bags of different sizes $k_1,k_2, \ldots,k_m$, with $\sum_{j=1}^m k_j = n$, as we do still have $m$-many supervised signals, $k$ being replaced now by $\smash{\widehat k}$, the {\em average} size of the bags in $S$.
%
%
Third, this is essentially the best one can hope for in terms of  how regret depends on $n$, when the dataset is arranged into $m=n/k$ non-overlapping bags of size $k$. As shown in Theorem 3.1 of \cite{b+25} (see also Theorem 8 in \cite{l+24}), there are simple realizable problems where the regret of any algorithm cannot be better than $O(k/n) = O(1/m)$.
Fourth, the centering of the variables in the estimator (\ref{e:baglevel_estimator}) is reminiscent of what the authors of \cite{b+25} have very recently proposed for square loss, in order to reduce variance. Yet, our estimator (\ref{e:baglevel_estimator}) virtually applies to any loss, not just to square loss and, even for square loss, it does {\em not} reduce to the estimator in \cite{b+25} (the right comparison is to what those authors call the non-clipped version of the loss at the bag level in Eq. (2) therein).

\subsection{The Median of Means Tournament Algorithm}\label{ss:MoM}
In order to achieve regret rates which are fast when  realizable conditions allow them, 
it is well-known (e.g., \cite{ma00,me02,bbm05}) that instead of estimating (population) losses we need to estimate loss {\em differences}, that is,
for each $h_1,h_2 \in\hyps$, we would like to actually use the estimator 
\begin{align}
&\Delta\ell_b(h_1,h_2;z)\notag\\
&= \ell_b(h_1,z) - \ell_b(h_2,z)\notag\\
&= {\E}_x[\Delta f_1(h_1,h_2;x)] + p\, {\E}_x[\Delta f_2(h_1,h_2; x)]\label{e:baglevel_estimator_diff}\\ 
&\quad + (\alpha-p)\,\Bigl(\sum_{i=1}^k \Delta f_2(h_1,h_2; x_i) -  k{\E}_x[\Delta f_2(h_1,h_2; x)] \Bigl)~,\notag
\end{align}
\begin{align*}
\mbox{where:}\qquad \Delta f_1(h_1,h_2;x) 
&= f_1(h_1(x)) - f_1(h_2(x))\\
\qquad \Delta f_2(h_1,h_2; x) 
&= f_2(h_1(x)) - f_2(h_2(x))~.
\end{align*}
In addition, since we want to cover losses of practical relevance, like the log loss, which are {\em unbounded}, we resort to a Median-of-Means (MoM) estimator.\footnote
{
Other estimators working with unbounded random variables could be used instead, see, e.g., the survey \cite{lugosi2019mean}. Moreover, if we restricted to bounded losses, then a standard empirical average would suffice here.
}
This is in contrast to empirical mean estimators, which are the basis of the Empirical Risk Minimization (ERM) methods in \cite{10.5555/3666122.3666778,l+24,b+25}. (ERM is sensitive to the range of the loss values, even if mean and variance of the loss are bounded.)

Our algorithm is thus shaped as a MoM {\em tournament}. We first need to recall what a MoM estimator is. Given a set of i.i.d. random variables $S = \{v_1,\ldots, v_n\}$ with finite mean $\mu$ and variance, consider the  index set $[n] =\{1,\ldots,n\}$, and randomly partition it into $r$ groups $\scI_1,\ldots,\scI_r$, each of size $|\scI_i| \geq \lfloor \frac{n}{r} \rfloor$. Then set $\widehat \E(\scI_i ) = \frac{1}{|\scI_i|}\sum_{j \in \scI_i} v_j$ be the empirical mean for the $i$-th group. Then define the MoM estimator $\widehat \mu_{\MoM}(S)$ of $\mu$ based on $S$ as\footnote
{
If the median is not uniquely defined (which may happen when $r$ is an even number) we take the average of the two middle points. So, for instance ${\textrm {median}}\{1,2,3\} = 2$, but ${\textrm {median}}\{1,2,3,4\} = 2.5$.
Note that this always implies
$\widehat \mu_{\MoM}(-S) = -\widehat \mu_{\MoM}(S)$. E.g., ${\textrm {median}}\{-1,-2,-3\} = -2$, and ${\textrm {median}}\{-1,-2,-3,-4\} = -2.5$. 
}
\[
\widehat \mu_{\MoM}(S) = {\textrm {median}}\Bigl\{\widehat \E(\scI_1), \ldots, \widehat \E(\scI_r)\Bigl\}~.
\]
We shall set throughout $r = \lceil 8\log (1/\delta)\rceil$ (and assume in all our applications of MoM that $n \geq r$), where $\delta \in (0,1)$ is the desired confidence level.

Now, in order to remove the knowledge of $p$, ${\E}_x[\Delta f_1(h_1,h_2, x)]$, and ${\E}_x[\Delta f_2(h_1,h_2; x)]$ in (\ref{e:baglevel_estimator_diff}), we replace the three expectations by empirical estimates on separate datasets. Specifically, we partition the set of bags $S$ into three disjoint subsets, $S_1,S_2$, and $S_3$ of equal size:
\begin{equation}\label{e:split}
S = \{\underbrace{z_1,\ldots,z_m}_{S_1},\underbrace{z_{m+1},\ldots,z_{2m}}_{S_2},\underbrace{z_{2m+1},\ldots, z_{3m}}_{S_3}\}~.
\end{equation}
Subset $S_1$ will be used to estimate 
${\E}_x[\Delta f_1(h_1,h_2, x)]$ and ${\E}_x[\Delta f_2(h_1,h_2;  x)]$,
subset $S_2$ will be used to estimate $p$, and subset $S_3$ will be the one out of which the LLP estimator (\ref{e:baglevel_estimator_diff}) will be constructed.  More specifically, we replace in (\ref{e:baglevel_estimator_diff}) the average $p$ by $\widehat p_{S_2} = \frac{1}{m}\sum_{j=m+1}^{2m}\alpha_j$, that is, a simple empirical average of the label proportions observed on $S_2$. The average ${\E}_x[\Delta f_1(h_1,h_2, x)]$ will be estimated as
\begin{align*}
{\widehat \E}_{S_1}&[\Delta f_1(h_1,h_2)]\\ 
&= \widehat \mu_{\MoM}\Bigl(\{v_{1,1},\ldots,v_{1,k},\ldots,v_{m,1},\ldots,v_{m,k}\}\Bigl)~,
\end{align*}
where, for each $j \in [m]$ and $i \in [k]$,
$v_{j,i} = \Delta f_1(h_1,h_2,x_{j,i})$, with $x_{j,i}$ being the $i$-th feature vector in the $j$-th bag in $S_1$ (recall that all $x_{j,i}$ are available in the clear). Likewise, ${\E}_x[\Delta f_2(h_1,h_2; x)]$ is estimated through ${\widehat \E}_{S_1}[\Delta f_2(h_1,h_2)]$ where, this time, 
$v_{j,i} = \Delta f_2(h_1,h_2; x_{j,i})$.
For each bag $z_j \in S_3$ and pair $h_1,h_2 \in \hyps$ then define
\begin{align}
&\widetilde \Delta\ell_b(h_1,h_2;z_j)\label{e:baglevel_estimator_diff_estim} \\
&= {\widehat \E}_{S_1}[\Delta f_1(h_1,h_2)] + \widehat p_{S_2}\, {\widehat \E}_{S_1}[\Delta f_2(h_1,h_2)]\notag\\ 
&\, + \Bigl(\alpha_j-{\widehat p}_{S_2}\Bigl)\,\Bigl(\sum_{i=1}^k \Delta f_2(h_1,h_2; x_{j,i}) -  k\,{\widehat \E}_{S_1}[\Delta f_2(h_1,h_2)] \Bigl)\notag
\end{align}
Finally, we define a MoM estimator based on the bags of $S_3$. Specifically, consider the $m$ bags $z_j$ of $S_3 = \{z_{2m+1},\ldots,z_{3m}\}$, and set
\[
Q(h_1,h_2; S) = {\textrm {MoM}} \Bigl(\widetilde \Delta\ell_b(h_1,h_2;z_{2m+1}),\ldots, \widetilde\Delta\ell_b(h_1,h_2;z_{3m}) \Bigl)~.
\]
Note that, by the way we defined the median, $Q(h_1,h_2; S) = - Q(h_2,h_1; S)$.

Algorithm \ref{a:MoM}, based on the Median-of-Means tournament approach \cite{lugosi2016riskminimizationmedianofmeanstournaments} (see also Chapter 6 
in \cite{dl01}, and \cite{daskalakis2015learningpoissonbinomialdistributions}), iteratively eliminates losing functions from pairs in $\hyps$ when confidence is sufficient ($|Q(h_1,h_2; S)| > \beta/2$), outputting any remaining function in the pool. Theorem \ref{t:mainbinary} (proof in \Cref{sa:MoM}) provides its regret guarantee under a bounded second moment assumption.

\begin{algorithm}[H]
\caption{MoM tournament algorithm.\label{a:MoM}}
\begin{algorithmic}
\INPUT {\bf :} Dataset $S$ made up of $3m$ bags, each of size $k \geq 1$, desired regret bound $\beta > 0$
\vspace{-0.05in}
\begin{itemize}
\item Set $\Pool = \hyps$
\vspace{-0.05in}
\item {\bf for all} $h_1, h_2 \in \hyps$, $h_1 \neq h_2$ {\bf do}\\[1mm]
-- {\bf If} $Q(h_1,h_2; S) > \beta/2$\ \   ${\bf then}$\,\, $\Pool \rightarrow \Pool \setminus \{h_1\}$\\[1mm]
-- {\bf If} $Q(h_1,h_2;S) < - \beta/2$ \ \  ${\bf then}$\,\, $\Pool \rightarrow \Pool \setminus \{h_2\}$
%
%
\end{itemize}
\OUTPUT {\bf :}\, $\wh$ is any $h \in \Pool$
\end{algorithmic}
\end{algorithm}

\begin{theorem}\label{t:mainbinary}
Let S = $(x_1,y_1),\ldots, (x_n,y_n)$ be drawn i.i.d.\ from a distribution $\cD$ over $\fs \times \{0,1\}$. Let the loss $\ell(\cdot,y) = f_1(\cdot) + yf_2(\cdot)$ be such that the maximal second moments $\max_{h \in \hyps}{\E}_x\bigl[\bigl(f_1(h(x)) \bigl)^2 \bigl]$ 
and
$\max_{h \in \hyps}{\E}_x\bigl[\bigl(f_2(h(x)) \bigl)^2 \bigl]$ are {\em finite}. Let $\whs = \min_{h \in \hyps} \popl(h)$ be the best-in-class hypothesis. 
If $S$ is split into $3m$ random bags of size $k$, with $n= 3m\times k$ as in (\ref{e:split}), then for all $\beta > 0$, the hypothesis $\wh$ output by Algorithm \ref{a:MoM} satisfies \(
\regret(\wh) \leq \beta
\) 
with probability at least $1-\delta$, provided
\begin{align*}
n 
= 
O&\Biggl(\frac{k\,\sigma^2_{f_2}(\hyps)\,\log(|\hyps|/\delta)}{\beta^2} \\
&+
\frac{\sigma^2_{f_1}(\hyps)\,\log(|\hyps|/\delta) + {\E}^2[\hyps]\,\log(1/\delta)}{\beta^2}
+ k\,\log(|\hyps|/\delta)\Biggl)~,
\end{align*}
where the big-oh only hides absolute constants, and
\begin{align*}
&\sigma^2_{f_i}(\hyps) = \max_{h \in \hyps} \Var_x\Bigl(f_i(h(x)) - f_i(\whs(x))\Bigl),~ i = 1,2\\
&{\E}^2[\hyps] = \max_{h \in \hyps} {\E}^2_x\Bigl[f_2(h(x)) - f_2(\whs(x)) \Bigl]~.
\end{align*}
\end{theorem}
For completeness, we provide in Appendix \ref{sa:variable_bags} (Theorem \ref{t:mainbinary_variable_bagsize}) a more general statement that applies to bags of different sizes $k_1, k_2, \ldots $, where $k$ in Theorem \ref{t:mainbinary} is replaced by the average bag size $\widehat k$.
Similar generalization can be proven for the subsequent Theorem \ref{t:mainbinary_fast}, as well as for the algorithms operating in the multiclass settings.

Theorem \ref{t:mainbinary} can be extended to infinite $\hyps$ using \citet{pmlr-v267-hogsgaard25a}'s machinery. The tournament in Algorithm 1 is replaced by a simpler Empirical Risk Minimization, using MoM estimators instead of standard averages.
\begin{example}\label{ex:1}
We now apply Theorem \ref{t:mainbinary} to standard losses, like square loss  
and log loss, viewed here as canonical representatives of bounded (square) and unbounded (log) loss functions.
The square loss $\ell(h(x),y) = (h(x)-y)^2$ is obtained by $f_1(h) = h^2$ and $f_2(h) = 1-2h$, with $h \in [0,1]$. Both $f_1(\cdot)$ and $f_2(\cdot)$ are bounded; it is immediate to see that $\sigma^2_{f_1}(\hyps) \leq 1$, $\sigma^2_{f_2}(\hyps) \leq 4$, and ${\E}^2[\hyps] \leq 4$. 
%

The log loss $\ell(h(x),y) = -y\,\log h(x) - (1-y)\,\log\left(1-h(x)\right)$ is obtained by $f_1(h) = \log \frac{1}{1-h}$, and $f_2(h) = \log \frac{1-h}{h}$, for $h \in [0,1]$. Neither $f_1(\cdot)$ nor $f_2(\cdot)$ is bounded. Yet, in order to guarantee the finiteness of the variance parameters in Theorem \ref{t:mainbinary}, it suffices to assume the finiteness for all $h \in \hyps$ of the second moments ${\E}_x \bigl[\log^2 \frac{1}{h(x)}\bigl]$ and ${\E}_x \bigl[\log^2 \frac{1}{1-h(x)}\bigl]$. A paradigmatic example where this is the case is when the models $h(x)$ are represented as sigmoidal functions $h(x) = \sigma(w_h(x))$, where $\sigma(a) = \frac{e^a}{1+e^a}$, $a \in \R$, and $w_h\,:\,\fs \rightarrow \R$ is the so-called {\em{logit}} of model $h$. With this representation, one can see that the above finiteness conditions is implied by the finiteness of ${\E}_x[(w_h(x))^2]$. 
These are rather folklore facts. For completeness we give the details of these calculations in Appendix \ref{sa:folk}. 
\end{example}
The sample complexity from Theorem \ref{t:mainbinary} is of the form $n= \wt O(k/\beta^2)$. 
The bound applies to  the general non-realizable setting, and constitutes a so-called {\em slow} rate, as the dependence on the regret $\beta$ is inverse {\em quadratic}. A sample complexity guarantee with a similar scope as ours  is the one from \cite{10.5555/3666122.3666778}, whose bound is of the form $n = \widetilde O(k^2/\beta^2)$, that is, a slow rate with, in addition, a quadratic dependence on $k$, instead of linear (the authors work under general, but bounded, loss functions).

The more recent papers \cite{l+24,b+25} do contain fast rates for realizable settings, but they only apply to the square loss case. In particular, the guarantee in \cite{l+24} is of the form $n = \widetilde O(k^3/\beta)$, but it holds only in the more restricted realizability condition where $\popl(\hs) = 0$, and the models in $\hyps$ have binary output, hence disallowing noise in the labels. The realizable guarantee in \cite{b+25}, on the other hand, is a fast rate of the form $n = \widetilde O(k/\beta)$ which, as we already recalled, is best possible (up to log factors).

We next show that a similar fast rate can be achieved by Algorithm \ref{a:MoM} on a wider family of loss functions that are more relevant in practice than square loss.

Given loss function 
$
\ell(a,y) = f_1(a) + yf_2(a),
$
with $a \in [0,1]$ and $y \in\{0,1\}$, consider its extension $\bar \ell\,:\,[0,1] \times [0,1] \rightarrow \R$, defined as
\(
\bar \ell(a,b) = f_1(a) + b f_2(a),
\)
where we have simply turned the range of the second argument from $\{0,1\}$ to $[0,1]$. We say that $\ell$ is quadratically sandwiched w.r.t. $f_1$ and $f_2$ if there exist constants $C_1 > c_1 > 0$, and $C_2 > c_2 > 0$ such that, for all $a,b \in [0,1]$, and $i = 1,2$,
\begin{equation}\label{e:fastrate_condition}
\frac{C_i}{2}\Bigl(f_i(a) - f_i(b)\Bigl)^2 \geq \bar \ell(a,b) - \bar \ell(b,b) \geq \frac{c_i}{2}\Bigl(f_i(a) - f_i(b)\Bigl)^2~.
\end{equation}
As explained below, in order for fast rates in 
realizable settings to hold, only the constants $c_1$ and $c_2$ that appear in the lower bound will matter.

\begin{example}
We illustrate (\ref{e:fastrate_condition}) through some examples. 
For square loss, recalling Example \ref{ex:1}, condition (\ref{e:fastrate_condition}) is clearly verified for $f_2$, since $(f_2(a) - f_2(b))^2 = 4(a-b)^2 = 4(\bar \ell(a,b) - \bar \ell(b,b))$, resulting in $C_2 = c_2 = 2$. As for $f_1$, we have $c_1 = 1/2$, but in order to get a finite $C_1$, we have to require that either $a$ or $b$ is bounded away from $0$. E.g., if $a+b\geq \epsilon > 0$ then $C_1 = \frac{2}{\epsilon^2}$. Yet, the constant $C_1$ will be immaterial in realizable settings -- see below. 
%

For log loss, one can easily see that $\bar \ell(a,b) - \bar \ell(b,b) = b\log \frac{b}{a} + (1-b)\log\frac{1-b}{1-a} = {\textrm {KL}}(b,a)$, the KL-divergence between two Bernoulli random variables with biases $b$ and $a$. 
As for condition (\ref{e:fastrate_condition}), consider again the sigmoidal representation in Example \ref{ex:1}. This condition boils down to the strong convexity ($c_2$ side) and strong smoothness ($C_2$ side) of the function $w_h \rightarrow \log(1+e^{w_h})$. This function is known to be strongly smooth, but the strong convexity requires the logits $w_h$ to be bounded. Boundedness is also needed in order to enforce (\ref{e:fastrate_condition}) on $C_1$ and $c_1$. Appendix \ref{sa:folk2} contains detailed calculations for these conditions.
\end{example}

The following theorem specifies the properties of 
Algorithm \ref{a:MoM} for quadratically sandwiched losses. 
The proof is in \Cref{sa:MoM_fast}.

\begin{theorem}\label{t:mainbinary_fast}
Under the same assumptions and notation as in Theorem \ref{t:mainbinary}, let the loss $\ell$ satisfy the sandwich condition (\ref{e:fastrate_condition}). Then for all $\beta > 0$, the hypothesis $\wh$ output by Algorithm \ref{a:MoM} satisfies \(
\regret(\wh) \leq \beta
\) 
with probability at least $1-\delta$, provided
\begin{align*}
n 
= 
O\Biggl(&\Biggl(\frac{\frac{k}{c_2}\,\beta  + k\left(\frac{C_2}{c_2}+1\right)\gamma_{f_2}(\whs,\hs)}{\beta^2}\\
&+
\frac{\frac{1}{c_1}\,\beta + \left(\frac{C_1}{c_1}+1\right)\gamma_{f_1}(\whs,\hs)}{\beta^2}+k\Biggl)\,\,\log \frac{|\hyps|}{\delta} \Biggl)~,
\end{align*}
where\,
\(
\gamma_{f_i}(h,\whs) = {\E}_x\Bigl[(f_i(h(x))-f_i(\whs(x)))^2\Bigl]±,\, i = 1,2\,,
\)
and the big-oh notation only hides absolute constants.
In the realizable case ($\whs = \hs$) the bound reduces to 
\[
n 
= 
O\left(\left(\left(\frac{k}{c_2}
+ \frac{1}{c_1} \right)\,\frac{1}{\beta}+  k\right)\,\log \frac{|\hyps|}{\delta} \right)~.
\]
\end{theorem}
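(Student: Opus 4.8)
The plan is to follow the same Median-of-Means tournament analysis that underlies Theorem~\ref{t:mainbinary}, but to replace the crude ``second-moment'' control of the estimator $\widetilde\Delta\ell_b(h_1,h_2;z)$ by a variance bound that is proportional to the \emph{excess risk} of the competing hypotheses, which is exactly what the sandwich condition (\ref{e:fastrate_condition}) buys us. Concretely, I would first record the standard MoM tournament guarantee: if for a pair $h_1,h_2$ the true loss difference $\Delta\popl(h_1,h_2) = \popl(h_1)-\popl(h_2)$ satisfies $|\Delta\popl(h_1,h_2)| > \text{(something)}$ and the per-bag variance of $\widetilde\Delta\ell_b$ is at most $V(h_1,h_2)$, then with $r = \lceil 8\log(|\hyps|^2/\delta)\rceil$ groups and $m$ bags in $S_3$, the sign of $Q(h_1,h_2;S)$ is correct with probability $1-\delta/|\hyps|^2$ provided $m \gtrsim V(h_1,h_2)\log(|\hyps|/\delta)/\Delta\popl(h_1,h_2)^2$ — together with analogous conditions ensuring the $S_1$ (MoM) and $S_2$ (empirical mean) plug-in estimates of ${\E}_x[\Delta f_i]$ and $p$ are accurate enough. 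This is the bookkeeping already carried out in the proof of Theorem~\ref{t:mainbinary}; the novelty is entirely in how $V$ and the thresholds depend on the excess risk.

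The key step is the variance computation. From the analysis of Lemma~\ref{l:begin} (applied to $\Delta f_1,\Delta f_2$), the variance of $\widetilde\Delta\ell_b(h_1,h_2;z)$ — after absorbing the lower-order plug-in errors — is $O\!\left({\E}_x[(\Delta f_1(h_1,h_2,x))^2] + {\E}_x[(\Delta f_2(h_1,h_2,x))^2]\,(1 + k\,\Var(\alpha - p))\right)$, and since $\Var(\alpha) = \frac{1}{k}\Var(y) \le \frac{1}{k}$ in the binary case, the $k$-dependent term contributes an $O(1)$ factor, leaving the variance $O\!\left(\gamma_{f_1}(h_1,h_2) + \gamma_{f_2}(h_1,h_2)\right)$ with $\gamma_{f_i}(h_1,h_2) = {\E}_x[(f_i(h_1(x))-f_i(h_2(x)))^2]$. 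Now here is where the sandwich condition enters: taking $b = \whs(x)$ and $a = h(x)$ in (\ref{e:fastrate_condition}) and integrating over $x$, I get ${\E}_x[\bar\ell(h(x),\whs(x)) - \bar\ell(\whs(x),\whs(x))] \ge \frac{c_i}{2}\gamma_{f_i}(h,\whs)$; and the left-hand side is precisely $\popl(h) - \popl(\whs) = \regret(h)$ because $\bar\ell(a,b) = f_1(a)+bf_2(a)$ agrees with ${\E}_{y|x}[\ell(a,y)]$ when $b = \eta(x)$ — but care is needed here since the relevant $b$ is $\whs(x)$, not $\eta(x)$. I would handle this by applying the sandwich inequality with $b=\eta(x)$ (so that the middle term integrates to exactly $\regret(h)$ by the tower rule, using optimality of $\hs$ only in the realizable case and otherwise arguing through the best-in-class $\whs$ directly via the population-level identity), concluding $\gamma_{f_i}(h,\whs) \lesssim \frac{1}{c_i}\regret(h)$ — this, combined with the triangle-type inequality $\gamma_{f_i}(h_1,h_2) \le 2\gamma_{f_i}(h_1,\whs) + 2\gamma_{f_i}(h_2,\whs)$, gives $V(h_1,h_2) \lesssim \frac{1}{c_1}(\regret(h_1)+\regret(h_2)) + \frac{1}{c_2}(\regret(h_1)+\regret(h_2))$.

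With this in hand the tournament argument becomes a peeling/Bernstein-style computation: I need $m \gtrsim \frac{V(h_1,h_2)\log(|\hyps|/\delta)}{\Delta\popl(h_1,h_2)^2}$ for every pair where $h_1$ should beat $h_2$, i.e. where $\regret(h_1)$ is large. Since $\Delta\popl(h_1,h_2) \ge \regret(h_1) - \regret(h_2)$ and we only need to eliminate hypotheses with $\regret(h) > \beta$, the worst case is $\regret(h_1) \approx \beta$ (or a constant multiple), $\regret(h_2)$ small, giving $V \lesssim (\tfrac{1}{c_1}+\tfrac{1}{c_2})\beta$ against a gap of order $\beta$, hence $m \gtrsim \frac{(1/c_1 + k/c_2)\log(|\hyps|/\delta)}{\beta}$; the factor $k$ on the $c_2$ side comes from tracking the $k\,\Var(\alpha-p)$ term more carefully in the realizable analysis where $\Var(y)$ near the Bayes point need not be $O(1/k)$ uniformly — actually it is the centering term $\sum_i\Delta f_2(h_1,h_2;x_i) - k{\E}_x[\Delta f_2]$ whose variance is $k\,\Var_x(\Delta f_2) = k\,\gamma_{f_2}$-ish, multiplied by $\Var(\alpha-p)\le 1/k$, so one does recover the clean bound, but when $\gamma_{f_2}$ is itself of order $\beta$ the product is $O(\beta)$ and the $k$ reappears only through the additive $k\log(|\hyps|/\delta)$ term needed for $|S_3|=mk \ge r$ and for the $S_1$ MoM estimates. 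For the non-realizable bound one repeats this with $\whs \ne \hs$, so the sandwich inequality centered at $\eta(x)$ produces an extra ``bias'' term $\gamma_{f_i}(\whs,\hs)$ that does not vanish, and the $C_i/c_i$ ratios surface because the \emph{upper} sandwich bound is now needed to control $\Delta\popl$ from below in terms of $\gamma$. Assembling the three sample-size requirements (from $S_1$, $S_2$, $S_3$) and using $n = 3mk$ yields the stated bound; setting $\whs=\hs$ kills $\gamma_{f_i}(\whs,\hs)$ and gives the clean realizable rate. \textbf{The main obstacle} I anticipate is the non-realizable localization: making the peeling argument work when $\Delta\popl(h_1,h_2)$ can be small even though both hypotheses are far from $\whs$ (the variance is then controlled by the larger of the two excess risks while the gap is not), which is the standard subtlety in fast-rate tournament proofs and is exactly why the $\gamma_{f_i}(\whs,\hs)$ and $C_i/c_i$ terms must appear — I would resolve it by the usual case split on whether $\max(\regret(h_1),\regret(h_2))$ exceeds a constant times $\beta$ plus the irreducible $\gamma(\whs,\hs)$ scale.
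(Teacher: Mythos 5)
Your proposal follows essentially the same route as the paper's own proof: the heart of the matter is exactly the paper's Lemma~\ref{l:var_regret} --- apply the lower sandwich bound of (\ref{e:fastrate_condition}) at the pair $(h,\hs)$ and the upper one at $(\whs,\hs)$ (this is where $C_i/c_i$ and the bias term $\gamma_{f_i}(\whs,\hs)$ enter), deduce $\gamma_{f_i}(h,\whs) \lesssim \tfrac{1}{c_i}\regret(h) + \bigl(\tfrac{C_i}{c_i}+1\bigr)\gamma_{f_i}(\whs,\hs)$, and then substitute this into the sample-complexity conditions (\ref{e:sample_complexity_prebound_1})--(\ref{e:sample_complexity_prebound_2}) already established for Theorem~\ref{t:mainbinary}, replacing $\regret(h)$ by $\beta/4$ via monotonicity of $r \mapsto (r+A)/r^2$; your triangle-inequality route to pass from $\gamma_{f_i}(h,\hs)$ to $\gamma_{f_i}(h,\whs)$ gives the same form as the paper's Cauchy--Schwarz/quadratic-inequality manipulation. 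Two bookkeeping slips are worth correcting. First, the conditional (per-bag) variance of $\widetilde\Delta\ell_b$ involves only the $f_2$-difference; the $f_1$-difference enters solely through the $S_1$ plug-in error, which is estimated from $mk$ individual feature vectors (available in the clear), and this is precisely why the $\tfrac{1}{c_1}$ term in the stated bound carries no factor of $k$ --- folding $\gamma_{f_1}(h_1,h_2)$ into the per-bag variance, as your second paragraph does, would only yield the weaker $k/c_1$ term (similarly, the factor $k$ on the $c_2$ side comes simply from converting the requirement on the number of bags $m$ into $n=3mk$, not from the additive $k\log(|\hyps|/\delta)$ term). Second, the ``main obstacle'' you anticipate --- pairs $h_1,h_2$ both far from $\whs$ with a small mutual gap --- does not arise in this tournament: the analysis (Proposition~\ref{p:MoMknown}) only needs concentration of $Q(h,\whs;S)$ for comparisons against the best-in-class, since $\whs$ itself eliminates every $h$ with $\regret(h)\ge\beta$ and is never eliminated, so no peeling or case split on $\max(\regret(h_1),\regret(h_2))$ is required. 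Neither issue undermines the approach; both are fixed by following the Theorem~\ref{t:mainbinary} decomposition verbatim.
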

Thus in the realizable case the bound has the form $n = \widetilde O(k/\beta)$ we expect for fast rates of convergence. We have a specific dependence on the curvature properties of the loss function, as quantified by (\ref{e:fastrate_condition}) via the lower bound constants $c_1$ and $c_2$. Note that the upper constants $C_1$ and $C_2$ do not play a role in the realizable bound.
More lax conditions can be formulated in order to achieve fast rates, as available in the standard statistical learning literature. We could adapt such conditions to LLP as well. The reader is referred, e.g., to \cite{gm20}, and references therein.

\subsection{The Multi-Class Case (Sketch)}
Any loss function $\ell(h(x),y)$ for multi-class labels $y \in \{0,\ldots,c-1\}$ can always be written as
\(
\ell(h(x),y) = \sum_{r=0}^{c-1} \{y = r\}\,\ell(h(x),r)~.
\)
The bag-level estimator for the {\em full histogram} multi-class scenario is a generalization of the one for binary labels, where the role of the two functions $f_1(h(x))$ and $f_2(h(x))$ is now played by the $c$ loss components $\ell(h(x),r)$, $r= 0,\ldots,c-1$. The LLP algorithm is the very same as Algorithm \ref{a:MoM}, but applied to the alluded bag-level estimator for full histogram multi-class--- see Appendix \ref{sa:multiclass} for details.

On the other hand, we are able to handle the {\em total} multi-class case only when $\ell$ has the affine form $\ell(h(x),y) = f_1(h(x)) + y f_2(h(x))$, where now, $y \in \{0,\ldots,c-1\}$. The loss admits this representation if it can be interpreted as the negative log-likelihood of a distribution in the (single-parameter canonical form) exponential family for which $y$ is a sufficient statistic. Notable examples are the Poisson Log Loss $\ell(h(x),y) = h(x) - y \log h(x)$ (negative log-likelihood of Poisson distribution) and the square loss $\ell(h(x),y) = (y - h(x))^2$ (negative log-likelihood of a Gaussian distribution with unit variance; note that the $y^2$ term can be disregarded here, as it is independent of $h(x)$).
The multi-class log loss/cross-entropy does not admit this representation.

From Appendix \ref{sa:multiclass}, one can see that the regret guarantee we obtain for our algorithm in the full histogram case (Theorem \ref{t:l:main_full histogram_multiclass}) is of the form
\[
n = O \Bigl(
\frac{k\,\Delta \ell^2(\hyps)\,\log(|\hyps|/\delta) }{\beta^2}
\Bigl)~,
\]
where
\(
\Delta \ell^2(\hyps) 
= 
\max_{h \in \hyps} {\E}_x\bigl[\max_r\bigl( \ell(h(x),r)\bigl)^2\bigl]
\)
is the factor hiding the dependence on the number of classes $c$. This dependence can be removed completely if the loss is bounded, and turned into a mild $\log c$ when the loss is unbounded but has light tails. 
On the contrary, the bounds for the total multi-class case mimic rather closely those for binary labels (Theorem \ref{t:mainbinary} and Theorem \ref{t:mainbinary_fast}) but with an extra $c^2$ factor, due to the inflated variance terms. Details are again contained in Appendix \ref{sa:multiclass}.

For practical reasons, in the experiments that follow we depart from the MoM tournament scheme, and rather test our bag-level estimators combined with SGD-like algorithms operating with log loss.

\section{Experiments}\label{s:experiments}
We compare our general unbiased proportion matching loss against standard baselines, where the goal is to minimize the log loss, which is more aligned with our goal of coping with conversion prediction problems in ad domains.
We evaluate against EasyLLP \citep{10.5555/3666122.3666778} and a standard proportion matching method (e.g., \cite{yu+13}), which is shown to be a strong baseline in \cite{10.5555/3666122.3666778}.
Dataset preparation and experimental setup is heavily inspired by \cite{10.5555/3666122.3666778}.
We conduct two kinds of experiments: (i) batch experiments, making many training passes over a dataset aimed at achieving low instance-level log loss on held-out test data, and (ii) an online experiment on the Criteo Display Advertising Challenge dataset \citep{criteo-display-ad-challenge}, where the goal is to make accurate predictions on a stream of data as it arrives.

\compactpara{Datasets and Models.}
We conduct batch learning experiments on the MNIST \citep{lecun2010mnist}, CIFAR-10 \citep{Krizhevsky09Cifar}, Higgs \citep{Baldi2014Higgs}, and Adult \citep{UCIAdult} datasets.
For MNIST the training labels are binarized based on whether each digit is even or odd, while for CIFAR-10 we consider two binarization strategies: Animal-vs-Machine and Cat-vs-Rest.
We train convolutional neural networks for MNIST and CIFAR-10 and fully-connected networks for Higgs and Adult.
Our online experiment is conducted on the Criteo Display Advertising Challenge \citep{criteo-display-ad-challenge}, which is a click prediction task where each training example contains information about an ad impression and the label corresponds to whether the ad was clicked or not.
For this dataset, we use a deep embedding network, where each feature value is associated with a learned embedding vector and the model's prediction is the output of several fully-connected layers applied to the concatenated feature embeddings.
Further details are given in \Cref{sa:exp}.

\compactpara{LLP Losses.}
\textsc{GeneralUPM} is our unbiased proportion matching loss, \textsc{EasyLLP} is the LLP loss of \citet{10.5555/3666122.3666778}, and $\textsc{PM}$ is the standard proportion matching method.
In the batch setup, we instantiate each LLP loss with binary cross-entropy with smoothed labels; in the online setup no label smoothing is applied.
Note that label smoothing only changes the loss definition,  the event-level labels are not modified.
Details of the losses and implementation of \textsc{GeneralUPM} and \textsc{EasyLLP} are in \Cref{sa:exp}.

\compactpara{Batch Experimental Setup.}
LLP training data is constructed 
by shuffling and grouping consecutive examples into non-overlapping bags of the desired size.
If the number of training examples $n$ is not divisible by 
$k$, the leftover examples are discarded.
We train models on each dataset using the Adam optimizer \citep{kingma2015adam} to minimize each LLP loss.
We train for $E$ epochs (passes through the training data) where on each epoch the order of the LLP bags is shuffled.
Note that we do \emph{not} change the examples present in each bag per epoch, since that would gradually leak additional label information.
We use bag sizes $k = 2^i$ for $i = 0,\ldots, 11$ and batches that contain $2^{12} = 4096$ training examples regardless of $k$.
Learning rates are tuned from $10^{-7}$ to $10^{-1}$ (16 log-spaced values); the label marginal $p$ is estimated as the average label proportion over the entire training dataset. For each dataset, LLP loss, bag size, and learning rate, experiments are repeated 10 times with randomized data shuffling and model initialization. After each epoch, the average instance-level log loss is measured on test data. The lowest average test log loss achieved is then reported for each dataset, LLP loss, and bag size.

\compactpara{Online Experimental Setup.}
This is similar to the batch setup with the following modifications.
We read the data in consecutive chunks of $2^{16}$ examples.
On each chunk, the model is evaluated before the parameters are updated.
The model updates are performed similarly to the batch setting: the chunk examples are shuffled and partitioned into consecutive bags.
This simulates ad serving where predictions precede aggregate label updates of the model weights. The online experiment is repeated 5 times with different model initialization and data shuffling seeds.
For each bag size, we tune the learning rate over the same grid to optimize the average log loss over all chunks.
Finally, the label marginal $p$ is estimated for each chunk as the average label proportion for bags in that chunk.
This is required to track the time-varying click rate in the Criteo dataset.

\begin{figure}[t!]
    \centering
    \includegraphics[width=0.328\linewidth]{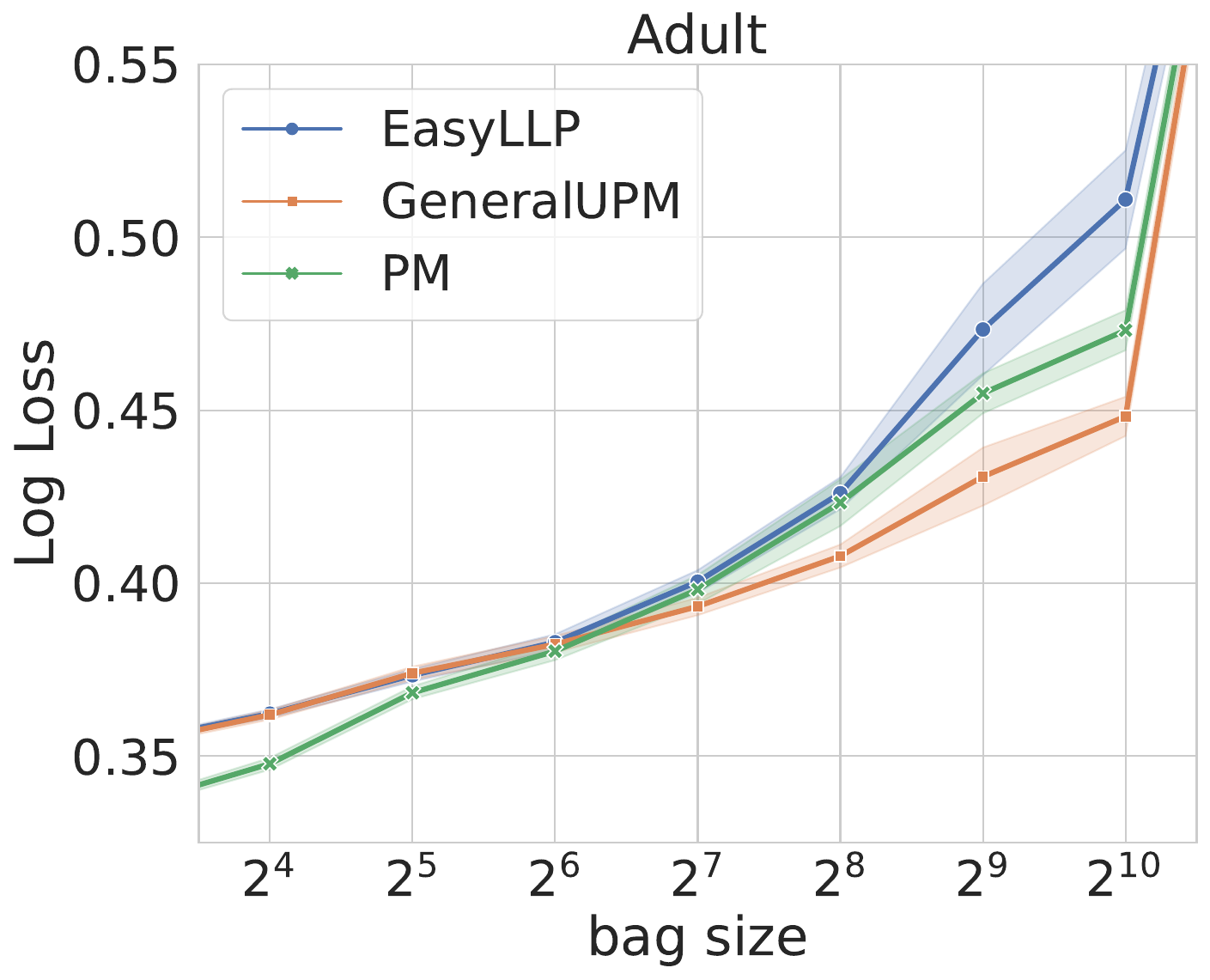}
    \includegraphics[width=0.333\linewidth]{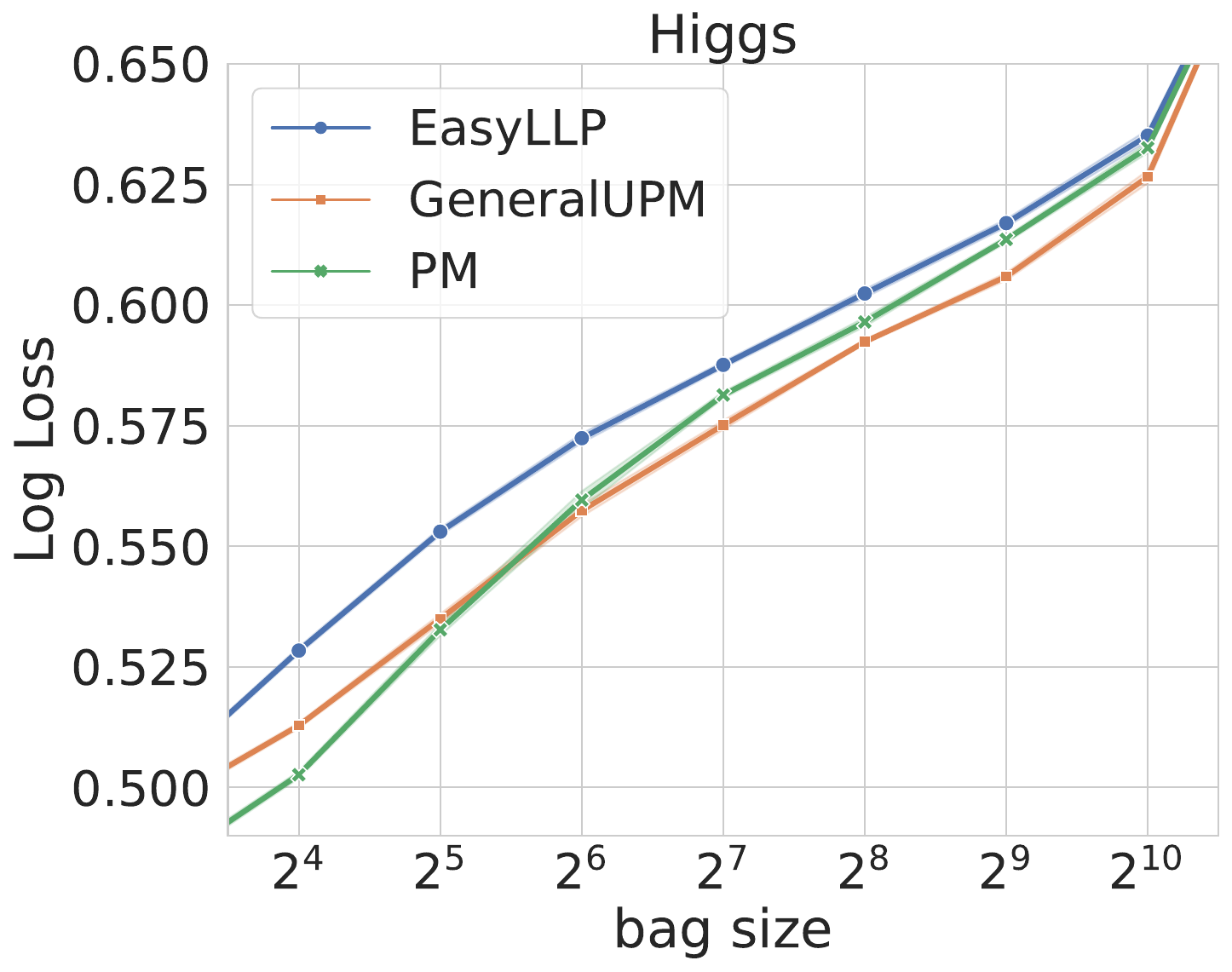}
    \includegraphics[width=0.323\linewidth]{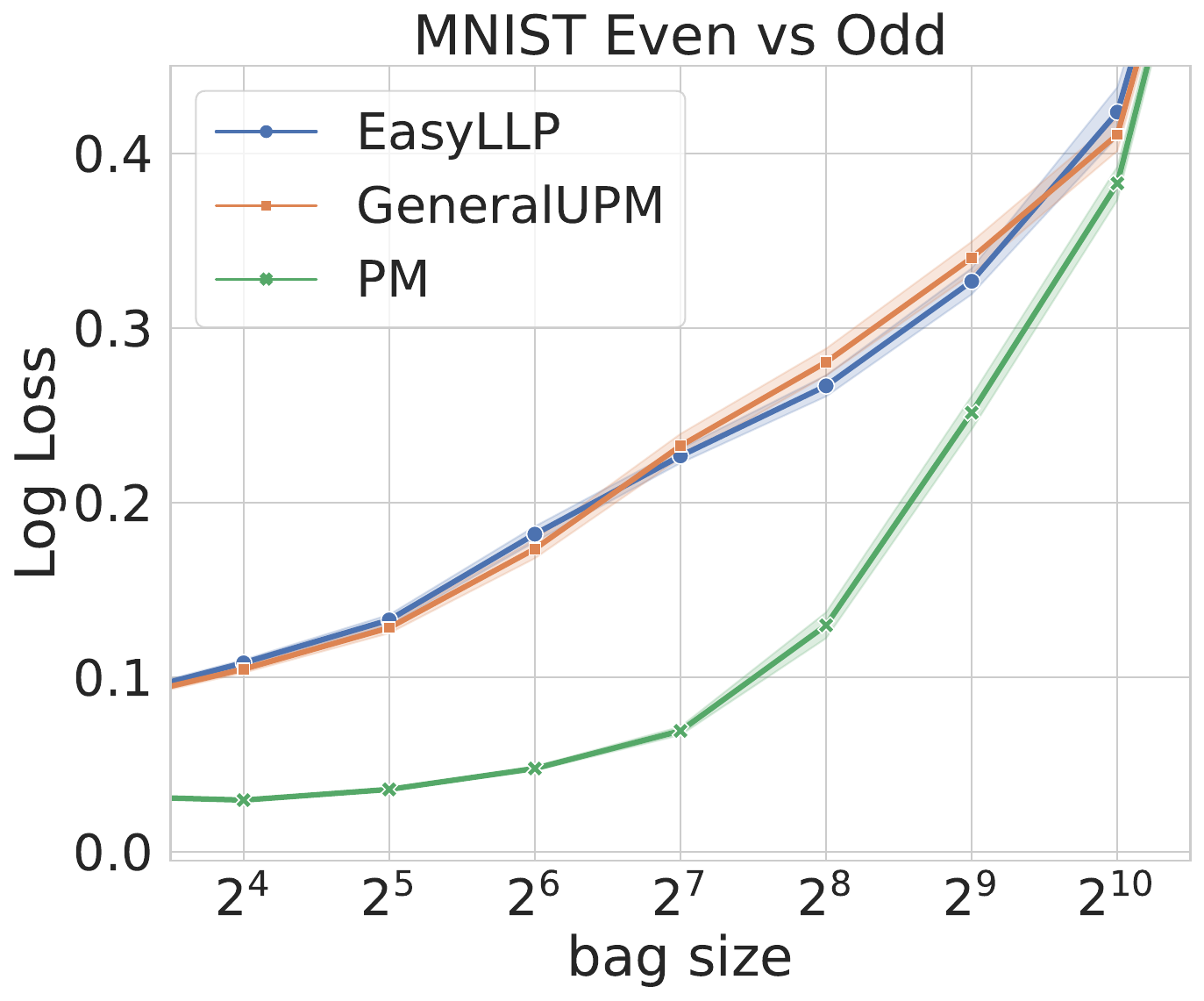}
    \includegraphics[width=0.325\linewidth]{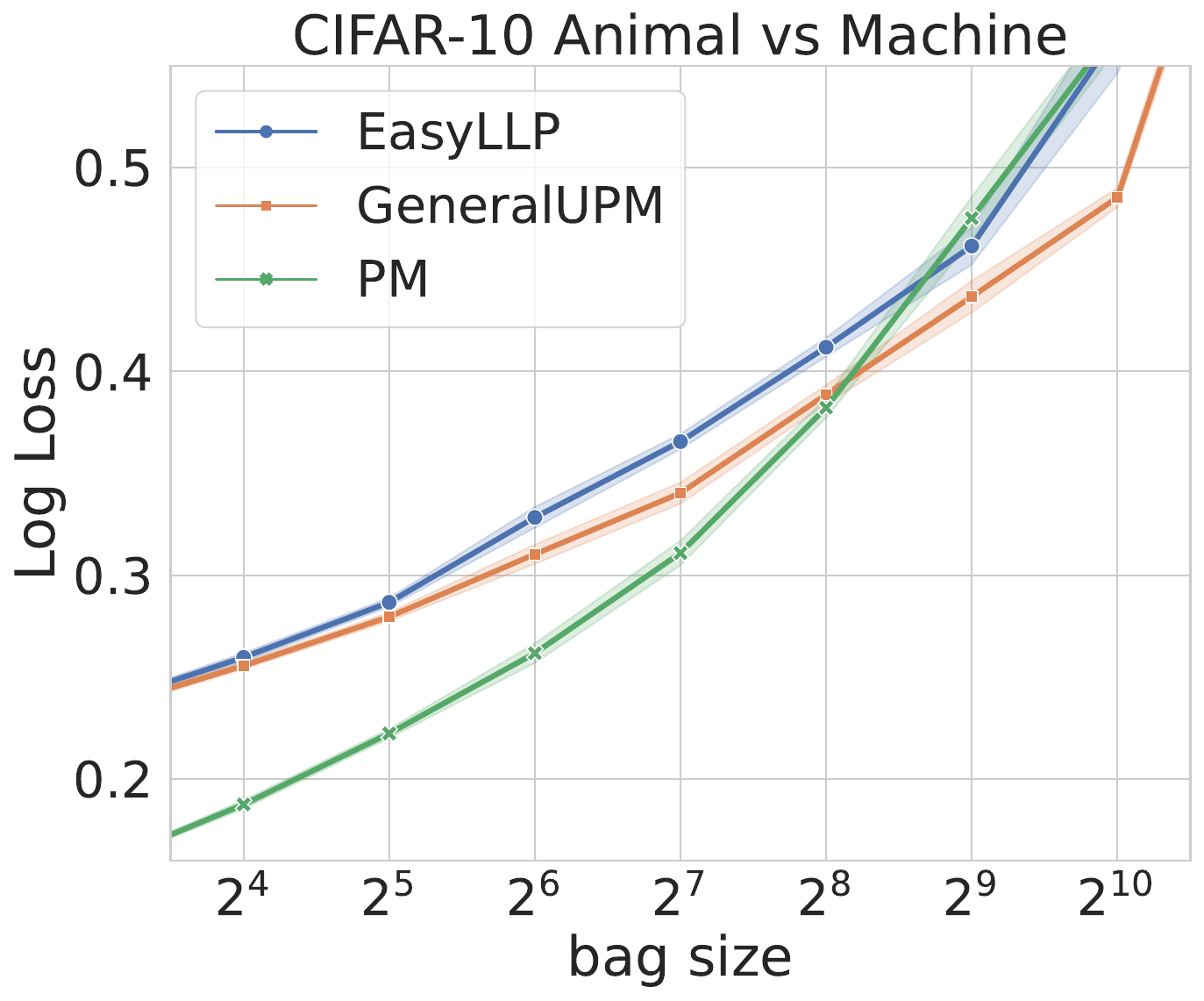}
    \includegraphics[width=0.33\linewidth]{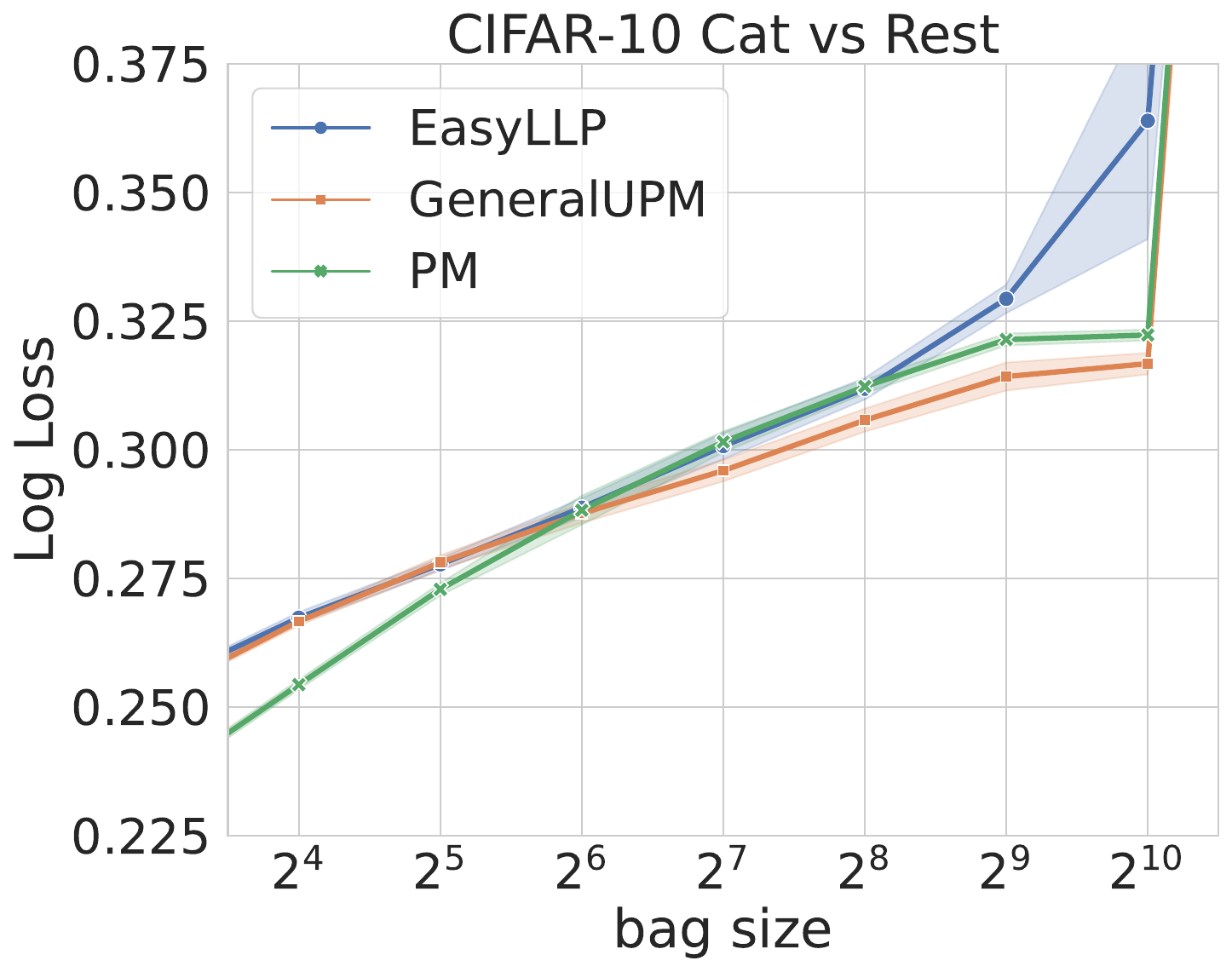}
    \caption{Minimum average test log loss achieved for each $k$, optimized across various learning rates and stopping epochs. Results are averaged over 10 repetitions, with error bars representing one standard error.
    }
    \label{fig:batchResults}
\end{figure}

\begin{figure}
\includegraphics[width=0.49\linewidth]{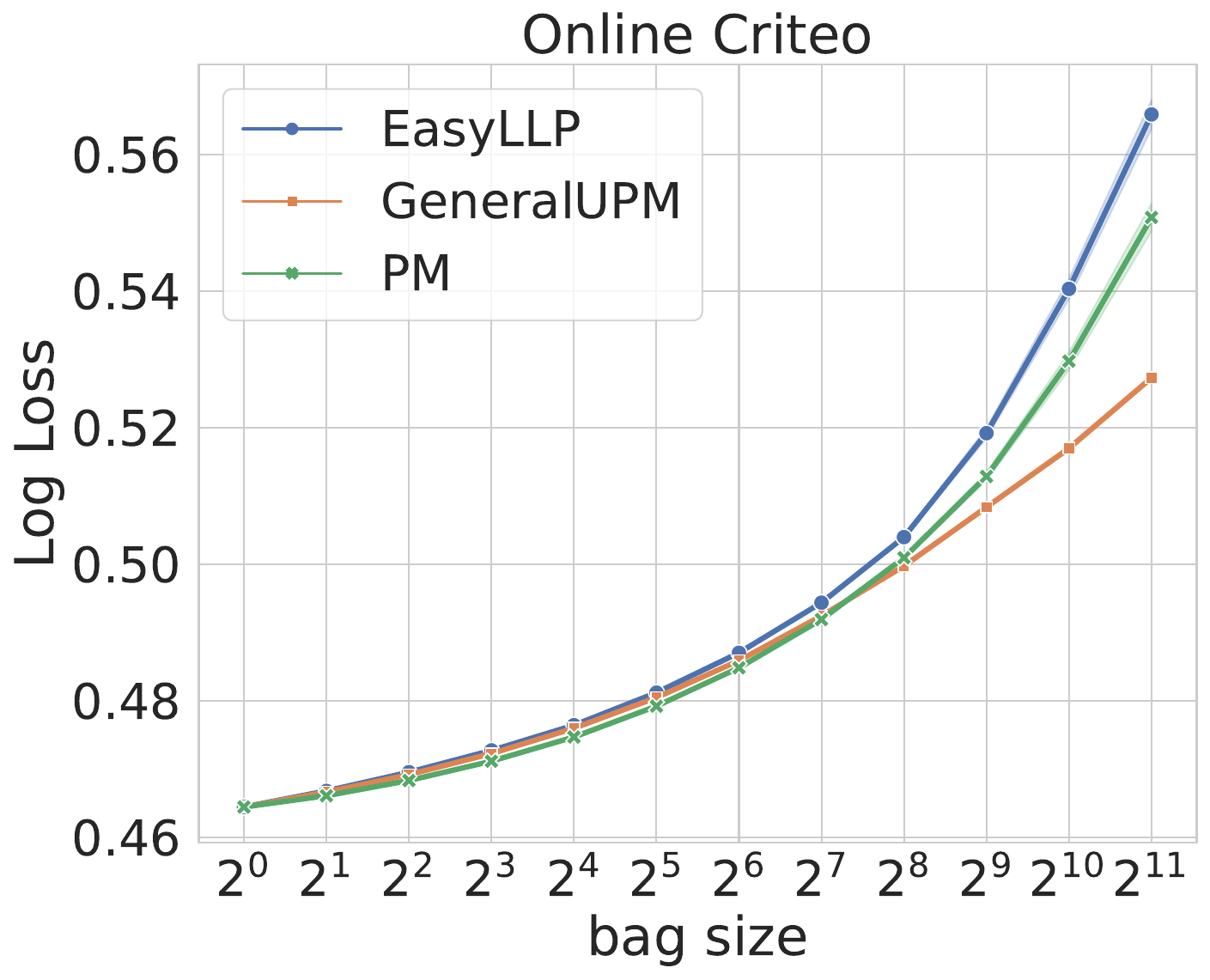}
    \includegraphics[width=0.49\linewidth]{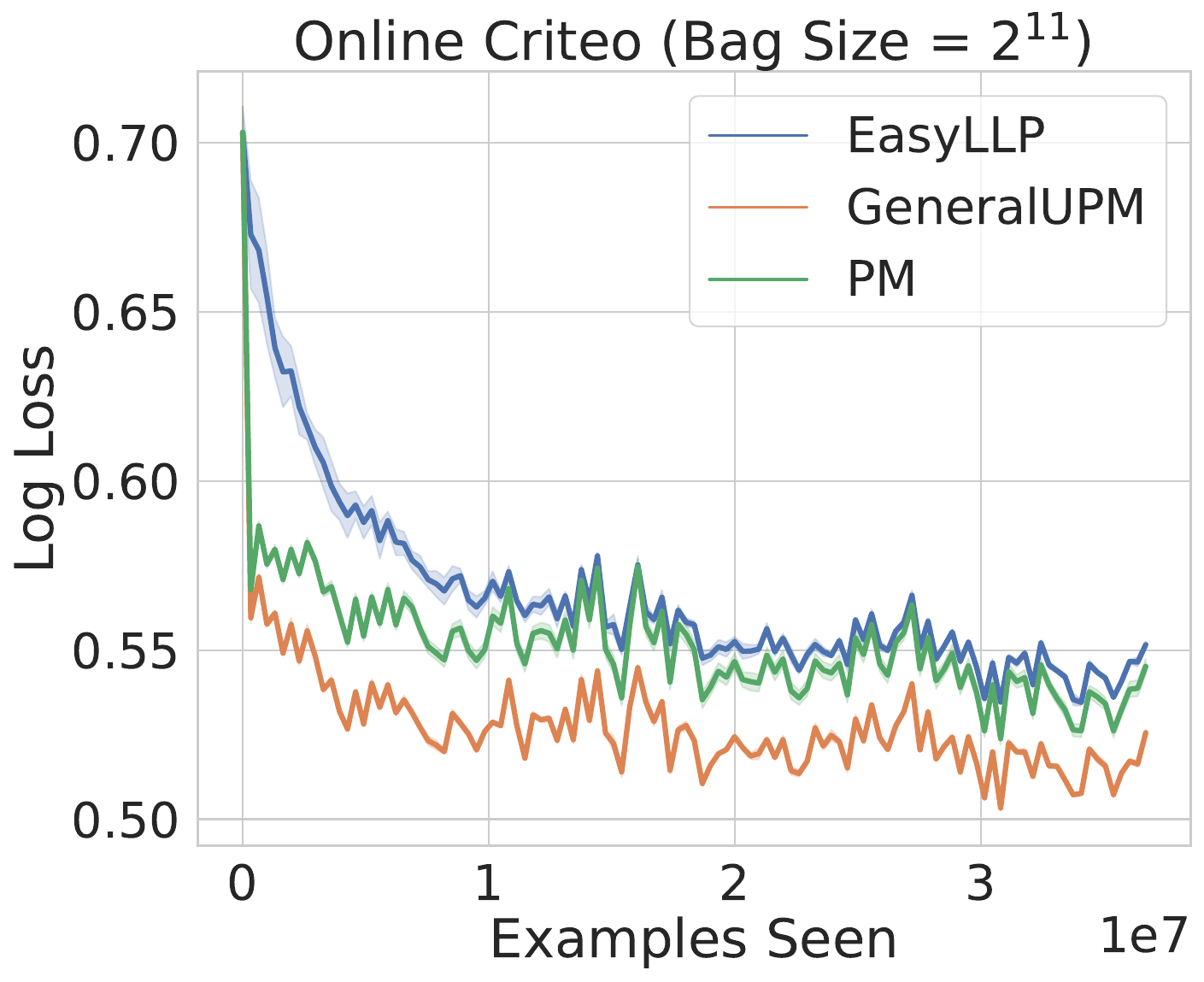}
\vspace{-0mm}
\caption{{\bf Left:} Average chunk log loss for each LLP loss and bag size. {\bf Right:} the per-chunk log losses during online training for bag size $2^{11}$. Error bars indicate one standard error in the mean across 5 repetitions. \label{fig:onlineResults}}
\end{figure}

\compactpara{Results.}
Fig. \ref{fig:batchResults} shows the test log loss achieved by each LLP loss in the batch experimental setup.
On all batch datasets and LLP losses, the performance with bag size $2^{11}$ is trivial, so we omit this bag size.
We also omit bag sizes smaller than $2^4$ in order to focus on the more challenging settings.

We note that for smaller $k$, \textsc{PM} consistently achieves the lowest loss. We conjecture that this is due to the fact that in realizable scenarios \textsc{PM} operating with a proper loss function (the log loss is indeed proper) can be shown to approximate the Bayes optimal odds $\PP(y=1|x)$
(see, e.g., Theorem 3.2 and Corollary 3.3 in \cite{10.5555/3666122.3666778}). This may  explain, in particular, the very good performance of \textsc{PM} on MNIST Even-vs-Odd, given how easy this dataset is.
As we depart from realizable scenarios, the situation becomes a bit murky. As $k$ increases, the bias of \textsc{PM} increases as well, causing a degradation of its relative performance against \textsc{GeneralUPM} (which remains unbiased in all cases). Except on MNIST Even-vs-Odd, we consistently see that for larger $k$ \textsc{GeneralUPM} performs better than the baselines.
Figure \ref{fig:onlineResults} shows the average log loss on the Criteo dataset where we see the same trend: \textsc{PM} performs well at small bag sizes $k$, but \textsc{GeneralUPM} is much better at larger $k$.
We also include a plot of the per-chunk log loss for bag size $2^{11}$, where we see that \textsc{GeneralUPM} has better performance over the entire duration. See also Appendix \ref{sa:exp} for corresponding results based on AUC and plots including all bag sizes.

\section{Conclusions and Limitations}
Our LLP method applies to virtually any loss function. As in past investigations, we are centered around designing (almost) unbiased estimates of the instance level population loss through bag-level information. However, unlike past investigations, we have been able to reduce the variance of our estimators to a quantity that is {\em independent} of $k$. On one hand, this directly translates to improved/optimal sample size results in both realizable and non-realizable settings, extending to a broader array of losses than recent square-loss studies. On the other hand, this gives our method a desired robustness to varying bag sizes, as the regret guarantees
only depend on the number $m$ of available bags, irrespective of their sizes.
Experiments on binary datasets demonstrate improved performance over baselines in large $k$ regimes, a crucial aspect for modern applications of LLP to online advertising. 

From a theoretical standpoint, the tightness of our multi-class bounds regarding $c$ remains unknown. Additionally, we currently do not have a fast rate analysis for the full histogram multi-class case. Moreover, the necessity of the affine loss form we assumed for the total multi-class setting, which currently limits our technique's applicability, is unclear.
Finally, the paper is limited in its experimental investigation, as it currently restricts to binary classification. 
%

\newpage

\section*{Impact Statement}

``This paper presents work whose goal is to advance the field of Machine
Learning. There are many potential societal consequences of our work, none
which we feel must be specifically highlighted here.''

\bibliographystyle{icml2026}


\appendix

\onecolumn

\section{Related Work}\label{sa:related_work}
The problem of Learning with Label Proportions (LLP) has historically been motivated by real-world applications where labels are available not for each data point, but only in the form of aggregate labels for bags of data. This may occur due to privacy and legal restrictions (e.g., \cite{rue10,wibb11}), cost of supervision (e.g., \cite{chr04}), lack of resolution in labeling instruments (e.g., \cite{dery+18}), etc. 

LLP garnered initial attention in the research community with early contributions from \cite{dfk05,mco07,QuadriantoSCL08, rue10,10.1007/978-3-642-23808-6_23,yu+13,pat+14}.
In their work, \citet{dfk05} introduced a hierarchical model that generates labels according to the provided label proportions, and proposed an inference method based on Markov Chain Monte Carlo which, however, did not scale effectively to larger datasets. \citet{mco07} emphasized the possibility of adapting conventional supervised learning techniques, such as Support Vector Machines (SVM) and $k$-Nearest Neighbors, to the LLP setting by reformulating their objective functions. However, their paper did not include any experimental results on classification problems.
\citet{QuadriantoSCL08,pat+14} developed algorithms for learning an exponential generative model. Nevertheless, their approaches relied on significant modeling assumptions, notably conditional exponential models, which are often inadequate for contemporary machine learning architectures, particularly Deep Neural Networks (DNNs).
\citet{rue10} and \citet{yu+13} both explored the adaptation of SVMs for LLP. Notably, \citet{yu+13} presented an SVM regression algorithm that optimizes the SVM loss by aligning with the given label proportions. This general strategy is referred to as ``proportion matching" in \cite{10.5555/3666122.3666778,b+25}. However, their method was constrained to linear models within a specific feature space. Similar limitations were present in other SVM-based studies on LLP, such as \cite{qi+17,shi+19}. 

The natural appeal of the proportion matching concept led to its extension to other types of classifiers. For instance, \citet{lt15} applied this idea to Convolutional Neural Networks using a generative model where the maximum likelihood estimator was computed via the Expectation Maximization algorithm. Regrettably, this approach proved to be computationally prohibitive even for moderately sized DNN architectures.
Empirical approaches based on DNNs for large-scale and multi-class data \cite{kdds15,lwqts19,dulac2019deep,n+22} have also been developed. 

Delving into the more theoretically-grounded explorations of LLP, a notable body of work includes \cite{NEURIPS2021_33b3214d,52071,NEURIPS2023_d1d3cdc9,lu+19,lu+21,ScottZ20,z+22}, and the already cited \cite{10.5555/3666122.3666778,l+24,b+25}.

The theoretical framework presented by \citet{NEURIPS2021_33b3214d,52071,NEURIPS2023_d1d3cdc9} is predominantly confined to linear-threshold functions, with some investigations further taking advantage of overlapping bags. In stark contrast, our approach aligns with \cite{10.5555/3666122.3666778,l+24,b+25}, where we operate within the more conventional setting of non-overlapping and i.i.d. bags, but cope with a broader and more versatile spectrum of model classes.

Adjacent to LLP is the problem of {\em learning from multiple unlabeled datasets} \cite{lu+19,lu+21}. While the authors put forth a debiasing method for the underlying loss function and demonstrate consistency (akin to our results and those in \cite{10.5555/3666122.3666778,l+24,b+25}), their method relies on restrictive assumptions, such as the separation of class priors over bags. This necessitates access to class-conditional distributions. In contrast, our setup involves bags drawn i.i.d. from the same prior, a scenario where the algorithms of \citet{lu+19,lu+21} would be ineffective.

\citet{ScottZ20,z+22} offer an alternative perspective on LLP, involving the reduction of the LLP problem to the problem of learning with label noise. In this framework, bags are paired, and each pair is analyzed as a noisy label scenario, with label proportions interpreted as label flipping probabilities. Nevertheless, this bag pairing, much like in \cite{lu+19}, fundamentally depends on the statistical variations between bags, a property that we deliberately exclude in our work. Additionally, the balanced risk they focus on as their population loss differs from the population losses we are concerned with in this study.

\citet{fcc23} point out that many nuances exist in LLP formulations that require careful consideration of the dependence structure involving bags, feature items, and labels. Among other things, the paper proposes a taxonomy of LLP tasks based on such dependencies.

\section{Technical Appendices and Supplementary Material for the Binary Case}
This appendix contains the proofs omitted from the main text for the binary label case.

\subsection{Proof of Lemma \ref{l:begin}}
\label{prooflem:unbiased}
\begin{proof}
Recall that $\eta(x) = \PP_{ \cD_{\ls|x}}(y=1 | x ) = {\E}_{y \sim \cD_{\ls|x}}[y|x]$.

We have
\begin{align}
{\E}_{(x,y)}[\ell(h(x),y)] 
&= 
{\E}_{x} \Bigl[{\E}_{y|x}[\ell(h(x),y)\,|\,x]\notag \Bigl]\notag\\
&=
{\E}_{x} \Bigl[{\E}_{y|x}[f_1(h(x)) + y f_2(h(x))\,|\,x] \Bigl]\notag\\
&=
{\E}_{x} \Bigl[f_1(h(x)) + \eta(x) f_2(h(x)) \Bigl]\notag\\
&=
\E[f_1(h(x))] + \E[\eta(x)f_2(h(x))]~.\label{e:individual}
\end{align}
Define the intermediate bag-level quantity
\[
g(h,z) = \E\left[ f_1(h(x))\right] + \Bigl(\frac{1}{k}\sum_{i=1}^k \widetilde y_i\Bigl)\,\Bigl(\frac{1}{k}\sum_{i=1}^k \widetilde f_{2,i}  \Bigl)~,
\]
with $\widetilde y_i = y_i - p$, and $\widetilde f_{2,i} = f_2(h(x_{i})) - \E[f_2(h(x))]$. The function $g(h,z)$ is shaped so as to match (\ref{e:individual}) as much as possible.

Then
\begin{align*}
\E[g(h,z)] 
&= 
\E[f_1(h(x))] + \frac{1}{k^2}{\E}_z\left[\sum_{i=1}^k \widetilde y_i \widetilde f_{2,i} + \sum_{i\neq j} \widetilde y_i \widetilde f_{2,j} \right]\\
&= 
\E[f_1(h(x))] + \frac{1}{k^2}{\E}_z\left[\sum_{i=1}^k \widetilde y_i \widetilde f_{2,i}\right] + \frac{1}{k^2} \sum_{i\neq j}{\E}_z\Bigl[ \widetilde y_i\Bigl] {\E}_z\Bigl[\widetilde f_{2,j} \Bigl]\\
&= 
\E[f_1(h(x))] + \frac{1}{k^2}{\E}_z\left[\sum_{i=1}^k \widetilde y_i \widetilde f_{2,i}\right]\\
&\mbox{(as ${\E}_z\bigl[ \widetilde y_i\bigl] = {\E}_z\bigl[\widetilde f_{2,j} \bigl] = 0$)}\\
&= 
\E[f_1(h(x))] + \frac{1}{k}{\E}_z\left[ \widetilde y_1 \widetilde f_{2,1}\right]\\
&= 
\E[f_1(h(x))] + \frac{1}{k}{\E}_{(x,y)}\Bigl[ \bigl(y-p\bigl)\bigl(f_2(h(x)) - \E[f_2(h(x))]\bigl)\Bigl]~.
\end{align*}
But
\begin{align*}
{\E}_{(x,y)}\Bigl[ \bigl(y-p\bigl)\bigl(f_2(h(x)) - \E[f_2(h(x))]\bigl)\Bigl] 
&= 
{\E}_{(x,y)}\Bigl[ \bigl(y-p\bigl) f_2(h(x)) \Bigl] - {\E}_{(x,y)}\Bigl[ \bigl(y-p\bigl) \Bigl]\,\E[f_2(h(x))]\\
&=
{\E}_{(x,y)}\Bigl[ \bigl(y-p\bigl) f_2(h(x)) \Bigl]\\
&= 
{\E}_{(x,y)}\Bigl[ y f_2(h(x)) \Bigl] - p{\E}_{(x,y)}\Bigl[f_2(h(x)) \Bigl]\\
&= 
{\E}_{x}\Bigl[ \eta(x) f_2(h(x)) \Bigl] - p\,{\E}_{x}\Bigl[f_2(h(x)) \Bigl]
\end{align*}
where the last equality follows from an argument similar to the derivation in \Cref{e:individual}.
Combining the last two derivations we get that
\[
\E[g(h,z)] 
= 
\E[f_1(h(x))] + \frac{1}{k} {\E}_{x}\Bigl[ \eta(x) f_2(h(x)) \Bigl] - \frac{p}{k}\,{\E}_{x}\Bigl[f_2(h(x)) \Bigl]~.
\]
Thus, comparing (\ref{e:individual}) to the above expectation, we see that
\[
{\E}_{(x,y)}[\ell(h(x),y)] = k\,{\E}_z[g(h,z)] + p\, {\E}_x[f_2(h(x))] - (k-1){\E}_x[f_1(h(x))]~,
\]
making
\begin{equation}
\ell_b(h,z) = k\, g(h,z) + p\, {\E}_x[f_2(h(x))] - (k-1){\E}_x[f_1(h(x))] \notag
\end{equation}
an unbiased estimator of 
the sample-level loss
${\E}_{(x,y)}[\ell(h(x),y)]$.

Substituting back the definition of the bag-level quantity $g(h,z)$ we get that the estimator is
\begin{equation}
\ell_b(h,z)={\E}_x[f_1(h(x)) +p\,f_2(h(x))] + (\alpha-p)\,\Bigl(\sum_{i=1}^k f_2(h(x_{i})) - k\E[f_2(h(x))] \Bigl)~. \label{e:bagloss_general}
\end{equation}
 This proves the first part of the lemma.

\medskip

As for the variance, we introduce the short-hand notation
\[
\talpha = \alpha - p~,\qquad \widetilde f_2 = \frac{1}{k}\sum_{i=1}^k f_2(h(x_{i})) - \E[f_2(h(x))]~,
\]
and let 
\(
z = (\bag,\alpha)~,
\)
with $\bag = (x_1,\ldots, x_k)$.
We can write using the total variance formula
\begin{align}
\frac{1}{k^2}\,\Var_z( \ell_b(h,z))
&=
\Var_z\Bigl( \talpha\, \widetilde f_2 \Bigl) \notag \\
&= 
{\E}_\bag\left[\Var_{\alpha}(\talpha_j\, \widetilde f_2 \,|\,\bag) \right] + \Var_\bag\left({\E}_{\alpha}\left[ \talpha\, \widetilde f_2 \,|\, \bag\right] \right) \notag \\
&= 
{\E}_\bag\left[(\widetilde f_2)^2\,\Var_{\alpha}(\talpha \,|\,\bag) \right] + \Var_\bag\left( \widetilde f_2 \,{\E}_{\alpha}\left[ \talpha\,|\, \bag\right] \right) \notag \\
&= 
{\E}_\bag\left[(\widetilde f_2)^2\,\frac{\sum_{i=1}^k \eta(x_i) (1-\eta(x_i)) }{k^2} \right] 
+ \Var_\bag\left( \widetilde f_2 \, \Bigl(\frac{1}{k}\sum_{i=1}^k \eta(x_i) -p\Bigl)  \right) \notag \\
&\leq
\frac{1}{4k}{\E}_\bag\left[(\widetilde f_2)^2 \right] 
+ \Var_\bag\left( \widetilde f_2 \, \Bigl(\frac{1}{k}\sum_{i=1}^k \eta(x_i) -p\Bigl)  \right) \notag \\
&=
\frac{1}{4k}\,\Var_\bag \left[ \frac{1}{k}\sum_{i=1}^k f_2(h(x_{i})) \right]
+ \Var_\bag\left( \widetilde f_2 \, \Bigl(\frac{1}{k}\sum_{i=1}^k \eta(x_i) -p\Bigl)  \right) \notag \\
&=
\frac{1}{4k^2}\,\Var_x( f_2(h(x)))  
+ \Var_\bag\left( \widetilde f_2 \, \Bigl(\frac{1}{k}\sum_{i=1}^k \eta(x_i) -p\Bigl) \right) ~.\label{eq:var1}
\end{align}
The term 
\[
\Var_\bag\left( \widetilde f_2 \, \Bigl(\frac{1}{k}\sum_{i=1}^k \eta(x_i) -p\Bigl)  \right)
\]
can be treated as follows. Define
\[
\widetilde\eta_i = \eta(x_i) - p
\]
and, as before,
\[
\widetilde f_{2,i} = f_2(h(x_{i})) - \E[f_2(h(x))]~.
\]
We have
\begin{align*}
\Var_\bag\left( \widetilde f_2 \, \Bigl(\frac{1}{k}\sum_{i=1}^k \eta(x_i) -p\Bigl)  \right)
&=
\Var_\bag\left( \Bigl(\frac{1}{k}\sum_{i=1}^k \widetilde f_{2,i} \Bigl)  \, \Bigl(\frac{1}{k}\sum_{i=1}^k \widetilde\eta_i \Bigl)  \right) \\
&= 
\frac{1}{k^4}\,\Var_\bag\left( \Bigl( \sum_{i=1}^k \widetilde f_{2,i} \Bigl)  \, \Bigl( \sum_{i=1}^k \widetilde\eta_i \Bigl)  \right) \\
&\leq
\frac{1}{k^4}\,{\E}_\bag\left[ \Bigl( \sum_{i=1}^k \widetilde f_{2,i} \Bigl)^2  \, \Bigl( \sum_{i=1}^k \widetilde\eta_i \Bigl)^2  \right]~.
\end{align*}
Now, the square within the above expectation can also be rewritten as
\[
\Bigl( \sum_{i=1}^k \widetilde f_{2,i} \Bigl)^2  \, \Bigl( \sum_{i=1}^k \widetilde\eta_i \Bigl)^2 
=
\Bigl(\sum_{a=1}^k (\widetilde f_{2,a})^2+ \sum_{a \neq b} \widetilde f_{2,a}\widetilde f_{2,b} \Bigl) \Bigl( \sum_{c=1}^k (\widetilde\eta_c)^2  + \sum_{c \neq d} \widetilde\eta_c \widetilde\eta_d\Bigl)~.
\]
Since the variables $\widetilde f_{2,i} $ and $\widetilde\eta_j$ are zero mean, and independent of one another when $i \neq j$, the only terms that survive once we take the expectations are those of the form
\[
(\widetilde f_{2,a})^2 (\widetilde\eta_c)^2\qquad \forall a, c \in [k]
\]
which are exactly $k^2$, and those of the forms
\[
\widetilde f_{2,a}\, \widetilde f_{2,b}\, \widetilde\eta_a\, \widetilde\eta_b~,~~~~~~ \widetilde f_{2,a}\, \widetilde f_{2,b}\, \widetilde\eta_b\, \widetilde\eta_a  \qquad \forall a, b \in [k]~, a \neq b
\]
which are exactly $2k(k-1)$.
%
Therefore
\begin{align*}
{\E}_\bag\left[ \Bigl( \sum_{i=1}^k \widetilde f_{2,i} \Bigl)^2  \, \Bigl( \sum_{i=1}^k \widetilde\eta_i \Bigl)^2  \right]
&=
\E\Biggl[ \sum_{a,c} (\widetilde f_{2,a})^2 (\widetilde\eta_c)^2 + 2\sum_{a\neq b} \widetilde f_{2,a}\, \widetilde f_{2,b}\, \widetilde\eta_a\, \widetilde\eta_b\Biggl]\\
&=
\E\Biggl[ \sum_{a} (\widetilde f_{2,a})^2 (\widetilde\eta_a)^2 + \sum_{a\neq c} (\widetilde f_{2,a})^2 (\widetilde\eta_c)^2 + 2\sum_{a\neq b}  \Bigl(\widetilde f_{2,a}\,\widetilde\eta_a\Bigl)\, \Bigl(\widetilde f_{2,b}\,\widetilde\eta_b\Bigl)\Biggl]\\
&=
k\E[(\widetilde f_{2,1})^2 (\widetilde\eta_1)^2] + k(k-1) \E[(\widetilde f_{2,1})^2] \E[(\widetilde\eta_1)^2] + 2k(k-1){\E}^2[\widetilde f_{2,1}\,\widetilde\eta_1]\\
&\leq
k\E[(\widetilde f_{2,1})^2 (\widetilde\eta_1)^2] + k(k-1) \E[(\widetilde f_{2,1})^2] \E[(\widetilde\eta_1)^2] + 2k(k-1){\E}[(\widetilde f_{2,1})^2\,(\widetilde\eta_1)^2]\\
&=
(2k^2-k)\E[(\widetilde f_{2,1})^2 (\widetilde\eta_1)^2] + k(k-1) \E[(\widetilde f_{2,1})^2] \E[(\widetilde\eta_1)^2]~.
\end{align*}
%

But
\begin{align*}
{\E}[(\widetilde f_{2,1})^2 (\widetilde\eta_1)^2] 
&=
{\E}_x\Biggl[\Bigl(f_2(h(x)) - \E[f_2(h(x))]\Bigl)^2\,\Bigl(\eta(x) - p\Bigl)^2 \Biggl]\\
&\leq
{\E}_x\Biggl[\Bigl(f_2(h(x)) - \E[f_2(h(x))]\Bigl)^2 \Biggl]
= \Var_x (f_2(h(x)))~,\\
\E[(\widetilde f_{2,1})^2] 
&= 
\Var(\widetilde f_{2,1}) = \Var_x(f_2(h(x)))~,\\
\E[(\widetilde\eta_1)^2] 
&= 
\Var_x(\eta(x)) = {\E}_x[\eta^2(x)] - {\E}_x^2[\eta(x)] \leq p(1-p) \leq \frac{1}{4}~.
\end{align*}
Thus
\begin{align}
{\E}_\bag\left[ \Bigl( \sum_{i=1}^k \widetilde f_{2,i} \Bigl)^2  \, \Bigl( \sum_{i=1}^k \widetilde\eta_i \Bigl)^2  \right]
&\leq
(2k^2-k)\,\Var_x (f_2(h(x)))
+
\frac{k(k-1)}{4}\,\Var_x(f_2(h(x)))\notag\\
&\leq
\frac{9}{4}k^2\,\Var_x (f_2(h(x)))~,\label{e:interm}
\end{align}
and
\begin{align*}
\Var_\bag\left( \widetilde f_2 \, \Bigl(\frac{1}{k}\sum_{i=1}^k \eta(x_i) -p\Bigl)  \right)
&\leq
\frac{9}{4k^2}\,\Var_x (f_2(h(x)))~.
\end{align*}
Substituting back into \Cref{eq:var1} we get
\begin{align*}
\Var_z( \ell_b(h,z))
\leq
\frac{1}{4}\,\Var_x( f_2(h(x)))  
+ \frac{9}{4}\,\Var_x (f_2(h(x))) 
=
\frac{5}{2}\,\Var_x( f_2(h(x))) ~,
\end{align*}
as claimed.
\end{proof}

\subsection{Proof of Theorem \ref{t:mainbinary}}\label{sa:MoM}

Denote by $\popl(h)$ the population loss of model $h$ when the underlying loss is $\ell$, that is
\[
\popl(h) = {\E}_{(x,y)}[\ell(h(x),y)]\,.
\]
Also set, for each pair $h_1,h_2 \in \hyps$,
\[
\Delta \popl(h_1,h_2) = \popl(h_1) - \popl(h_2)~.
\]
Recall that $Q(h_1,h_2; S)$ is the MoM of the estimators
$\widetilde \Delta\ell_b(h_1,h_2;z_j) $
per bag $z_j\in S_3$ (which use the estimates obtained using $S_1$ and $S_2$).

Then, e.g.,  \cite{lugosi2019mean} (Theorem 2 therein\footnote
{
This theorem holds for any median involved in the MoM definition, hence also for our ``anti-symmetric" median defined in Section \ref{ss:MoM}.
}
)
shows that if $m \geq r = \lceil 8\log (1/\delta)\rceil$, we have
\begin{align}
\PP \Biggl(\,\forall\,h_1, h_2 \in \hyps \times \hyps\,\,\,  & \Bigl|Q(h_1,h_2; S) - {\E}_z[\widetilde\Delta\ell_b(h_1,h_2; z)\,|\,S_1,S_2] \Bigl|\notag\\
&\; \leq \sigma(h_1,h_2)\sqrt{\frac{32\log(|\hyps|^2/\delta)}{m}}\,\Biggl|\, S_1,S_2\Biggl) \; \geq
1-\delta~,\label{e:mombound}
\end{align}
where $\sigma^2(h_1,h_2) = \Var_z(\widetilde \Delta\ell_b(h_1,h_2; z)\,|\,S_1,S_2)$, and the probability above is taken over $S_3$, conditioned on $S_1$ and $S_2$.

We shall henceforth assume the stronger condition $m = \Omega(\log(|\hyps|/\delta))$.

We first need to relate ${\E}_{z}[\widetilde\Delta\ell_b(h_1,h_2; z)]$ to the desired $\Delta \popl(h_1,h_2)$, and also bound $\Var_z(\widetilde \Delta\ell_b(h_1,h_2; z)\,|\,S_1,S_2)$.

From Theorem 2 in \cite{lugosi2019mean} one can also see that
\begin{align}
\PP \biggl(\,\forall\,h_1, h_2 \in \hyps \times \hyps\,\,\, & \Bigl|{\widehat \E}_{S_1}[\Delta f_1(h_1,h_2)] - {\E}_x[\Delta f_1(h_1,h_2, x)] \Bigl| \notag \\
& \leq 
\sigma_{f_1}(h_1,h_2)\sqrt{\frac{32\log(|\hyps|^2/\delta)}{mk}} \biggl) \geq 1-\delta \label{e:Mom_concentration_Ef1}
\end{align}
with
$\sigma^2_{f_1}(h_1,h_2) = \Var_x(\Delta f_1(h_1,h_2; x))$,
and
\begin{align}
\PP \biggl(\,\forall\,h_1, h_2 \in \hyps \times \hyps\,\,\, & \Bigl|{\widehat \E}_{S_1}[\Delta f_2(h_1,h_2)] - {\E}_x[\Delta f_2(h_1,h_2; x)] \Bigl| \notag \\
&\leq \sigma_{f_2}(h_1,h_2)\sqrt{\frac{32\log(|\hyps|^2/\delta)}{mk}} \biggl)  \geq 1-\delta \label{e:Mom_concentration_Ef2}
\end{align}
with
$\sigma^2_{f_2}(h_1,h_2) = \Var_x(\Delta f_2(h_1,h_2; x))$. Both of the above probabilities have been taken w.r.t. $S_1$

Moreover, a standard Hoeffding bound reveals that
\begin{equation}\label{e:hoeffding_on_p}
\PP \left( \Bigl|{\widehat p}_{S_2}  - p \Bigl|\leq \sqrt{\frac{\log(2/\delta)}{2mk}} \right)  \geq 1-\delta~,
\end{equation}
the probability being over $S_2$.

From Lemma \ref{l:begin} we have 
%
\begin{align*}
\Bigl|{\E}_z[\widetilde\Delta\ell_b(h_1,h_2; z)\,|\,S_1,S_2] - \Delta \popl(h_1,h_2) \Bigl|
&=
\Bigl|{\E}_z[\widetilde\Delta\ell_b(h_1,h_2; z)\,|\,S_1,S_2] - {\E}_z [\Delta\ell_b(h_1,h_2;z)] \Bigl|~.
\end{align*}
We now compute separately the two expectations 
${\E}_z[\widetilde\Delta\ell_b(h_1,h_2; z)\,|\,S_1,S_2]$ and
${\E}_z [\Delta\ell_b(h_1,h_2;z)]$. 

Let us introduce some shorthand notation first. Set, for fixed $h_1,h_2$,
\begin{align*}
{\E}_1 &= {\E}_x[\Delta f_1(h_1,h_2; x)]~,~~~
{\E}_2 = {\E}_x[\Delta f_2(h_1,h_2; x)]~,~~~~~ \Sigma = \sum_{i=1}^k \Delta f_2(h_1,h_2; x_i)\\
{\widehat \E}_1 &= {\widehat \E}_{S_1}[\Delta f_1(h_1,h_2)]~,~~~~\;
{\widehat \E}_2 = {\widehat \E}_{S_1}[\Delta f_2(h_1,h_2)]~,~~~~~~~~
\widehat p = \widehat p_{S_2}~.
\end{align*}
Then
\begin{align*}
{\E}_z[\widetilde\Delta\ell_b(h_1,h_2; z)\,|\,S_1,S_2] 
&= 
{\widehat \E}_1 + \widehat p \,{\widehat \E}_2 
+ 
{\E}_z \Bigl[(\alpha-\widehat p)\Bigl(\Sigma - k {\widehat \E}_2\Bigl)\,|\,S_1,S_2 \Bigl]\\
&=
{\widehat \E}_1 + \widehat p \,{\widehat \E}_2
+
{\E}_z[\alpha\Sigma] - kp{\widehat \E}_2 -\widehat p\,k{\E}_2 + k\widehat p\, {\widehat \E}_2~.
\end{align*}
Since for $i \neq j$, the variables $y_i$ and $\Delta f_2(h_1,h_2;x_j)$ are independent, we have
\[
{\E}_z[\alpha\Sigma] 
=
\frac{1}{k}\E\Bigl[\Bigl(\sum_{i=1}^k y_i\Bigl)\,\Bigl(\sum_{i=1}^k \Delta f_2(h_1,h_2;x_i)\Bigl)  \Bigl]
=
{\E}_{(x,y)}[y \Delta f_2(h_1,h_2; x)] + (k-1)p {\E}_2~,
\]
so that
\begin{align}
{\E}_z[&\widetilde\Delta\ell_b(h_1,h_2; z)\,|\,S_1,S_2]\notag\\
&=
{\widehat \E}_1 + (k+1)\widehat p \,{\widehat \E}_2
+
{\E}_{(x,y)}[y \Delta f_2(h_1,h_2; x)] + (k-1)p {\E}_2 - kp{\widehat \E}_2 -\widehat p\,k{\E}_2~. \label{e:bias}
\end{align}
Moreover, from Lemma \ref{l:begin},
\[
{\E}_z [\Delta\ell_b(h_1,h_2;z)] 
= 
{\E}_{(x,y)} [ \ell(h_1(x),y)] - {\E}_{(x,y)} [ \ell(h_2(x),y)] 
=
{\E}_1 + {\E}_{(x,y)}[y\Delta f_2(h_1,h_2; x)]~.
\]
Thus we can write
\begin{align*}
\Bigl|{\E}_z[\widetilde\Delta\ell_b(h_1,h_2; z)\,|\,& S_1,S_2] - {\E}_z [\Delta\ell_b(h_1,h_2;z)] \Bigl|\\
&=
\Bigl| {\widehat \E}_1 - {\E}_1  + \widehat p \,{\widehat \E}_2 + (k-1)p {\E}_2 - kp{\widehat \E}_2 - \widehat p\,k{\E}_2 + k\widehat p\,{\widehat \E}_2\Bigl|\\
&=
\Bigl| \widehat {\E}_1 - {\E}_1  + \widehat p \,\widehat{ \E}_2 - p{\E}_2  + k({\E}_2 - {\widehat \E}_2)(p-\widehat p)  \Bigl|\\
&\leq
\Bigl| \widehat {\E}_1 - {\E}_1 \Bigl| + \Bigl|  \widehat p \,\widehat{ \E}_2 - p{\E}_2  \Bigl| + k \Bigl|{\E}_2 - {\widehat \E}_2\Bigl|\,\Bigl|p-\widehat p \Bigl|\\
&=
\Bigl| \widehat {\E}_1 - {\E}_1 \Bigl| 
+ 
\Bigl|  \widehat p \,\widehat{ \E}_2  -  \widehat p \,{ \E}_2  + \widehat p \,{ \E}_2 - p{\E}_2  \Bigl| 
+ 
k \Bigl|{\E}_2 - {\widehat \E}_2\Bigl|\,\Bigl|p-\widehat p \Bigl|\\
&\leq
\Bigl| \widehat {\E}_1 - {\E}_1 \Bigl| 
+ 
 \widehat p \,\Bigl| \widehat{ \E}_2  - { \E}_2 \Bigl|  + |{\E}_2|\,\Bigl|p-\widehat p \Bigl| 
+ 
k \Bigl|{\E}_2 - {\widehat \E}_2\Bigl|\,\Bigl|p-\widehat p \Bigl|\\
&\leq
\Bigl| \widehat {\E}_1 - {\E}_1 \Bigl| 
+ 
\Bigl| \widehat{ \E}_2  - { \E}_2 \Bigl|  + |{\E}_2|\,\Bigl|p-\widehat p \Bigl| 
+ 
k \Bigl|{\E}_2 - {\widehat \E}_2\Bigl|\,\Bigl|p-\widehat p \Bigl|~.
\end{align*}
At this point, we use (\ref{e:Mom_concentration_Ef1}) to bound $\Bigl| \widehat {\E}_1 - {\E}_1 \Bigl|$,  (\ref{e:Mom_concentration_Ef2}) to bound $\Bigl| \widehat {\E}_2 - {\E}_2 \Bigl|$, and (\ref{e:hoeffding_on_p}) to bound $|p-\widehat p|$. This yields, with probability at least $1-3\delta$ over the generation of $S_1$ and $S_2$, uniformly over $h_1,h_2 \in \hyps$,

\begin{align}
\Bigl|{\E}_z&[\widetilde\Delta\ell_b(h_1,h_2; z)\,|\,S_1,S_2] - {\E}_z [\Delta\ell_b(h_1,h_2;z)] \Bigl|\notag\\
&\leq
\sigma_{f_1}(h_1,h_2)\sqrt{\frac{32\log(|\hyps|^2/\delta)}{mk}} 
+ 
\sigma_{f_2}(h_1,h_2)\sqrt{\frac{32\log(|\hyps|^2/\delta)}{mk}} 
+
|{\E}_2|\,\sqrt{\frac{\log(2/\delta)}{2mk}}\notag \\
&\qquad+
k\,\sigma_{f_2}(h_1,h_2)\sqrt{\frac{32\log(|\hyps|^2/\delta)}{mk}}\,\sqrt{\frac{\log(2/\delta)}{2mk}}\notag \\
&=
O\left(\Bigl(\sigma_{f_1}(h_1,h_2) + \sigma_{f_2}(h_1,h_2)   \Bigl)\,\sqrt{\frac{\log(|\hyps|/\delta)}{mk}} \right)
+ 
O\left(|{\E}_2|\,\sqrt{\frac{\log(1/\delta)}{mk}} \right)\notag\\
&\qquad+
O\left(\sigma_{f_2}(h_1,h_2)\sqrt{\frac{\log(|\hyps|/\delta)}{m}}\,\sqrt{\frac{\log(1/\delta)}{m}} \right)\notag\\
&=
O\left( \sigma_{f_1}(h_1,h_2) \,\sqrt{\frac{\log(|\hyps|/\delta)}{mk}} \right)
+ 
O\left(|{\E}_2|\,\sqrt{\frac{\log(1/\delta)}{mk}} \right)
+
O\left(\sigma_{f_2}(h_1,h_2)\sqrt{\frac{\log(1/\delta)}{m}}\right)~,\label{e:bound_on_bias}
\end{align}
where the last equality uses the assumption $m = \Omega(\log(|\hyps|/\delta))$.

As for the variance $\Var_z(\widetilde \Delta\ell_b(h_1,h_2; z)\,|\,S_1,S_2)$, notice that in the conditional space where $S_1$ and $S_2$ are given, the bound on this variance can be obtained by adapting the argument contained in Lemma \ref{l:begin}.

Let 
\(
z = (\bag,\alpha),
\)
with $\bag = (x_1,\ldots, x_k)$, and introduce the short-hand notation
\[
\talpha = \alpha - \widehat p~,\qquad \widetilde \Delta f_2 = \frac{1}{k}\sum_{i=1}^k \Delta f_2(h_1,h_2; x_{i})) - \widehat {\E}_2 = \frac{\Sigma}{k} - \widehat {\E}_2~.
\]
With this notation
\[
\widetilde \Delta\ell_b(h_1,h_2; z) 
= 
{\widehat \E}_1 + \widehat p \,{\widehat \E}_2 
+ 
k\,\talpha\,\widetilde \Delta f_2  ~.
\]
Thus, in the conditional space where $S_1$ and $S_2$ are given,
we follow the argument in Lemma \ref{l:begin} (and omit the conditioning on $S_1$ and $S_2$ for simplicity of notation). We have
\begin{align*}
\frac{1}{k^2}\,\Var_z(\widetilde \Delta\ell_b(h_1,h_2; z))
&=
\Var_z\Bigl( \talpha\, \widetilde \Delta f_2   \Bigl)\\
&= 
{\E}_\bag\left[\Var_{\alpha}(\talpha_j\,\widetilde \Delta f_2   \,|\,\bag) \right] + \Var_\bag\left({\E}_{\alpha}\left[ \talpha\, \widetilde \Delta f_2   \,|\, \bag\right] \right) \\
&= 
{\E}_\bag\left[(\widetilde \Delta f_2  )^2\,\Var_{\alpha}(\talpha \,|\,\bag) \right] + \Var_\bag\left( \widetilde \Delta f_2   \,{\E}_{\alpha}\left[ \talpha\,|\, \bag\right] \right) \\
&= 
{\E}_\bag\left[(\widetilde \Delta f_2  )^2\,\frac{\sum_{i=1}^k \eta(x_i) (1-\eta(x_i)) }{k^2} \right] 
+ \Var_\bag\left( \widetilde \Delta f_2   \, \Bigl(\frac{1}{k}\sum_{i=1}^k \eta(x_i) - \widehat p\Bigl)  \right)\\
&\leq 
\frac{1}{4k}{\E}_\bag\left[(\widetilde \Delta f_2  )^2 \right] 
+ \Var_\bag\left(\widetilde \Delta f_2   \, \Bigl(\frac{1}{k}\sum_{i=1}^k \eta(x_i) - \widehat p\Bigl)  \right)~.
\end{align*}
Now,
\begin{align*}
{\E}_\bag\left[(\widetilde \Delta f_2  )^2 \right] 
&=
{\E}_\bag\left[\Bigl(\frac{\Sigma}{k} - \widehat {\E}_2\Bigl)^2 \right] \\
&=
{\E}_\bag\left[\Bigl(\frac{\Sigma}{k} - {\E}_2 + {\E}_2 -\widehat {\E}_2\Bigl)^2 \right] \\
&\leq
2{\E}_\bag\left[\Bigl(\frac{\Sigma}{k} - {\E}_2  \Bigl)^2 \right] + 2\Bigl( {\E}_2 -\widehat {\E}_2 \Bigl)^2\\
&= 2\Var_\bag \Bigl( \frac{\Sigma}{k}\Bigl) +2\Bigl( {\E}_2 -\widehat {\E}_2 \Bigl)^2\\
&=
\frac{2}{k}\sigma^2_{f_2}(h_1,h_2) + 2\Bigl( {\E}_2 -\widehat {\E}_2 \Bigl)^2~.
\end{align*}
Plugging back gives 
\begin{align*}
\frac{1}{k^2}\,\Var_z(\widetilde \Delta\ell_b(h_1,h_2; z))
&\leq 
\frac{1}{2k^2}\sigma^2_{f_2}(h_1,h_2) + \frac{1}{2k}\Bigl( {\E}_2 -\widehat {\E}_2 \Bigl)^2 + \Var_\bag\left(\widetilde \Delta f_2   \, \Bigl(\frac{1}{k}\sum_{i=1}^k \eta(x_i) - \widehat p\Bigl)  \right)\\
&\leq 
\frac{1}{2k^2}\sigma^2_{f_2}(h_1,h_2) 
+ 
\sigma^2_{f_2}(h_1,h_2)\,\frac{32\log(|\hyps|^2/\delta)}{2mk^2}\\ 
&\qquad+ 
\Var_\bag\left(\widetilde \Delta f_2   \, \Bigl(\frac{1}{k}\sum_{i=1}^k \eta(x_i) - \widehat p\Bigl)  \right)~,
\end{align*}
the last inequality holding with probability at least $1-\delta$ over the generation of $S_1$ by virtue of (\ref{e:Mom_concentration_Ef2}).

The term $\Var_\bag\left(\widetilde \Delta f_2   \, \Bigl(\frac{1}{k}\sum_{i=1}^k \eta(x_i) - \widehat p\Bigl)  \right)$ can be treated similar to Lemma \ref{l:begin}, with the caveat that now the variables involved are no longer zero mean.

Similar to Lemma \ref{l:begin}, define
\[
\widetilde\eta'_i = \eta(x_i) - \widehat p = \widetilde\eta_i + p - \widetilde p~.
\]
and
\[
\widetilde \Delta f'_{2,i} = \Delta f_2(h_1,h_2; x_i) - \widehat {\E}_2 = \widetilde \Delta f_{2,i} + {\E}_2 - \widehat {\E}_2~.
\]
Hence $\widetilde\eta_i$ and $\widetilde \Delta f_{2,i}$ are zero mean random variables.

We have
\begin{align*}
\Var_\bag\left( \widetilde \Delta f_2 \, \Bigl(\frac{1}{k}\sum_{i=1}^k \eta(x_i) - \widehat p\Bigl)  \right)
&=
\Var_\bag\left( \Bigl(\frac{1}{k}\sum_{i=1}^k \widetilde \Delta f'_{2,i} \Bigl)  \, \Bigl(\frac{1}{k}\sum_{i=1}^k \widetilde\eta'_i \Bigl)  \right) \\
&= 
\frac{1}{k^4}\,\Var_\bag\left( \Bigl( \sum_{i=1}^k \widetilde \Delta f'_{2,i} \Bigl)  \, \Bigl( \sum_{i=1}^k \widetilde\eta'_i \Bigl)  \right) \\
&\leq
\frac{1}{k^4}\,{\E}_\bag\left[ \Bigl( \sum_{i=1}^k \widetilde \Delta f'_{2,i} \Bigl)^2  \, \Bigl( \sum_{i=1}^k \widetilde\eta'_i \Bigl)^2  \right]\\
&=
\frac{1}{k^4}\,{\E}_\bag\left[ \Bigl( \sum_{i=1}^k \widetilde \Delta f_{2,i} + {\E}_2 - \widehat {\E}_2 \Bigl)^2  \, \Bigl( \sum_{i=1}^k \widetilde\eta_i + p - \widetilde p \Bigl)^2  \right]\\
&\leq
\frac{4}{k^4}\,{\E}_\bag\left[ \Biggl(\Bigl( \sum_{i=1}^k \widetilde \Delta f_{2,i} \Bigl)^2 +  \Bigl({\E}_2 - \widehat {\E}_2 \Bigl)^2 \Biggl) \, \Biggl(\Bigl( \sum_{i=1}^k \widetilde\eta_i\Bigl)^2 +  \Bigl(p - \widetilde p \Bigl)^2 \Biggl)  \right]~.
\end{align*}
Now, from (\ref{e:Mom_concentration_Ef2}) and (\ref{e:hoeffding_on_p}) we have, with probability at least $1-2\delta$ over the generation of $S_1$ and $S_2$, and uniformly over $h_1, h_2$,
\[
\Bigl({\E}_2 - \widehat {\E}_2 \Bigl)^2 \leq \sigma^2_{f_2}(h_1,h_2)\frac{32\log(|\hyps|^2/\delta)}{mk}~,
\qquad
\Bigl(p - \widetilde p \Bigl)^2 \leq 
\frac{\log(2/\delta)}{2mk}~,
\]
Thus
\begin{align*}
\Var_\bag&\left( \widetilde \Delta f_2 \, \Bigl(\frac{1}{k}\sum_{i=1}^k \eta(x_i) - \widehat p\Bigl)  \right)\\
&\leq
\frac{4}{k^4}\,{\E}_\bag\left[  \Bigl( \sum_{i=1}^k \widetilde \Delta f_{2,i} \Bigl)^2  \, \Bigl( \sum_{i=1}^k \widetilde\eta_i\Bigl)^2   \right] \\
&\qquad+
\frac{4}{k^4}\, \frac{\log(2/\delta)}{2mk}\,\,{\E}_\bag\left[  \Bigl( \sum_{i=1}^k \widetilde \Delta f_{2,i} \Bigl)^2    \right]\\
&\qquad+
\frac{4}{k^4}\,\sigma^2_{f_2}(h_1,h_2)\frac{32\log(|\hyps|^2/\delta)}{mk}\,\,{\E}_\bag\left[ \Bigl( \sum_{i=1}^k \widetilde\eta_i\Bigl)^2   \right] \\
&\qquad+
\frac{4}{k^4}\,\sigma^2_{f_2}(h_1,h_2)\frac{32\log(|\hyps|^2/\delta)}{mk}\, \frac{\log(2/\delta)}{2mk}~.
\end{align*}
Now, as in the proof of Lemma \ref{l:begin} (see Eq. (\ref{e:interm}) there),
\[
\frac{4}{k^4}\,{\E}_\bag\left[  \Bigl( \sum_{i=1}^k \widetilde \Delta f_{2,i} \Bigl)^2  \, \Bigl( \sum_{i=1}^k \widetilde\eta_i\Bigl)^2   \right] 
\leq 
\frac{4}{k^4}\,\frac{9}{4}\,k^2\,\sigma^2_{f_2}(h_1,h_2) = \frac{9}{k^2}\,\sigma^2_{f_2}(h_1,h_2)~.
\]
Moreover, since the variables $\widetilde \Delta f_{2,i}$ and $\widetilde\eta_i$ are zero mean, we also have (see again the proof of Lemma \ref{l:begin}, just above Eq. (\ref{e:interm}))
\[
{\E}_\bag\left[  \Bigl( \sum_{i=1}^k \widetilde \Delta f_{2,i} \Bigl)^2  \right] 
= 
k\,\sigma^2_{f_2}(h_1,h_2)
\]
and
\[
{\E}_\bag\left[ \Bigl( \sum_{i=1}^k \widetilde\eta_i\Bigl)^2 \right]
= 
k\Var_x(\eta(x)) \leq \frac{k}{4}~.
\]
Plugging back we conclude that
\begin{align*}
\Var_\bag\left( \widetilde \Delta f_2 \, \Bigl(\frac{1}{k}\sum_{i=1}^k \eta(x_i) - \widehat p\Bigl)  \right)
=
O\left(\sigma^2_{f_2}(h_1,h_2)\left( \frac{1}{k^2} +
 \frac{\log(|\hyps|/\delta)}{m\,k^4} +
\frac{\log(|\hyps|/\delta)\,\log(1/\delta)}{m^2\,k^6}\right)\right)~. 
\end{align*}
Therefore, 
\begin{align*}
\frac{1}{k^2}\,\Var_z(\widetilde \Delta\ell_b(h_1,h_2; z))
&=
O\left(\sigma^2_{f_2}(h_1,h_2)\left( \frac{1}{k^2} +
 \frac{\log(|\hyps|/\delta)}{m\,k^2} +
\frac{\log(|\hyps|/\delta)\,\log(1/\delta)}{m^2\,k^6}\right)\right)~,
\end{align*}
and
\begin{align*}
\sigma^2(h_1,h_2) = \Var_z(\widetilde \Delta\ell_b(h_1,h_2; z)\,|\,S_1,S_2)
&=
O\left(\sigma^2_{f_2}(h_1,h_2)\left( 1 +
 \frac{\log(|\hyps|/\delta)}{m} +
\frac{\log(|\hyps|/\delta)\,\log(1/\delta)}{m^2\,k^4}\right)\right)
\end{align*}
holds with probability at least $1-3\delta$ over the random generation of $S_1$ and $S_2$, uniformly over $h_1,h_2 \in \hyps$. Since we assumed 
\[
m = \Omega\left(\log(|\hyps|/\delta)\right)
\]
this finally implies
\begin{equation}\label{e:bound_on_var}
\sigma^2(h_1,h_2) = \Var_z(\widetilde \Delta\ell_b(h_1,h_2; z)\,|\,S_1,S_2)
=
O\left(\sigma^2_{f_2}(h_1,h_2)\right)
\end{equation}
with probability at least $1-\delta$ over the random generation of $S_1$ and $S_2$, uniformly over $h_1,h_2 \in \hyps$.

Let us hencehorth assume that $S_1$ and $S_2$ have been drawn in such a way that both (\ref{e:bound_on_bias}) and (\ref{e:bound_on_var}) simultaneously hold for all $h_1, h_2$.

%
%
%

Then, combining (\ref{e:bound_on_bias}), (\ref{e:bound_on_var}), and (\ref{e:mombound}) one can easily see that
\begin{align}
&\PP \Biggl(\,\forall\,h_1, h_2 \in \hyps \times \hyps\,\,\,  \Bigl|Q(h_1,h_2; S_3) - \Delta \popl(h_1,h_2) \Bigl| \leq O\left(\sigma_{f_2}(h_1,h_2)\sqrt{\frac{\log(|\hyps|/\delta)}{m}}\right)  
\notag\\
&\qquad + O\left( \sigma_{f_1}(h_1,h_2) \,\sqrt{\frac{\log(|\hyps|/\delta)}{mk}} \right)
+ 
O\left(|{\E}_2|\,\sqrt{\frac{\log(1/\delta)}{mk}} \right)
+
O\left(\sigma_{f_2}(h_1,h_2)\sqrt{\frac{\log(1/\delta)}{m}}\right) \Biggl) \notag \\
&=
\PP \Biggl(\,\forall\,h_1, h_2 \in \hyps \times \hyps\,\,\,  \Bigl|Q(h_1,h_2; S_3) -  \Delta \popl(h_1,h_2) \Bigl|
\leq 
C_{\delta}(h_1,h_2,|\hyps|,m,k)\Biggl) \geq
1-\delta~,\label{e:MoM_bound2}
\end{align}
holds under the condition $m = \Omega(\log(|\hyps|/\delta))$,
where we set for brevity
\[
C_{\delta}(h_1,h_2,|\hyps|,m,k) = O\left(\sigma_{f_2}(h_1,h_2)\sqrt{\frac{\log(|\hyps|/\delta)}{m}} +  \sigma_{f_1}(h_1,h_2) \,\sqrt{\frac{\log(|\hyps|/\delta)}{mk}} 
+ 
|{\E}_2|\,\sqrt{\frac{\log(1/\delta)}{mk}} \right)~,
\]
and where now the probability is w.r.t.\ the generation of $S_1, S_2$, and $S_3$.

We are now ready to prove the correctness and the sample complexity of  Algorithm \ref{a:MoM} in Section \ref{s:general_binary}.

\begin{proposition}\label{p:MoMknown}
Let $\hyps_{\beta} = \{h\in \hyps\,:\, \regret(h) \geq \beta\}$, and assume $m$ is such that, for $h \in \hyps_{\beta/4}$, we have
\begin{equation}\label{e:condition_on_m_1}
C_{\delta}(h,\whs,|\hyps|,m,k) \leq \Delta \popl(h,\whs)/4~,
\end{equation}
for all $h \notin \hyps_{\beta/4}$ we have
\begin{equation}\label{e:condition_on_m_2}
C_{\delta}(h,\whs,|\hyps|,m,k) \leq \frac{\beta}{4}~,
\end{equation}
along with $m = \Omega(\log(|\hyps|/\delta))$.
%
Then with probability $\geq 1-\delta$ over the generation of $S_1, S_2$, and $S_3$ the following holds:
\begin{enumerate}
\item If $\regret(h) = \Delta \popl(h,\whs) \geq \beta$ then $h$ is eliminated by $\whs$\,;
\item $\whs$ is not eliminated. In particular, at the end of the tournament $\Pool$ is not empty (as it contains at least $\whs$).
\end{enumerate}
\end{proposition}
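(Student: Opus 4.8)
The plan is to condition throughout on the event $\mathcal E$ of (\ref{e:MoM_bound2}): that $\bigl|Q(h_1,h_2;S) - \Delta\popl(h_1,h_2)\bigr| \le C_\delta(h_1,h_2,|\hyps|,m,k)$ holds simultaneously for all ordered pairs $h_1,h_2\in\hyps$. Under the standing assumption $m=\Omega(\log(|\hyps|/\delta))$ this event has probability at least $1-\delta$ over the draw of $S_1,S_2,S_3$, and everything below is deterministic on $\mathcal E$. Before the two claims I would record two bookkeeping facts. First, by the antisymmetry $Q(h_1,h_2;S)=-Q(h_2,h_1;S)$ from Section~\ref{ss:MoM}, the two elimination rules of Algorithm~\ref{a:MoM} are together equivalent to the statement: $h$ is removed from $\Pool$ as soon as the loop visits some $h'$ with $Q(h,h';S)>\beta/2$. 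Second, $C_\delta(h_1,h_2,|\hyps|,m,k)$ is symmetric under swapping $h_1\leftrightarrow h_2$, because $\sigma_{f_1}(h_1,h_2)$ and $\sigma_{f_2}(h_1,h_2)$ are square roots of variances of $\Delta f_i(h_1,h_2;\cdot)$ and $|{\E}_2|$ is the modulus of ${\E}_x[\Delta f_2(h_1,h_2;\cdot)]$, all unchanged under the swap; this is what lets us use hypotheses (\ref{e:condition_on_m_1}) and (\ref{e:condition_on_m_2}), phrased for $C_\delta(h,\whs,\cdot)$, also in the form $C_\delta(\whs,h,\cdot)$.

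For claim~1, I would take $h$ with $\regret(h)=\Delta\popl(h,\whs)\ge\beta$, note that then $h\in\hyps_{\beta/4}$, invoke (\ref{e:condition_on_m_1}) to get $C_\delta(h,\whs,|\hyps|,m,k)\le\Delta\popl(h,\whs)/4$, and conclude on $\mathcal E$ that $Q(h,\whs;S)\ge\Delta\popl(h,\whs)-C_\delta(h,\whs,|\hyps|,m,k)\ge\tfrac34\Delta\popl(h,\whs)\ge\tfrac34\beta>\tfrac\beta2$. By the first bookkeeping fact, the comparison of $h$ against $\whs$ then eliminates $h$.

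For claim~2, I would show that no comparison can eliminate $\whs$, i.e.\ $Q(\whs,h;S)\le\beta/2$ for every $h\ne\whs$. On $\mathcal E$, $Q(\whs,h;S)\le\Delta\popl(\whs,h)+C_\delta(\whs,h,|\hyps|,m,k)$, and $\Delta\popl(\whs,h)=-\regret(h)\le0$ since $\whs$ is best-in-class. Split on whether $h\in\hyps_{\beta/4}$: if so, (\ref{e:condition_on_m_1}) together with the symmetry fact give $C_\delta(\whs,h,|\hyps|,m,k)\le\regret(h)/4$, hence $Q(\whs,h;S)\le-\tfrac34\regret(h)\le0<\tfrac\beta2$; if not, (\ref{e:condition_on_m_2}) gives $C_\delta(\whs,h,|\hyps|,m,k)\le\beta/4$, hence $Q(\whs,h;S)\le-\regret(h)+\tfrac\beta4\le\tfrac\beta4<\tfrac\beta2$. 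Thus $\whs$ survives every comparison, so $\whs\in\Pool$ at termination and in particular $\Pool$ is nonempty.

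The whole argument is this case analysis, so there is no genuine obstacle; the only steps that require a moment of care are the symmetry of $C_\delta$ in its first two arguments (needed to re-orient the hypotheses on $m$ in claim~2) and keeping the signs and the direction of the elimination rule consistent throughout.
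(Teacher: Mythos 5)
Your proposal is correct and follows essentially the same argument as the paper: condition on the uniform concentration event (\ref{e:MoM_bound2}), then do the case analysis on whether $h \in \hyps_{\beta/4}$, using the hypotheses (\ref{e:condition_on_m_1})--(\ref{e:condition_on_m_2}) and the antisymmetry $Q(h_1,h_2;S) = -Q(h_2,h_1;S)$. The only cosmetic difference is that you bound $Q(\whs,h;S)$ directly via the (correctly observed) symmetry of $C_\delta$ in its first two arguments, whereas the paper bounds $Q(h,\whs;S)$ from below and negates; both yield the same conclusion.
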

%
\begin{proof}
Let $h \in \hyps_{\beta/4}$. We have, with probability $\geq 1-\delta$
\begin{align*}
Q(h,\whs; S) 
&\geq 
\Delta \popl(h,\whs)  - C_{\delta}(h,\whs,|\hyps|,m,k)\\
&{\mbox{(from (\ref{e:MoM_bound2})})}\\
&\geq 
\frac{3}{4}\Delta \popl(h,\whs)\\
&{\mbox{(from (\ref{e:condition_on_m_1})})}\,.
\end{align*}
Now, if $\Delta \popl(h,\whs) \geq \beta$ (clearly this implies $h \in \hyps_{\beta/4}$) then $Q(h,\whs; S) \geq \frac{3}{4}\beta  > \beta/2$, thus $h$ is eliminated by $\whs$. This proves the first item. 


As for the second item, consider any $h \in \hyps$, and note that $\Delta \popl(h,\whs) \geq 0$ (since $\whs$ is the best in class) and that, by definition,  $Q(\whs,h; S) = - Q(h,\whs; S)$. 

We separate two cases:
\begin{itemize}
\item $\Delta \popl(h,\whs) \geq \beta/4$ (which is equivalent to $h \in \hyps_{\beta/4}$). In this case, from the previous chain of inequalities, $Q(h,\whs; S) \geq \frac{3}{16}\beta > 0$, and $\whs$ is not eliminated as a second argument of $Q$.
On the other hand, 
\begin{align*}
Q(\whs,h ; S) 
&=
-Q(h,\whs ; S)
\leq 
-\frac{3}{16}\beta < \beta/2~.
\end{align*}
Hence $\whs$ cannot be eliminated as first argument of $Q$ either.

\item $\Delta \popl(h,\whs) < \beta/4$ (which is equivalent to $h \notin \hyps_{\beta/4}$). In this case, we have, with probability $\geq 1-\delta$,
\begin{align*}
Q(h,\whs; S) 
&\geq 
\Delta \popl(h,\whs) - C_{\delta}(h,\whs,|\hyps|,m,k)\\
&{\mbox{(from (\ref{e:MoM_bound2})})}\\
&\geq 
\Delta \popl(h,\whs) - \beta/4\\
&{\mbox{(from (\ref{e:condition_on_m_2})})}\,.
\end{align*}
The above expression larger than $-\beta/2$ since $\Delta \popl(h,\whs) \geq 0$. Hence, $\whs$ will not be eliminated as second argument of $Q$. On the other hand,
\begin{align*}
Q(\whs,h; S) 
&=
- Q(h,\whs; S)
\leq 
-\Delta \popl(h,\whs) + \beta/4 
\leq 
\beta/4 < \beta/2~.
\end{align*}
Hence $\whs$ will not be eliminated as first argument of $Q$ either.
\end{itemize}
%
This concludes the proof.
\end{proof}

\begin{proof}{[of Theorem \ref{t:mainbinary}]}
From Proposition \ref{p:MoMknown} we see that with probability $\geq 1-\delta$ the pool $\Pool$ at the end of the tournament  contains $\whs$, plus possibly other $h \in \hyps$ such that $\regret(h) \leq \beta$, so any remaining $h \in \Pool$ will have $\regret(h) \leq \beta$.

We are left with the task of computing the sample complexity of Algorithm \ref{a:MoM}. From (\ref{e:condition_on_m_1}) we know that we must guarantee that
\begin{equation}\label{e:sample_complexity_prebound_1}
O\left(\sigma_{f_2}(h,\whs)\sqrt{\frac{\log(|\hyps|/\delta)}{m}} +  \sigma_{f_1}(h,\whs) \,\sqrt{\frac{\log(|\hyps|/\delta)}{mk}} 
+ 
|{\E}_2|\,\sqrt{\frac{\log(1/\delta)}{mk}} \right) \leq \frac{\Delta \popl(h,\whs)}{4}
\end{equation}
for all $h \in \hyps_{\beta/4}$
and, from (\ref{e:condition_on_m_2}),
\begin{equation}\label{e:sample_complexity_prebound_2}
O\left(\sigma_{f_2}(h,\whs)\sqrt{\frac{\log(|\hyps|/\delta)}{m}} +  \sigma_{f_1}(h,\whs) \,\sqrt{\frac{\log(|\hyps|/\delta)}{mk}} 
+ 
|{\E}_2|\,\sqrt{\frac{\log(1/\delta)}{mk}} \right)  \leq \frac{\beta}{4}~,
\end{equation}
for all $h \notin \hyps_{\beta/4}$,
where ${\E}_2$ above has to be interpreted as
\[
{\E}_2 = {\E}_x[\Delta f_2(h,\whs; x)]~.
\]
Moreover, the condition $m = \Omega(\log(|\hyps|/\delta))$ should also be satisfied.

As for (\ref{e:sample_complexity_prebound_1}), 
since $h \in \hyps_{\beta/4}$, that is, $\Delta \popl(h,\whs) \geq \beta/4$, for this inequality to hold, it suffices to have
\[
m 
= 
\Omega\left(\frac{\sigma^2_{f_2}(h,\whs)\,\log(|\hyps|/\delta)}{\beta^2}
+
\frac{\sigma^2_{f_1}(h,\whs)\,\log(|\hyps|/\delta) + {\E}^2_x[\Delta f_2(h,\whs; x)]\,\log(1/\delta)}{k\beta^2} \right) ~.
\]
for all  $h \in \hyps_{\beta/4}$.

On the other hand, the condition in (\ref{e:sample_complexity_prebound_2}) is satisfied when
\[
m 
= 
\Omega\left(\frac{\sigma^2_{f_2}(h,\whs)\,\log(|\hyps|/\delta)}{\beta^2}
+
\frac{\sigma^2_{f_1}(h,\whs)\,\log(|\hyps|/\delta) + {\E}^2_x[\Delta f_2(h,\whs; x)]\,\log(1/\delta)}{k\beta^2} \right) ~.
\]
for all  $h \notin \hyps_{\beta/4}$.
Furthermore we need to add $m = \Omega(\log(|\hyps|/\delta))$.

Overall,
\[
m 
= 
\Omega\left(\frac{\sigma^2_{f_2}(h,\whs)\,\log(|\hyps|/\delta)}{\beta^2}
+
\frac{\sigma^2_{f_1}(h,\whs)\,\log(|\hyps|/\delta) + {\E}^2_x[\Delta f_2(h,\whs; x)]\,\log(1/\delta)}{k\beta^2} \right) ~.
\]
for all  $h \in\hyps$, along with $m = \Omega(\log(|\hyps|/\delta))$.

In the above, we further upper bound the relevant factors as
\begin{align*}
\sigma^2_{f_i}(h,\whs) 
&=
\Var_x\Bigl(f_i(h(x)) - f_i(\whs(x)) \Bigl)\\
&\leq
\max_{h \in \hyps} \Var_x\Bigl(f_i(h(x)) - f_i(\whs(x))\Bigl)\\
&= 
\sigma^2_{f_i}(\hyps)~,
\end{align*}
along with
\[
{\E}^2_x[\Delta f_2(h,\whs; x)] \leq \max_{h \in \hyps} {\E}^2_x[\Delta f_2(h,\whs; x)] = {\E}^2[\hyps]
\]
to conclude the proof.
\end{proof}

\subsection{Version of Theorem \ref{t:mainbinary} with variable bag sizes}\label{sa:variable_bags}
Denote by $k_j$ the size of the $j$-th bag, and consider the split
\begin{equation}\label{e:split_variable_size}
S = \{\underbrace{z_1,\ldots,z_m}_{S_1},\underbrace{z_{m+1},\ldots z_{2m}}_{S_2}\}~.
\end{equation}
Now subset $S_1$ will be used to estimate 
${\E}_x[\Delta f_1(h_1,h_2; x)]$, ${\E}_x[\Delta f_2(h_1,h_2;  x)]$,
and $p$, and subset $S_2$ will be the one out of which the LLP estimator (\ref{e:baglevel_estimator_diff}) will be constructed. 
Before applying the split into $S_1$ and $S_2$, though, we  rearrange the $2m$ bags so that the last $m$ bags (those that will end up being in $S_2$) now correspond to the $m$ smallest bags (ties broken arbitrarily). For $i = 1,2$, denote by $\widehat k_i$, the average bag size in $S_i$, and by $\widehat k$ the average bag size over the entire dataset $S$. Then, by construction, $\widehat k_2 \leq \widehat k = (\widehat k_1 + \widehat k_2)/2$.

Apart from this simple difference in the way the split is constructed, the algorithm for the variable bag size is the very same as the one with constant bag sizes.
The following generalization of Theorem \ref{t:mainbinary} holds.

\begin{theorem}\label{t:mainbinary_variable_bagsize}
Let S = $(x_1,y_1),\ldots, (x_n,y_n)$ be drawn i.i.d.\ from a distribution $\cD$ over $\fs \times \{0,1\}$. Let the loss $\ell(\cdot,y) = f_1(\cdot) + yf_2(\cdot)$ be such that the maximal second moments $\max_{h \in \hyps}{\E}_x\bigl[\bigl(f_1(h(x)) \bigl)^2 \bigl]$ 
and
$\max_{h \in \hyps}{\E}_x\bigl[\bigl(f_2(h(x)) \bigl)^2 \bigl]$ are {\em finite}. Let $\whs = \min_{h \in \hyps} \popl(h)$ be the best-in-class hypothesis. 
If $S$ is split into $2m$ random bags as described above
with $n= m\times (\widehat k_1 + \widehat k_2)$, then for all $\beta > 0$, the hypothesis $\wh$ output by Algorithm \ref{a:MoM} satisfies \(
\regret(\wh) \leq \beta
\) 
with probability at least $1-\delta$, provided
\begin{align*}
n 
= 
O\Biggl(
&\frac{\widehat k\,\sigma^2_{f_2(\hyps)}\,\log(|\hyps|/\delta)}{\beta^2} 
+
\frac{\sigma^2_{f_1}(\hyps)\,\log(|\hyps|/\delta)+{\E}^2[\hyps]\,\log(1/\delta)}{\beta^2}
+
\widehat k\,\log(|\hyps|/\delta)\Biggl)~,
\end{align*}
where the big-oh notation only hides absolute constants, and
\begin{align*}
&\sigma^2_{f_i}(\hyps) = \max_{h \in \hyps} \Var_x\Bigl(f_i(h(x)) - f_i(\whs(x))\Bigl),~ i = 1,2\\
&{\E}^2[\hyps] = \max_{h \in \hyps} {\E}^2_x\Bigl[f_2(h(x)) - f_2(\whs(x)) \Bigl]~.
\end{align*}
\end{theorem}
\begin{proof}
The proof is very similar to the proof of Theorem \ref{t:mainbinary}, so in what follows we only sketch the main differences.

We shall apply Lemma \ref{l:begin} to $S_2$ with average bag size $\widehat k_2 \leq \widehat k$. Equation (\ref{e:mombound}) holds independent of bag sizes. Equations 
(\ref{e:Mom_concentration_Ef1}) and (\ref{e:Mom_concentration_Ef2}) apply with $k$ therein replaced by $\widehat k_1$, but also Equation (\ref{e:hoeffding_on_p}) applies with $k$ replaced by $\widehat k_1$. Compared to the proof of Theorem \ref{t:mainbinary}, since we are using the same $S_1$ for all these estimators, we only have to replace $\delta$ by $\delta/2$ throughout.

Following the proof of Theorem \ref{t:mainbinary}, we conclude that
\begin{align*}
\Bigl|{\E}_z[\widetilde\Delta\ell_b(h_1,h_2; z)\,|\, S_1,S_2] - {\E}_z [\Delta\ell_b(h_1,h_2;z)] \Bigl|
\leq
\Bigl| \widehat {\E}_1 - {\E}_1 \Bigl| 
+ 
\Bigl| \widehat{ \E}_2  - { \E}_2 \Bigl|  + |{\E}_2|\,\Bigl|p-\widehat p \Bigl| 
+ 
\widehat k_2 \Bigl|{\E}_2 - {\widehat \E}_2\Bigl|\,\Bigl|p-\widehat p \Bigl|~.
\end{align*}

As in the proof of Theorem \ref{t:mainbinary}, we use (\ref{e:Mom_concentration_Ef1}) to bound $\Bigl| \widehat {\E}_1 - {\E}_1 \Bigl|$,  (\ref{e:Mom_concentration_Ef2}) to bound $\Bigl| \widehat {\E}_2 - {\E}_2 \Bigl|$, and (\ref{e:hoeffding_on_p}) to bound $|p-\widehat p|$. We conclude that with probability at least $1-2\delta$ over the generation of $S_1$, uniformly over $h_1,h_2 \in \hyps$,
\begin{align}
\Bigl|{\E}_z&[\widetilde\Delta\ell_b(h_1,h_2; z)\,|\,S_1,S_2] - {\E}_z [\Delta\ell_b(h_1,h_2;z)] \Bigl|\notag\\
&\leq
\sigma_{f_1}(h_1,h_2)\sqrt{\frac{32\log(|\hyps|^2/\delta)}{m\widehat k_1}} 
+ 
\sigma_{f_2}(h_1,h_2)\sqrt{\frac{32\log(|\hyps|^2/\delta)}{m\widehat k_1}} 
+
|{\E}_2|\,\sqrt{\frac{\log(2/\delta)}{2m\widehat k_1}}\notag \\
&\qquad+
\widehat k_2\,\sigma_{f_2}(h_1,h_2)\sqrt{\frac{32\log(|\hyps|^2/\delta)}{m\widehat k_1}}\,\sqrt{\frac{\log(2/\delta)}{2m\widehat k_1}}\notag \\
&=
O\left(\Bigl(\sigma_{f_1}(h_1,h_2) + \sigma_{f_2}(h_1,h_2)   \Bigl)\,\sqrt{\frac{\log(|\hyps|/\delta)}{m\widehat k_1}} \right)
+ 
O\left(|{\E}_2|\,\sqrt{\frac{\log(1/\delta)}{m\widehat k_1}} \right)\notag\\
&\qquad+
O\left(\sigma_{f_2}(h_1,h_2)\sqrt{\frac{\log(|\hyps|/\delta)}{m}}\,\sqrt{\frac{\log(1/\delta)}{m}} \right)\notag\\
&=
O\left( \sigma_{f_1}(h_1,h_2) \,\sqrt{\frac{\log(|\hyps|/\delta)}{m\widehat k_1}} \right)
+ 
O\left(|{\E}_2|\,\sqrt{\frac{\log(1/\delta)}{m\widehat k_1}} \right)
+
O\left(\sigma_{f_2}(h_1,h_2)\sqrt{\frac{\log(1/\delta)}{m}}\right)~.\notag
\end{align}
Moreover, as for the constant bag size case,
\begin{equation*}
\sigma^2(h_1,h_2) = \Var_z(\widetilde \Delta\ell_b(h_1,h_2; z)\,|\,S_1,S_2)
=
O\left(\sigma^2_{f_2}(h_1,h_2)\right)
\end{equation*}
with probability at least $1-\delta$ over the random generation of $S_1$, uniformly over $h_1,h_2 \in \hyps$.

Combining bias and variance above gives
\begin{align*}
&\PP \Biggl(\,\forall\,h_1, h_2 \in \hyps \times \hyps\,\,\,  \Bigl|Q(h_1,h_2; S_3) - \Delta \popl(h_1,h_2) \Bigl| \leq O\left(\sigma_{f_2}(h_1,h_2)\sqrt{\frac{\log(|\hyps|/\delta)}{m}}\right)  
\notag\\
&\qquad + O\left( \sigma_{f_1}(h_1,h_2) \,\sqrt{\frac{\log(|\hyps|/\delta)}{m\widehat k_1}} \right)
+ 
O\left(|{\E}_2|\,\sqrt{\frac{\log(1/\delta)}{m\widehat k_1}} \right)
+
O\left(\sigma_{f_2}(h_1,h_2)\sqrt{\frac{\log(1/\delta)}{m}}\right) \Biggl) \notag \\
&=
\PP \Biggl(\,\forall\,h_1, h_2 \in \hyps \times \hyps\,\,\,  \Bigl|Q(h_1,h_2; S_3) -  \Delta \popl(h_1,h_2) \Bigl|
\leq 
C_{\delta}(h_1,h_2,|\hyps|,m,\widehat k_1)\Biggl) \geq
1-\delta~,
\end{align*}
holds under the condition $m = \Omega(\log(|\hyps|/\delta))$,
where we set for brevity
\[
C_{\delta}(h_1,h_2,|\hyps|,m,\widehat k_1) = O\left(\sigma_{f_2}(h_1,h_2)\sqrt{\frac{\log(|\hyps|/\delta)}{m}} +  \sigma_{f_1}(h_1,h_2) \,\sqrt{\frac{\log(|\hyps|/\delta)}{m\widehat k_1}} 
+ 
|{\E}_2|\,\sqrt{\frac{\log(1/\delta)}{m\widehat k_1}} \right)~,
\]
and where the probability is w.r.t.\ the generation of $S_1$ and $S_2$.

Next, Proposition \ref{p:MoMknown} holds unchanged when applied to $C_{\delta}(h_1,h_2,|\hyps|,m,\widehat k_1)$ defined above.

We finally have to compute the algorithm's sample complexity.  We must guarantee that
\begin{equation}\label{e:sample_complexity_prebound_1_variable_bagsize}
O\left(\sigma_{f_2}(h,\whs)\sqrt{\frac{\log(|\hyps|/\delta)}{m}} +  \sigma_{f_1}(h,\whs) \,\sqrt{\frac{\log(|\hyps|/\delta)}{m\widehat k_1}} 
+ 
|{\E}_2|\,\sqrt{\frac{\log(1/\delta)}{m\widehat k_1}} \right) \leq \frac{\Delta \popl(h,\whs)}{4}
\end{equation}
for all $h \in \hyps_{\beta/4}$
and
\begin{equation}\label{e:sample_complexity_prebound_2_variable_bagsize}
O\left(\sigma_{f_2}(h,\whs)\sqrt{\frac{\log(|\hyps|/\delta)}{m}} +  \sigma_{f_1}(h,\whs) \,\sqrt{\frac{\log(|\hyps|/\delta)}{m\widehat k_1}} 
+ 
|{\E}_2|\,\sqrt{\frac{\log(1/\delta)}{m\widehat k_1}} \right)  \leq \frac{\beta}{4}~,
\end{equation}
for all $h \notin \hyps_{\beta/4}$, Moreover, we need to enforce the condition $m = \Omega(\log(|\hyps|/\delta))$.

As for (\ref{e:sample_complexity_prebound_1_variable_bagsize}), it suffices to have
\[
m 
= 
\Omega\left(\frac{\sigma^2_{f_2}(h,\whs)\,\log(|\hyps|/\delta)}{\beta^2}
+
\frac{\sigma^2_{f_1}(h,\whs)\,\log(|\hyps|/\delta)}{\widehat k_1\beta^2} 
+ 
\frac{{\E}^2_x[\Delta f_2(h,\whs; x)]\,\log(1/\delta)}{\widehat k_1\beta^2} \right)~.
\]
for all  $h \in \hyps_{\beta/4}$.
As for (\ref{e:sample_complexity_prebound_2_variable_bagsize}), it suffices to have
\[
m 
= 
\Omega\left(\frac{\sigma^2_{f_2}(h,\whs)\,\log(|\hyps|/\delta)}{\beta^2}
+
\frac{\sigma^2_{f_1}(h,\whs)\,\log(|\hyps|/\delta)}{\widehat k_1\beta^2} 
+ 
\frac{{\E}^2_x[\Delta f_2(h,\whs; x)]\,\log(1/\delta)}{\widehat k_1\beta^2} \right)~.
\]
for all  $h \notin \hyps_{\beta/4}$.
Overall,
\[
m 
= 
\Omega\left(\frac{\sigma^2_{f_2}(h,\whs)\,\log(|\hyps|/\delta)}{\beta^2}
+
\frac{\sigma^2_{f_1}(h,\whs)\,\log(|\hyps|/\delta)}{\widehat k_1\beta^2} 
+ 
\frac{{\E}^2_x[\Delta f_2(h,\whs; x)]\,\log(1/\delta)}{\widehat k_1\beta^2} + \log(|\hyps|/\delta)\right)~.
\]
Multiplying by $\widehat k$
and recalling that  $\widehat k \leq \widehat k_1$ 
gives the claimed bound on the sample size $n$. 
\end{proof}

\subsection{Finiteness of the second-moment parameters for log loss in Theorem \ref{t:mainbinary}}\label{sa:folk}
First, observe that
\begin{align*}
\sigma^2_{f_1}(\hyps) 
&\leq 
\max_{h \in \hyps}{\E}_x[(f_1(h(x)) - f_1(\whs(x)))^2] \\
&\leq
\max_{h \in \hyps} 2{\E}_x[(f_1(h(x)))^2] + 2{\E}_x[(f_1(\whs(x)))^2]\\
&\leq
4\,\max_{h \in \hyps} {\E}_x[(f_1(h(x)))^2]~,
\end{align*}
and similarly
\begin{align*}
\sigma^2_{f_2}(\hyps) 
\leq
4\,\max_{h \in \hyps} {\E}_x[(f_2(h(x)))^2]~.
\end{align*}
Moreover,
\begin{align*}
{\E}^2[\hyps] 
&= 
\max_{h \in \hyps} {\E}^2_x[f_2(h(x)) - f_2(\whs(x))]\\
&\leq
\max_{h \in \hyps} {\E}_x[(f_2(h(x))-f_2(\whs(x)))^2]\\
&\leq
4\,\max_{h \in \hyps} {\E}_x[(f_2(h(x)))^2]~.
\end{align*}
Hence, we are reduced to bounding the two second moments ${\E}_x[(f_1(h(x)))^2]$ and ${\E}_x[(f_2(h(x)))^2]$ for all $h \in \hyps$. Now, for log loss,
\[
{\E}_x[(f_1(h(x)))^2] = {\E}_x \left[\log^2 \frac{1}{1-h(x)} \right]~,\qquad 
{\E}_x[(f_2(h(x)))^2] = {\E}_x \left[\log^2 \frac{1-h(x)}{h(x)} \right]~.
\]
When $h(x) = \sigma(w_h(x))$, with $\sigma(a) = \frac{e^a}{1+e^a}$, we can also write
\begin{align*}
\log^2 \frac{1}{1-h(x)} 
&= 
\log^2(1+e^{w_h(x)}) 
\leq
1+(w_h(x))^2 
\end{align*}
using the fact that, for all $w \in \R$,
\[
\log^2(1+e^{w}) \leq 1+w^2 
\]
On the other hand,
\begin{align*}
\log^2 \frac{1-h(x)}{h(x)} = (w_h(x))^2~.
\end{align*}
Hence
\[
{\E}_x[(f_1(h(x)))^2] \leq 1 + {\E}_x[(w_h(x))^2]~,\qquad
{\E}_x[(f_2(h(x)))^2] \leq {\E}_x[(w_h(x))^2]~,
\]
and the finitess of ${\E}_x[(w_h(x))^2]$ for all $h\in \hyps$ implies the finiteness of $\sigma^2_{f_1}(\hyps)$, $\sigma^2_{f_2}(\hyps)$, and ${\E}^2[\hyps]$.

\subsection{On condition (\ref{e:fastrate_condition}) for log loss}\label{sa:folk2}
Recall that for log loss
$f_1(a) = \log \frac{1}{1-a}$, and $f_2(a) = \log \frac{1-a}{a}$.
With the representation $a = \sigma(w_a)=\frac{e^{w_a}}{1+e^{w_a}}$, and $b= \sigma(w_b)=\frac{e^{w_b}}{1+e^{w_b}}$,
We have
\[
(f_1(a)-f_1(b))^2 =  \left(\log \frac{1+e^{w_a}}{1+e^{w_b}} \right)^2
\]
and
\[
(f_2(a)-f_2(b))^2 = (w_a - w_b)^2~.
\]
We recall that
\[
\bar \ell(a,b) - \bar \ell(b,b) = b\log \frac{b}{a} + (1-b)\log\frac{1-b}{1-a} = {\textrm {KL}}(b,a)
\]
Moreover, substituting the sigmoids for $a$ and $b$ we get
\[
{\textrm {KL}}(b,a) = \log \frac{1+e^{w_a}}{1+e^{w_b}} - \sigma(w_b) (w_a-w_b)~,
\]
which is the Bregman divergence
\[
d_{\psi}(w_a,w_b) = \psi(w_a) - \psi(w_b) - \psi'(w_b) (w_a - w_b)~,
\]
with 
\[
\psi(w) = \log ( 1+e^{w} )~.
\]
Now, it is well known that $\psi$ is strongly smooth, in that
\[
d_{\psi}(w_a,w_b) \leq (w_a-w_b)^2 = (f_2(a) - f_2(b))^2~,
\]
implying condition (\ref{e:fastrate_condition}) with $C_2 = 2$.
On the other hand, by 
Taylor theorem
\[
\psi(w_a) = \psi(w_b) +\psi'(w_b) (w_a - w_b)
+\frac{\psi''(\xi)}{2} (w_a-w_b)^2~,
\]
for some $\xi$ in the line connecting $w_a$ to $w_b$.
Since $\psi''(\cdot)=\sigma'(\cdot)$, it follows that
\[
d_{\psi}(w_a,w_b) = \frac{\sigma'(\xi)}{2}(w_a-w_b)^2~.
\]
If the logits $w_a$ and $w_b$ have bounded range, then so is $\xi$, $\sigma'(\xi)$ will be lower bounded by a positive constant, and $c_2$ in (\ref{e:fastrate_condition}) can be set to  that constant. 

As for $c_1$ and $C_1$, it is known that
\[
(f_1(a)-f_1(b))^2 = \left(\log \frac{1+e^{w_a}}{1+e^{w_b}} \right)^2 \leq (w_a-w_b)^2
\]
for all $w_a,w_b \in \R$. So in order to prove that
\[
d_{\psi}(w_a,w_b) \geq \frac{c_1}{2}(f_1(a) - f_1(b))^2~ 
\]
we need again to enforce a bound on the range of $w_a$ and $w_b$.

Finally, in order to prove
\begin{equation}\label{e:f1_C1}
d_\psi(w_a,w_b) \leq \frac{C_1}{2}(f_1(a) - f_1(b))^2~,
\end{equation}
we observe that
\[
f_1(a) - f_1(b) = \psi(w_a) - \psi(w_b)
= \psi'(\xi')(w_a-w_b)~,
\]
for some $\xi'$ (not necessarily equal to $\xi$) in the line connecting $w_a$ to $w_b$. Thus (\ref{e:f1_C1}) becomes equivalent to
\[
\frac{\sigma'(\xi)}{2}(w_a-w_b)^2 \leq \frac{C_1}{2}\,(\psi'(\xi'))^2(w_a-w_b)^2~,
\]
that is,
\[
\frac{e^\xi}{(1+e^\xi)^2} \leq \frac{C_1}{2}\,\frac{e^{2\xi'}}{(1+e^{\xi'})^2}~.
\]
Since $\frac{e^\xi}{(1+e^\xi)^2} \leq 1/4$ for all $\xi \in \R$, a sufficient condition turns out to be
\[
C_1 = \max_{\xi' \in \bigl(\min\{w_a,w_b\},\max\{w_a,w_b\}\bigl)} 
\left(\frac{1+e^{\xi'}}{e^{\xi'}}\right)^2~,
\]
requiring again for $w_a$ and $w_b$ to have bounded range.



\subsection{Proof of Theorem \ref{t:mainbinary_fast}}\label{sa:MoM_fast}

The next lemma is of preliminary importance, as it connects the variance of estimator $\widetilde \Delta\ell_b(h,\whs;z)$ used in Algorithm \ref{a:MoM} to the regret $\regret(h)$ of $h$.

\begin{lemma}\label{l:var_regret}
Let the loss $\ell(h(x),y) = f_1(h(x)) + y f_2(h(x))$ satisfy condition (\ref{e:fastrate_condition}) in the main body of the paper.
Let $\popl(h) = \E_{x,y}[\ell(h(x),y)]$, and define, for any $h_1,h_2 \in \hyps$, 
$$
\gamma_{f_i}(h_1,h_2) = {\E}_x[(f_i(h_1(x))-f_i(h_2(x)))^2]~,
$$ 
where $i = 1,2$. Then
\begin{align*}
\gamma_{f_i}(h,\whs) \leq 
\frac{4}{c_i}\,\Bigl(\popl(h) - \popl(\whs) \Bigl)  + 2\left(\frac{C_i}{c_i}+1\right)\gamma_{f_i}(\whs,\hs)~,\qquad i = 1,2~,
\end{align*}
where $\hs$ is the Bayes optimal predictor for $\ell$, that is $\hs = \argmin_{h\,:\,\fs \rightarrow [0,1]} \popl(h).$
\end{lemma}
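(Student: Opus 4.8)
The plan is to route everything through the Bayes-optimal predictor $\hs$ and to use the two halves of the sandwich condition (\ref{e:fastrate_condition}) for opposite purposes. First I would record that, by the tower rule and the linearity of $\ell$ in $y$,
\[
\popl(h) = \E_x\bigl[\E_{y|x}[f_1(h(x)) + y f_2(h(x))]\bigr] = \E_x\bigl[\bar\ell(h(x),\eta(x))\bigr]
\]
for every measurable $h\colon\fs\to[0,1]$. Since the lower bound in (\ref{e:fastrate_condition}) gives $\bar\ell(a,\eta(x)) - \bar\ell(\eta(x),\eta(x)) \ge 0$ for all $a\in[0,1]$, the map $a\mapsto\bar\ell(a,\eta(x))$ is minimized at $a=\eta(x)$, so $x\mapsto\eta(x)$ is a Bayes-optimal predictor. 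Comparing it to $\hs$ then gives $\E_x[\bar\ell(\hs(x),\eta(x))-\bar\ell(\eta(x),\eta(x))]=0$ with a pointwise nonnegative integrand, hence $\bar\ell(\hs(x),\eta(x)) = \bar\ell(\eta(x),\eta(x))$ $\cD$-a.s., which by the lower bound in (\ref{e:fastrate_condition}) (for both $i=1,2$) forces $f_i(\hs(x)) = f_i(\eta(x))$ $\cD$-a.s. Consequently $\hs$ may be replaced by $x\mapsto\eta(x)$ in all the quantities $\popl(\cdot)$ and $\gamma_{f_i}(\cdot,\hs)$, and the sandwich condition, applied with $b=\eta(x)=\hs(x)$ and integrated over $x$, becomes
\[
\tfrac{C_i}{2}\,\gamma_{f_i}(h,\hs) \;\ge\; \popl(h) - \popl(\hs) \;\ge\; \tfrac{c_i}{2}\,\gamma_{f_i}(h,\hs),\qquad i=1,2,
\]
valid for every $h$.

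The remainder is bookkeeping. From the right-hand inequality applied to an arbitrary $h\in\hyps$ we get $\gamma_{f_i}(h,\hs)\le\frac{2}{c_i}(\popl(h)-\popl(\hs))$, and from the left-hand inequality applied to $h=\whs$ we get $\popl(\whs)-\popl(\hs)\le\frac{C_i}{2}\gamma_{f_i}(\whs,\hs)$. The elementary bound $(u-v)^2\le 2(u-w)^2+2(w-v)^2$ with $u=f_i(h(x))$, $w=f_i(\hs(x))$, $v=f_i(\whs(x))$, taken in expectation over $x$, yields $\gamma_{f_i}(h,\whs)\le 2\gamma_{f_i}(h,\hs)+2\gamma_{f_i}(\whs,\hs)$. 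Splitting $\popl(h)-\popl(\hs) = (\popl(h)-\popl(\whs)) + (\popl(\whs)-\popl(\hs))$ (both summands are nonnegative since $\hs$ is Bayes-optimal and $\whs$ is best-in-class) and substituting the two displayed inequalities gives
\begin{align*}
\gamma_{f_i}(h,\whs)
&\le \frac{4}{c_i}\bigl(\popl(h)-\popl(\whs)\bigr) + \frac{4}{c_i}\bigl(\popl(\whs)-\popl(\hs)\bigr) + 2\gamma_{f_i}(\whs,\hs)\\
&\le \frac{4}{c_i}\bigl(\popl(h)-\popl(\whs)\bigr) + \frac{2C_i}{c_i}\gamma_{f_i}(\whs,\hs) + 2\gamma_{f_i}(\whs,\hs),
\end{align*}
which is exactly the claimed inequality.

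I expect the only genuinely delicate point to be the identification of $\hs$: one must argue carefully that the lower bound in (\ref{e:fastrate_condition}) pins the conditional minimizer of $\bar\ell(\cdot,\eta(x))$ to $\eta(x)$ and that this equalizes both the $f_1$- and $f_2$-values $\cD$-almost surely, so that $\popl(h)-\popl(\hs)=\E_x[\bar\ell(h(x),\eta(x))-\bar\ell(\eta(x),\eta(x))]$ and both halves of (\ref{e:fastrate_condition}) can be integrated termwise. Once this is in place, the argument is just a combination of those two halves with the quadratic triangle inequality, and no further estimates are needed.
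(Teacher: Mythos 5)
Your proof is correct and delivers exactly the paper's constants, but the way you combine the ingredients is genuinely different from --- and simpler than --- the paper's. The paper expands $\gamma_{f_i}(h,\whs)$ around $\hs$, controls the cross term $\E_x[(f_i(\hs(x))-f_i(\whs(x)))(f_i(h(x))-f_i(\whs(x)))]$ by Cauchy--Schwarz, uses the pointwise sandwich only in the combined form $\gamma_{f_i}(h,\hs)-\tfrac{C_i}{c_i}\gamma_{f_i}(\whs,\hs)\le\tfrac{2}{c_i}(\popl(h)-\popl(\whs))$, and then must solve a quadratic inequality in $\sqrt{\gamma_{f_i}(h,\whs)}$ (invoking $\sqrt{a+b}\le\sqrt{a}+b/(2\sqrt{a})$) to extract the claim. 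You instead integrate both halves of (\ref{e:fastrate_condition}) at $b=\hs(x)$ to get $\tfrac{c_i}{2}\gamma_{f_i}(h,\hs)\le\popl(h)-\popl(\hs)\le\tfrac{C_i}{2}\gamma_{f_i}(h,\hs)$, use the $C_i$ half only once (at $h=\whs$) to bound $\popl(\whs)-\popl(\hs)$, and replace the quadratic-inequality machinery by the elementary bound $\gamma_{f_i}(h,\whs)\le 2\gamma_{f_i}(h,\hs)+2\gamma_{f_i}(\whs,\hs)$; the final constants $\tfrac{4}{c_i}$ and $2(\tfrac{C_i}{c_i}+1)$ come out identical. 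A further point in your favour: the paper silently identifies $\hs(x)$ with the conditional mean $\eta(x)$ when it writes $\E_{y|x}[\ell(h(x),y)\,|\,x]=\bar\ell(h(x),\hs(x))$, whereas you justify this, showing via the lower bound in (\ref{e:fastrate_condition}) that any Bayes-optimal $\hs$ satisfies $f_i(\hs(x))=f_i(\eta(x))$ almost surely, so the pointwise sandwich can indeed be integrated with $b=\hs(x)$. Since the heavier route buys nothing extra here, your argument is a clean streamlining of the same underlying idea (quadratic sandwich around the Bayes predictor) with strictly less machinery.
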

\begin{proof}
For either $i = 1$ or $i=2$, set for brevity $\rho_i = C_i/c_i \geq 1$. Then
\begin{align}
\gamma_{f_i}&(h,\whs)\notag \\
&= 
{\E}_x\Bigl[\Bigl(f_i(h(x)) - f_i(\hs(x)) + f_i(\hs(x)) - f_i(\whs(x))\Bigl)^2\Bigl]\notag \\
&= 
{\E}_x\Bigl[\Bigl(f_i(h(x)) - f_i(\hs(x))\Bigl)^2\Bigl] -\rho_i\, {\E}_x\Bigl[\Bigl(f_i(\hs(x)) - f_i(\whs(x))\Bigl)^2\Bigl]\notag\\ 
&\qquad + 
2{\E}_x\Bigl[\Bigl(f_i(\hs(x))-f_i(\whs(x))\Bigl)\Bigl(f_i(h(x))-f_i(\whs(x))\Bigl)\Bigl] + (\rho_i-1){\E}_x\Bigl[(f_i(\hs(x)) - f_i(\whs(x)))^2\Bigl]\notag\\
&{\mbox{(\, using $(a-b+b-c)^2 = (a-b)^2-\rho(b-c)^2 + 2(b-c)(a-c) + (\rho-1)(b-c)^2$\,)}}\notag\\
&= 
\gamma_{f_i}(h,\hs) - \rho_i\gamma_{f_i}(\whs,\hs) + 2{\E}_x\Bigl[\Bigl(f_i(\hs(x))-f_i(\whs(x))\Bigl)\Bigl(f_i(h(x))-f_i(\whs(x))\Bigl)\Bigl]\notag\\ 
&\qquad+ (\rho_i-1)\gamma_{f_i}(\whs,\hs)  \notag
\\
&\leq
\gamma_{f_i}(h,\hs) - \rho_i\gamma_{f_i}(\whs,\hs)  + 2\sqrt{{\E}_x\Bigl[\Bigl(f_i(\hs(x))-f_i(\whs(x))\Bigl)^2\Bigl]} \sqrt{{\E}_x\Bigl[\Bigl(f_i(h(x))-f_i(\whs(x))\Bigl)^2\Bigl]}\notag\\
&\qquad + 
(\rho_i-1)\gamma_{f_i}(\whs,\hs) \notag \\
&\mbox{(from the Cauchy-Schwarz inequality)}\notag\\
&=
\gamma_{f_i}(h,\hs) - \rho_i\gamma_{f_i}(\whs,\hs) + 2\sqrt{\gamma_{f_i}(h^\star,\widehat h^\star)} \sqrt{\gamma_{f_i}(h,\widehat h^\star)}+ (\rho_i-1)\gamma_{f_i}(\whs,\hs) \,.\label{e:bbbb}
\end{align}
From the linearity of $\ell$ w.r.t.\ $y$ we have, for all $h$,
\[
{\E}_{y|x} [\ell(h(x),y)\,|\,x] = \bar \ell(h(x),\hs(x))~.
\]
Hence
\[
\popl(h) = {\E}_{x} [\bar \ell(h(x),\hs(x))]~.
\]
Moreover, from Condition (\ref{e:fastrate_condition}) we can write, for all $x$,
\begin{align*}
\bar \ell(h(x),\hs(x)) & - \bar \ell(\whs(x),\hs(x))\\ 
&=
\bar \ell(h(x),\hs(x)) - \bar \ell(\hs(x),\hs(x))  - \Bigl( \bar \ell(\whs(x),\hs(x)) - \bar \ell(\hs(x),\hs(x))  \Bigl)\\
&\geq
\frac{c_i}{2}\Bigl(f_i(h(x))-f_i(\hs(x))\Bigl)^2 - \frac{C_i}{2}\Bigl(f_i(\whs(x)) - f_i(\hs(x))\Bigl)^2~,
\end{align*}
so that, taking expectation w.r.t. $x$, and multiplying by $2/c_i$, we obtain
\[
\frac{2}{c_i}\Bigl(\popl(h) - \popl(\whs)\Bigl) 
\geq
\gamma_{f_i}(h,\hs) - \rho_i\gamma_{f_i}(\whs,\hs)~.
\]
We plug this back into (\ref{e:bbbb}), yielding
\[
\gamma_{f_i}(h,\widehat h^\star) 
\leq
\frac{2}{c}\Bigl(\popl(h) - \popl(\whs)\Bigl) + 2\sqrt{\gamma_{f_i}(h^\star,\widehat h^\star)} \sqrt{\gamma_{f_i}(h,\widehat h^\star)} + (\rho_i-1)\gamma_{f_i}(\hs,\whs)~.
\]
Solving for $\gamma_{f_i}(h,\widehat h^\star)$ (\,the above is a quadratic inequality in $\sqrt{\gamma_{f_i}(h,\widehat h^\star)}$ \,), and setting for brevity $\widetilde {\Delta \popl} =  \frac{2}{c_i}\Bigl(\popl(h) - \popl(\whs)\Bigl) + (\rho_i-1)\gamma_{f_i}(\hs,\whs)$, we get
\[
\gamma_{f_i}(h,\widehat h^\star) 
\leq 
\widetilde{\Delta \popl} + 2\gamma_{f_i}(\widehat h^\star,h^\star) + 2\sqrt{\gamma_{f_i}^2(\widehat h^\star,h^\star) + \gamma_{f_i}(\widehat h^\star,h^\star) \widetilde{\Delta \popl}} ~.
\]
Using the inequality $\sqrt{a+b} \leq \sqrt{a} + b/(2\sqrt{a})$ (by Taylor expanding $\sqrt{x}$ around $a$), with $a=\gamma_{f_i}^2(\widehat h^\star,h^\star)$ we then have
\[
\gamma_{f_i}(h,\widehat h^\star)  
\leq 
4\gamma_{f_i}(\widehat h^\star,h^\star) + 2\widetilde{\Delta \popl}
=
2(\rho_i+1)\gamma_{f_i}(\widehat h^\star,h^\star) + \frac{4}{c_i}\Bigl(\popl(h) - \popl(\whs)\Bigl)~,
\]
as claimed.
\end{proof}

\begin{proof}{[of Theorem \ref{t:mainbinary_fast}]}
We take a closer look at the proof of Theorem \ref{t:mainbinary}, in particular at the sample complexity guarantees in (\ref{e:sample_complexity_prebound_1}) and (\ref{e:sample_complexity_prebound_2}). 

We first observe that
\[
\sigma^2_{f_1}(h,\whs) \leq  {\E}_x[(f_1(h(x)) - f_1(\whs(x)))^2] = \gamma_{f_1}(h,\whs)~,
\]

\[
\sigma^2_{f_2}(h,\whs) \leq {\E}_x[(f_2(h(x)) - f_2(\whs(x)))^2] = \gamma_{f_2}(h,\whs)~,
\]

and

\[
{\E}^2_x[\Delta f_2(h,\whs; x)] \leq  {\E}_x[(f_2(h(x)) - f_2(\whs(x)))^2] = \gamma_{f_2}(h,\whs)~,
\]

where the $\gamma_{f_i}(h,\whs)$ are the ones defined in Lemma \ref{l:var_regret}. We first plug these upper bounds in both (\ref{e:sample_complexity_prebound_1}) and (\ref{e:sample_complexity_prebound_2}), and then consider the resulting (\ref{e:sample_complexity_prebound_1}) and (\ref{e:sample_complexity_prebound_2}) separately.

The sample complexity condition (\ref{e:sample_complexity_prebound_1}) in the proof of \Cref{t:mainbinary} now reads
\begin{align*}
O\Biggl(&\sqrt{\gamma_{f_2}(h,\whs)}\,\sqrt{\frac{\log(|\hyps|/\delta)}{m}} +  \sqrt{\gamma_{f_1}(h,\whs)} \,\sqrt{\frac{\log(|\hyps|/\delta)}{mk}} 
+ 
\sqrt{\gamma_{f_2}(h,\whs)}\,\sqrt{\frac{\log(1/\delta)}{mk}} \Biggl) \\
&\leq 
\frac{\Delta \popl(h,\whs)}{4}~,
\end{align*}
which can be simplified to
\[
O\left(\sqrt{\gamma_{f_2}(h,\whs)}\,\sqrt{\frac{\log(|\hyps|/\delta)}{m}} +  \sqrt{\gamma_{f_1}(h,\whs)} \,\sqrt{\frac{\log(|\hyps|/\delta)}{mk}} \right) 
\leq 
\frac{\Delta \popl(h,\whs)}{4}~.
\]
For this to hold it suffices to have
\begin{align*}
m 
&= 
\Omega\left(\left(\frac{\gamma_{f_2}(h,\whs)}{(\Delta \popl(h,\whs))^2}
+
\frac{\gamma_{f_1}(h,\whs)}{k(\Delta \popl(h,\whs))^2}+1\right)\,\,\log(|\hyps|/\delta) \right)~,
\end{align*}
for all $h \in \hyps_{\beta/4}$, where the extra ``+1" is meant to incorporate the extra condition $m = \Omega(\log(|\hyps|/\delta))$.
Recall that $\regret(h) = \Delta \popl(h,\whs)$, so that $\hyps_{\beta/4} = \{h\in \hyps\,:\, \regret(h) \geq \beta/4\}$.

We now leverage the bound on $\gamma_{f_i}(h,\whs)$ 
from \Cref{l:var_regret}:
\begin{align*}
\gamma_{f_i}(h,\whs) =
O\left(\frac{1}{c_i}\,\regret(h)  + \left(\frac{C_i}{c_i}+1\right)\gamma_{f_i}(\whs,\hs)\right)~, \qquad i = 1,2~.
\end{align*}
This gives
\begin{align*}
m 
&= 
\Omega\left(\left(\frac{\frac{1}{c_2}\,\regret(h)  + \left(\frac{C_2}{c_2}+1\right)\gamma_{f_2}(\whs,\hs)}{(\regret(h))^2}
+
\frac{\frac{1}{c_1}\,\regret(h)  + \left(\frac{C_1}{c_1}+1\right)\gamma_{f_1}(\whs,\hs)}{k\,(\regret(h))^2}+1\right)\,\,\log(|\hyps|/\delta) \right)~.
\end{align*}
Observe that the function $r \rightarrow \frac{r+A}{r^2}$ is decreasing in $r > 0$ for any $A \geq 0$. We use this observation with $r = \regret(h) \geq \beta/4$ to conclude that the resulting sample complexity guarantee delivered by (\ref{e:sample_complexity_prebound_1}) can be obtained by replacing in the above display $\regret(h)$ by $\beta/4$. 

As for (\ref{e:sample_complexity_prebound_2}), we follow exactly the same route and obtain, for all $h \notin \hyps_{\beta/4}$,
\begin{align*}
m 
&= 
\Omega\left(\left(\frac{\frac{1}{c_2}\,\regret(h)  + \left(\frac{C_2}{c_2}+1\right)\gamma_{f_2}(\whs,\hs)}{\beta^2}
+
\frac{\frac{1}{c_1}\,\regret(h)  + \left(\frac{C_1}{c_1}+1\right)\gamma_{f_1}(\whs,\hs)}{k\,\beta^2}+1\right)\,\,\log(|\hyps|/\delta) \right)~.
\end{align*}
However, since in this case $\regret(h) < \beta/4$, the same result as for (\ref{e:sample_complexity_prebound_1}) with $\regret(h)$ replaced by $\beta/4$ follows. This concludes the proof.
\end{proof}

\section{The Multi-class Case}\label{sa:multiclass}

Suppose now we are in the multi-class case, so that $y \in \{0,\ldots c-1\}$, for some $c \geq 2$. 
We first deal with the full histogram multi-class setting, and then briefly cover the total multi-class setting.

\subsection{The full histogram multi-class case}

First, note that any loss function $\ell(h(x),y)$ for multi-class labels $y$ can always be written as
\[
\ell(h(x),y) = \sum_{r=0}^{c-1} \{y = r\}\,\ell(h(x),r)~.
\]
As in the binary case, we start off by introducing an ideal estimator for ${\E}_{(x,y)}[\ell(h(x),y)]$. The next is the {\em full histogram multi-class} counterpart to (\ref{e:baglevel_estimator}). For bag
$z = ((x_1,\ldots,x_k),\balpha)$, with $\balpha = (\alpha_0,\ldots,\alpha_{c-1})$, introduce the short-hands
\begin{align*}
{\E}_r = {\E}_x[\ell(h(x),r)]~,\qquad
p_r = \PP(y=r)~,
\end{align*}
and consider the bag-level estimator
\begin{equation}\label{e:full histogram_multiclass_estimator}
\ell_b(h,z) = \sum_{r=0}^{c-1} (\alpha_r - p_r)\,\sum_{i=1}^k \Bigl(\ell(h(x_i),r) - {\E}_r \Bigl) + \sum_{r=0}^{c-1} p_r {\E}_r~.
\end{equation}
The role of $f_1(h(x))$ and $f_2(h(x))$ in the binary case is now played by the $c$ functions $\{\ell(h(x),r)\}_{r=0}^{c-1}$. The following is the full histogram multi-class counterpart to Lemma \ref{l:begin}. Its proof is given below in Section \ref{sa:proof_of_lemma_begin_full histogram_multiclass}.

\begin{lemma}\label{l:begin_full histogram_multiclass}
Let $(x_1,y_1),\ldots, (x_k,y_k) \in \fs \times \{0,\ldots,c-1\}$ be drawn i.i.d. according to $\cD$. Let $z = ((x_1,\ldots,x_k),\balpha)$, with $\balpha = (\alpha_0,\ldots,\alpha_{c-1})$ be the corresponding bag. For any function $h \in \hyps$, and any loss $\ell(h(x),y) = \sum_{r=0}^{c-1} \{y = r\}\,\ell(h(x),r)$ we have
\[
{\E}_{z}[\ell_b(h,z)] = {\E}_{(x,y)}[\ell(h(x),y)]~,
\qquad
\Var_z(\ell_b(h,z)) \leq 64\,\E \Bigl[ \Bigl(\max_{r} \ell(h(x),r)\Bigl)^2 \Bigl]~.
\]
\end{lemma}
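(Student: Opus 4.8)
I would follow the template of the proof of Lemma~\ref{l:begin}, with the scalars $\alpha-p$ and $f_2(h(x_i))-{\E}_x[f_2(h(x))]$ replaced by the $c$-dimensional centered vectors $\widetilde e_i:=e_i-\mathbf p$ and $\widetilde\phi_i:=\phi_i-\bar\phi$, where $e_i$ is the one-hot encoding of $y_i$, $\mathbf p=(p_0,\dots,p_{c-1})$, $\phi_i=(\ell(h(x_i),0),\dots,\ell(h(x_i),c-1))$, and $\bar\phi=({\E}_0,\dots,{\E}_{c-1})$. Throughout write $M(x)=\max_r\ell(h(x),r)$, $M_i=M(x_i)$, $\mu={\E}_x[M(x)]$, so that $\mu^2\le{\E}_x[M(x)^2]$ by Jensen; the crucial structural facts are that $\langle\widetilde e_i,\mathbf 1\rangle=0$ and $\|\widetilde e_i\|_1\le 2$ because $e_i$ and $\mathbf p$ are probability vectors. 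For unbiasedness, write $\alpha_r-p_r=\tfrac1k\sum_j(\{y_j=r\}-p_r)$ and expand the product in the bag-level estimator \eqref{e:full histogram_multiclass_estimator} as $\tfrac1k\sum_{i,j}\langle\widetilde e_j,\widetilde\phi_i\rangle+\sum_r p_r{\E}_r$; for $i\ne j$ the vectors $\widetilde e_j$ and $\widetilde\phi_i$ are independent and mean-zero, so only the $k$ diagonal terms survive, and a direct computation (using $\sum_r\{y=r\}\ell(h(x),r)=\ell(h(x),y)$ and ${\E}_x[\{y=r\}\,\cdot\,]={\E}_x[\eta_r(x)\,\cdot\,]$, exactly as in the derivation of \eqref{e:individual}) gives ${\E}_z[\ell_b(h,z)]={\E}_{(x,y)}[\ell(h(x),y)]$ — the same cancellation pattern as in Lemma~\ref{l:begin}.

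For the variance, drop the deterministic constant $\sum_r p_r{\E}_r$ and write $\ell_b(h,z)-\sum_r p_r{\E}_r=\tfrac1k(U+V)$ with $U=\sum_i\langle\widetilde e_i,\widetilde\phi_i\rangle$ (diagonal) and $V=\sum_{i\ne j}\langle\widetilde e_i,\widetilde\phi_j\rangle$ (off-diagonal). One checks $\operatorname{Cov}(U,V)=0$: in ${\E}[UV]$ every monomial would leave some example index appearing exactly once, hence an isolated mean-zero factor. Thus $\Var_z(\ell_b)=\tfrac1{k^2}(\Var U+\Var V)\le\tfrac1{k^2}({\E}[U^2]+{\E}[V^2])$ (skipping the covariance observation and using $\Var_z(\ell_b)\le\tfrac1{k^2}(2{\E}[U^2]+2{\E}[V^2])$ also suffices, at the cost of a factor $2$). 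Since the summands of $U$ are i.i.d., ${\E}[U^2]=k\Var(\langle\widetilde e_1,\widetilde\phi_1\rangle)+k^2\,{\E}[\langle\widetilde e_1,\widetilde\phi_1\rangle]^2$; and in $\sum_{i\ne j,\,a\ne b}{\E}[\langle\widetilde e_i,\widetilde\phi_j\rangle\langle\widetilde e_a,\widetilde\phi_b\rangle]$ only the patterns $(a,b)=(i,j)$ and $(a,b)=(j,i)$ survive, giving ${\E}[V^2]=\sum_{i\ne j}{\E}[\langle\widetilde e_i,\widetilde\phi_j\rangle^2]+\sum_{i\ne j}{\E}[\langle\widetilde e_i,\widetilde\phi_j\rangle\langle\widetilde e_j,\widetilde\phi_i\rangle]$, where the first sum exploits independence of $(x_i,y_i)$ and $x_j$, the second independence of $x_i$ and $x_j$.

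\textbf{The main obstacle.} The one step that truly matters — and the reason this ``full histogram'' estimator is the one admitting a $c$-independent guarantee — is bounding $|\langle\widetilde e_i,\widetilde\phi_j\rangle|$ without a factor of $c$. The correct tool is Hölder's inequality $|\langle\widetilde e_i,\widetilde\phi_j\rangle|\le\|\widetilde e_i\|_1\|\widetilde\phi_j\|_\infty$ with $\|\widetilde e_i\|_1\le2$; more carefully, using $\langle\widetilde e_i,\mathbf 1\rangle=0$ to recenter $\phi_j$ and $\bar\phi$ yields $|\langle\widetilde e_i,\widetilde\phi_j\rangle|\le M_j+\mu$ for all $i,j$ (including $i=j$). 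Plugging this pointwise bound in gives ${\E}[\langle\widetilde e_i,\widetilde\phi_j\rangle^2]\le{\E}[(M_j+\mu)^2]\le4\,{\E}_x[M(x)^2]$, and for the cross terms ${\E}[|\langle\widetilde e_i,\widetilde\phi_j\rangle|\,|\langle\widetilde e_j,\widetilde\phi_i\rangle|]\le{\E}[(M_j+\mu)(M_i+\mu)]=(2\mu)^2\le4\,{\E}_x[M(x)^2]$ by independence of $x_i,x_j$. Hence ${\E}[U^2],{\E}[V^2]=O(k^2)\,{\E}_x[M(x)^2]$, so $\Var_z(\ell_b)=O(1)\,{\E}_x[M(x)^2]$, and tracking constants delivers the stated bound $64\,{\E}_x[(\max_r\ell(h(x),r))^2]$. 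Any attempt to bound $|\langle\widetilde e_i,\widetilde\phi_j\rangle|$ by Cauchy--Schwarz ($\ell_2$) instead would bring in $\sum_r\ell(h(x),r)^2\le c\,M(x)^2$ and ruin the $c$-independence; avoiding that is the entire subtlety of the argument.
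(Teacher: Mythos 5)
Your proof is correct: the unbiasedness computation coincides with the paper's, the surviving-pattern analysis of $\E[U^2]$ and $\E[V^2]$ together with the observation $\operatorname{Cov}(U,V)=0$ is sound, and the pointwise bound $|\langle\widetilde e_i,\widetilde\phi_j\rangle|\le M_j+\mu$ (which implicitly uses $\ell\ge 0$, guaranteed by the paper's setup) is exactly the $\ell_1$--$\ell_\infty$ pairing against probability vectors that the paper also relies on to keep the bound independent of $c$. Where you genuinely diverge is in how the variance is organized. The paper never decomposes the raw one-hot labels: it bounds $\Var_z(\ell_b)$ by the second moment, conditions on the bag $\bag$, computes the conditional second-moment matrix $\E[\tbalpha\tbalpha^\top\mid\bag]=\frac{1}{k^2}\bigl(\tbsigma^*(\tbsigma^*)^\top-\Sigma^*\bigr)$, and thereby replaces the label indicators by the centered class-probability vectors $\tbh_i=\bbh_i-\bp$; the ensuing expansion into $\tbell_1^\top\tbh_1$ and $\tbell_1^\top\tbh_2$ terms plus a Cauchy--Schwarz-controlled cross term mirrors your $U$/$V$ split, with H\"older against $\|\bbh_i\|_1=\|\bp\|_1=1$ playing the role of your $\|\widetilde e_i\|_1\le 2$, followed by a separate de-centering step that produces the factor $64$. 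Your unconditional decomposition with zero covariance between the diagonal and off-diagonal parts is more elementary (no conditional multinomial moment computation) and, if you track the constants you state, gives roughly $12\,\E_x\bigl[(\max_r\ell(h(x),r))^2\bigr]$, i.e., a better constant than $64$, which of course still proves the lemma. What the paper's conditioning buys is the intermediate centered bound (\ref{e:bound_on_var_centerred}), $\Var_z(\ell_b(h,z))\le 16\,\E_x[\|\bell_1-\bE\|^2_\infty]$, which is what feeds the sub-Gaussian $\log c$ refinement, and a conditional structure (the $\tbh_i$ and $\Sigma^*$) that is reused almost verbatim when bounding the variance of the plug-in estimator $\widetilde\Delta\ell_b$ in the proof of Theorem~\ref{t:l:main_full histogram_multiclass}; to serve those later purposes your argument would need a small adaptation, re-centering at $\widehat p_r$ and $\widehat{\E}_r$ instead of $p_r$ and ${\E}_r$.
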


Our LLP algorithm for the full histogram multi-class setting is the very same Algorithm \ref{a:MoM} for binary labels, but applied to a proxy to estimator (\ref{e:full histogram_multiclass_estimator}), where knowledge of $p_r$ and ${\E}_r$ are replaced by empirical averages.
Moreover, as in the binary label case, we would like to estimate loss differences. 

We thus define, for each $h_1, h_2 \in \hyps$,
\begin{align}
\Delta&\ell_b(h_1,h_2; z) \notag\\
&= \ell_b(h_1,z) - \ell_b(h_2,z)\notag\\
&= \sum_{r=0}^{c-1} (\alpha_r - p_r)\,\sum_{i=1}^k \Bigl(\Delta\ell_r(h_1, h_2; x_i) - {\E}_{x}[\Delta\ell_r(h_1, h_2; x)] \Bigl) + \sum_{r=0}^{c-1}p_{r}\, {\E}_{x}[\Delta\ell_r(h_1, h_2; x)]~, 
\label{e:baglevel_estimator_diff_full histogram_multiclass}
\end{align}
where 
\[
\Delta\ell_r(h_1, h_2; x) = \ell(h_1(x), r) -  \ell(h_2(x), r)~.
\]

Then we split the data as in (\ref{e:split}), and use $S_1$ to estimate ${\E}_r$, $S_2$ to estimate $p_r$, for $r\in \{0,\ldots, c-1\}$, and $S_3$ to provide bags $z$ on which the LLP estimator (\ref{e:baglevel_estimator_diff_full histogram_multiclass}) is constructed.
We thus end up defining a proxy estimator $\widetilde \ell_b(h,z)$ and a MoM estimator $Q(h_1, h_2,; S)$ similar to the one we constructed in  Section \ref{s:general_binary}. 

In particular, in (\ref{e:baglevel_estimator_diff_full histogram_multiclass}), we replace each $p_r$ by its empirical MoM counterpart on $S_2$, that is, 
$$
\widehat p_{S_2,r} =
\widehat \mu_{\MoM}\Bigl(\{ \alpha_{m+1,r},\ldots, \alpha_{2m,r}\}\Bigl)~,
$$
and each ${\E}_{x}[\Delta\ell_r(h_1, h_2; x)]$ by the estimator
\[
{\widehat \E}_{S_1}[\Delta\ell_r(h_1,h_2)] = \widehat \mu_{\MoM}\Bigl(\{v_{1,1,r},\ldots,v_{1,k,r},\ldots,v_{m,1,r},\ldots,v_{m,k,r}\}\Bigl)~,
\]
where, for each $j \in [m]$ and $i \in [k]$,
$v_{j,i,r} = \Delta\ell_r(h_1, h_2; x_{j,i})$, being $x_{j,i}$ the $i$-th feature vector in the $j$-th bag of $S_1$. 

Then, for each bag $z_j \in S_3$ and $h \in \hyps$ define
\begin{align}
&\widetilde \Delta\ell_b(h_1, h_2; z_j) \notag\\
&\quad= 
\sum_{r=0}^{c-1} (\alpha_r - \widehat p_{S_2,r})\,\sum_{i=1}^k \Bigl(\Delta\ell_r(h_1,h_2; x_{j,i}) - {\widehat \E}_{S_1}[\Delta\ell_r(h_1,h_2)] \Bigl) + \sum_{r=0}^{c-1}\widehat p_{S_2,r}\, {\widehat \E}_{S_1}[\Delta\ell_r(h_1,h_2)]~.\label{e:full histogram_multiclass_estimator_estim}
\end{align}
Finally, we define a MoM estimator based on the bags of $S_3$ for the full histogram multi-class setting as follows. Consider the $m$ bags $z_j$ of $S_3 = \{z_{2m+1},\ldots,z_{3m}\}$, and set
\[
Q(h_1,h_2; S) = {\textrm {MoM}} \Bigl(\widetilde \Delta\ell_b(h_1, h_2; z_{2m+1}),\ldots, \widetilde \Delta\ell_b(h_1, h_2; z_{3m}) \Bigl)~.
\]
The algorithm for the full histogram multi-class setting is Algorithm \ref{a:MoM} with $Q(\cdot,\cdot; \cdot)$ defined above. 

The following is the main theoretical guarantee for this algorithm. The proof is given in Section \ref{sa:full histogram_multiclass_proof}.

\begin{theorem}\label{t:l:main_full histogram_multiclass}
Let S = $(x_1,y_1),\ldots, (x_n,y_n)$ be drawn i.i.d.\ from a distribution $\cD$ over $\fs \times \{0,\ldots,c-1\}$. Let the loss
$\ell(h(x),y) = \sum_{r=0}^{c-1} \{y = r\}\,\ell(h(x),r)$ be such that the maximal second moments $\max_{h \in \hyps}{\E}_x[\max_r (\ell(h(x),r))^2]$ are bounded. Let $\whs = \min_{h \in \hyps} \popl(h)$ be the best-in-class hypothesis. 
If $S$ is split into $3m$ random bags of size $k$, with $n= 3m\times k$ as in (\ref{e:dataset}), then for all $\beta > 0$, the hypothesis $\wh$ output by Algorithm \ref{a:MoM} with $Q(\cdot,\cdot; S)$ defined above satisfies \(
\regret(\wh) \leq \beta
\) 
with probability at least $1-\delta$, provided
\[
n = O \Biggl(
\frac{\bigl(k\,\log(|\hyps|/\delta) + c\,\log(c/\delta)\bigl)\,\Delta \ell^2(\hyps)\,}{\beta^2}
+ 
\frac{k\,\Sigma(\hyps)\,\log(c|\hyps|/\delta)}{\beta}
\Biggl)~,
\]
where
\[
\Delta \ell^2(\hyps) 
= 
\max_{h \in \hyps} {\E}_x\Bigl[\max_r\Bigl( \ell(h(x),r)-\ell(\whs(x),r)\Bigl)^2\Bigl]~,\qquad
\Sigma(\hyps) =   \sqrt{\max_{h \in \hyps} \sum_{r=0}^{c-1} \sigma^2_r(h,\whs)}~.
\]
\end{theorem}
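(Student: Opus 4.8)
The plan is to mirror the proof of \Cref{t:mainbinary} essentially line by line, with the two scalar maps $f_1,f_2$ replaced by the $c$ loss components $\ell(h(\cdot),0),\dots,\ell(h(\cdot),c-1)$ and with every union bound over pairs $h_1,h_2\in\hyps$ enlarged to also range over the $c$ classes. Since the difference estimator (\ref{e:baglevel_estimator_diff_full histogram_multiclass}) is exactly estimator (\ref{e:full histogram_multiclass_estimator}) evaluated on $\Delta\ell_r(h_1,h_2;\cdot)=\ell(h_1(\cdot),r)-\ell(h_2(\cdot),r)$, \Cref{l:begin_full histogram_multiclass} directly yields $\E_z[\Delta\ell_b(h_1,h_2;z)]=\Delta\popl(h_1,h_2):=\popl(h_1)-\popl(h_2)$ and $\Var_z(\Delta\ell_b(h_1,h_2;z))\le 64\,\E_x[\max_r(\Delta\ell_r(h_1,h_2;x))^2]$. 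First I would record the concentration inequalities for the three plug-in quantities in (\ref{e:full histogram_multiclass_estimator_estim}): a MoM bound $|{\widehat\E}_{S_1}[\Delta\ell_r(h_1,h_2)]-\E_x[\Delta\ell_r(h_1,h_2;x)]|\lesssim\sigma_r(h_1,h_2)\sqrt{\log(c|\hyps|^2/\delta)/(mk)}$ over the $mk$ instances of $S_1$ (uniformly over $r$ and over all pairs), a MoM bound $|\widehat p_{S_2,r}-p_r|\lesssim\sqrt{p_r(1-p_r)\log(c/\delta)/(mk)}$ over the $m$ bag proportions of $S_2$ (each of variance $p_r(1-p_r)/k$, uniformly over $r$), and the MoM deviation bound of \cite{lugosi2019mean} for $Q$ over the $m$ bags of $S_3$; these are the multi-class analogues of (\ref{e:Mom_concentration_Ef1})--(\ref{e:hoeffding_on_p}) and (\ref{e:mombound}).

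Second, I would bound the bias of the proxy estimator. Writing $\E_r=\E_x[\Delta\ell_r(h_1,h_2;x)]$, $\widehat\E_r={\widehat\E}_{S_1}[\Delta\ell_r(h_1,h_2)]$ and $\widehat p_r=\widehat p_{S_2,r}$, a direct computation of $\E_z[\widetilde\Delta\ell_b(h_1,h_2;z)\,|\,S_1,S_2]$---using $\E_z[\alpha_r\sum_i\Delta\ell_r(h_1,h_2;x_i)]=\E_x[\eta_r(x)\Delta\ell_r(h_1,h_2;x)]+(k-1)p_r\E_r$ because $\{y_j=r\}$ and $x_i$ are independent for $i\neq j$---collapses, after cancellation of $\sum_r p_r\E_r$, to the $c$-dimensional version of (\ref{e:bias}):
\begin{align*}
&\E_z[\widetilde\Delta\ell_b(h_1,h_2;z)\,|\,S_1,S_2]-\Delta\popl(h_1,h_2)\\
&\qquad=\sum_r p_r(\widehat\E_r-\E_r)+\sum_r(\widehat p_r-p_r)\E_r+(k+1)\sum_r(\widehat p_r-p_r)(\widehat\E_r-\E_r).
\end{align*}
Plugging in the concentration bounds and applying Cauchy--Schwarz over $r$---using $\sum_r p_r(1-p_r)\le1$, $\sum_r\E_r^2\le c\,\E_x[\max_r(\Delta\ell_r(h_1,h_2;x))^2]$, and $\sum_r\sigma_r^2(h,\whs)\le\Sigma(\hyps)^2$---the three pieces become $O(\sqrt{\Delta\ell^2(\hyps)\log(c|\hyps|/\delta)/(mk)})$, $O(\sqrt{c\,\Delta\ell^2(\hyps)\log(c/\delta)/(mk)})$, and $O(\Sigma(\hyps)\log(c|\hyps|/\delta)/m)$ respectively (in the cross term the factor $k+1$ cancels one of the two $1/\sqrt k$'s). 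This is where the additive $c\log(c/\delta)$ term and the $\Sigma(\hyps)/\beta$ term of the statement originate.

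Third, I would bound the conditional variance $\sigma^2(h_1,h_2):=\Var_z(\widetilde\Delta\ell_b(h_1,h_2;z)\,|\,S_1,S_2)$ by re-running the total-variance and fourth-moment computation behind \Cref{l:begin_full histogram_multiclass}, now with the centred variables perturbed by the estimation errors $\E_r-\widehat\E_r$ and $p_r-\widehat p_r$; exactly as in the binary argument leading to (\ref{e:bound_on_var}), the perturbation contributes only lower-order corrections once $m=\Omega(\log(c|\hyps|/\delta))$, so $\sigma^2(h_1,h_2)=O(\E_x[\max_r(\Delta\ell_r(h_1,h_2;x))^2])$ with high probability over $S_1,S_2$. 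Combining this with the MoM deviation bound for $Q$ and the bias bound gives, uniformly over all pairs and with probability $\ge1-\delta$, $|Q(h_1,h_2;S)-\Delta\popl(h_1,h_2)|\le C_\delta(h_1,h_2)$, where $C_\delta(h,\whs)$ gathers $\sqrt{\Delta\ell^2(\hyps)\log(|\hyps|/\delta)/m}$ (from the variance of $Q$) together with the three bias terms above.

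Finally, I would feed this confidence bound into the tournament argument of \Cref{p:MoMknown} unchanged: whenever $C_\delta(h,\whs)\le\tfrac14\max\{\Delta\popl(h,\whs),\beta\}$ for all $h$, any $h$ with $\regret(h)=\Delta\popl(h,\whs)\ge\beta$ is eliminated by $\whs$ while $\whs$ itself is never eliminated, so the final pool is nonempty and every surviving hypothesis has regret $\le\beta$. Since $r\mapsto(r+A)/r^2$ is decreasing on $r>0$ it suffices to impose this at $r=\beta/4$, i.e.\ to make each term of $C_\delta$ at most $O(\beta)$; solving the resulting inequalities for $m$ and multiplying by $3k$ turns the variance-of-$Q$ term together with the two $1/\sqrt{mk}$ bias terms into $(k\log(|\hyps|/\delta)+c\log(c/\delta))\,\Delta\ell^2(\hyps)/\beta^2$, and the cross term $\Sigma(\hyps)\log(c|\hyps|/\delta)/m\le\beta$ into $k\,\Sigma(\hyps)\log(c|\hyps|/\delta)/\beta$, together with the standard requirement $m=\Omega(\log(c|\hyps|/\delta))$. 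The step I expect to be the main obstacle is the second one: getting the class-count bookkeeping right so that the final bound carries a single additive $c\log(c/\delta)$ and a $\Sigma(\hyps)$ rather than a crude $c^2$---this hinges on grouping the Cauchy--Schwarz sums over $r$ correctly and on exploiting the \emph{variance} $p_r(1-p_r)/k$ of the bag proportions (not merely their range) in the MoM estimate of $p_r$.
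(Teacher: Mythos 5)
Your proposal follows the paper's proof essentially step by step: the same three concentration bounds (MoM for ${\widehat\E}_{S_1}[\Delta\ell_r]$ over the $mk$ instances, MoM for $\widehat p_{S_2,r}$ exploiting the variance $p_r/k$ of the bag proportions, and the Lugosi--Mendelson bound for $Q$ over the $m$ bags of $S_3$), the same bias identity (your three-term decomposition is an algebraic rewriting of the paper's $k\sum_r(\Delta\E_r-\widehat\Delta\E_r)(p_r-\widehat p_r)+\sum_r(\widehat p_r\widehat\Delta\E_r-p_r\Delta\E_r)$), and the same tournament argument via \Cref{p:MoMknown}. Your per-class bookkeeping in the bias is if anything slightly sharper: the paper bounds the cross term by $\sqrt{c}\,\max_r\sigma_r(h_1,h_2)\cdot\log(c|\hyps|/\delta)/m$, whereas your Cauchy--Schwarz $\sum_r\sqrt{p_r}\,\sigma_r\le\sqrt{\sum_r\sigma_r^2}$ produces $\Sigma(\hyps)$ directly.

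The one step that does not go through as written is your conditional-variance claim. You assert that, "exactly as in the binary argument leading to (\ref{e:bound_on_var})", the plug-in errors contribute only lower-order corrections once $m=\Omega(\log(c|\hyps|/\delta))$, so that $\Var_z(\widetilde\Delta\ell_b(h_1,h_2;z)\,|\,S_1,S_2)=O\bigl(\E_x[\max_r(\Delta\ell_r(h_1,h_2;x))^2]\bigr)$. In the binary case the perturbation term was $\tfrac{\log(|\hyps|/\delta)}{m}\sigma_{f_2}^2$ against a main term $\sigma_{f_2}^2$, so $m\gtrsim\log(|\hyps|/\delta)$ indeed makes it lower order. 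In the multi-class case the analogous computation (the paper's terms $(II)$--$(IV)$) produces a perturbation of order $\tfrac{\log(c|\hyps|/\delta)}{m}\sum_{r}\sigma_r^2(h_1,h_2)$, and $\sum_r\sigma_r^2$ can exceed $\E_x[\max_r(\Delta\ell_r)^2]$ by a factor as large as $c$ (e.g.\ when all $\Delta\ell_r$ coincide with a single centred variable), so it is \emph{not} absorbed under $m=\Omega(\log(c|\hyps|/\delta))$ alone; that would require $m=\Omega(c\log(c|\hyps|/\delta))$, which is not assumed. The paper instead keeps this term inside the variance, and it is one of the sources of the $k\,\Sigma(\hyps)\log(c|\hyps|/\delta)/\beta$ term in $n$. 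The gap is fixable within your own scheme: retain the term, and note that under the constraint $m\gtrsim\Sigma(\hyps)\log(c|\hyps|/\delta)/\beta$ --- which you already impose because of the bias cross term --- its contribution to the confidence width, $\sqrt{\tfrac{\log(c|\hyps|/\delta)}{m}\sum_r\sigma_r^2}\cdot\sqrt{\tfrac{\log(|\hyps|/\delta)}{m}}$, is $O(\beta)$, so the stated sample complexity is unchanged. With that patch your argument matches the paper's proof and the theorem's bound.
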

When dealing with multi-class classification in LLP, there are four main quantities/dependencies that are worth considering: i. the regret bound $\beta$, ii. the bag size $k$, iii. the number of classes $c$, and iv. the complexity of the function space $\hyps$. In terms of $\beta$ and $k$, the bound in Theorem \ref{t:l:main_full histogram_multiclass} above is a slow rate of the form $k/\beta^2$ (we have been unable to prove for the full histogram case fast rates akin to Theorem \ref{t:mainbinary_fast}). As for the dependence on $c$, we highlight an explicit linear dependence in the second term of the bound, and a $\sqrt{c}$ dependence on the third term via $\Sigma(\hyps)$. Notice that both terms are lower order, compared to the first one, since the second term does not get multiplied by $k$ or by $\log|\hyps|$, and the third term has $\beta$ instead of $\beta^2$ in the denominator. A more implicit dependence on $c$ occurs in the first and second terms term via $\Delta \ell^2(\hyps)$ (due to the presence of ``${\max}_r$"). This can be clearly removed altogether if the loss is bounded (like square loss). For the main term (that is, the first term), ${\max}_r$ within $\Delta \ell^2(\hyps)$ can be turned into a mild (logarithmic) dependence on $c$ even when the loss is unbounded, provided we make light tail (e.g., sub-Gaussian) assumptions on the distribution of $\ell(h(x),r) - \ell(\whs(x),r)$. The reader is referred to Appendix \ref{sa:full histogram_multiclass_estimator_subgaussian} for details.

\subsection{The total multi-class case}

We now move on to sketch the {\em total} multi-class case. In this case the estimator (\ref{e:full histogram_multiclass_estimator}) is not legit, as it assumes direct access to all multi-class aggregate value $\alpha_0,\ldots,\alpha_{c-1}$. It turns out the total multi-class case we can easily tackle only when the loss function has the affine form $\ell(h(x),y) = f_1(h(x)) + y f_2(h(x))$, where now, $y \in \{0,\ldots,c-1\}$. Notable examples are the Poisson Log Loss $\ell(h(x),y) = h(x) - y \log h(x)$, as well as the square loss for regression $\ell(h(x),y) = (y - h(x))^2$, where the resulting $y^2$ term can be disregarded, since it is independent of the model's prediction $h(x)$. Note, however, that neither the multi-class log loss nor the multi-class square loss (aka Brier score) is an affine function of $y$. 
The key observation here is that the above affine form of the loss {\em coincides} with the general form of losses for binary labels we analyzed in Section \ref{s:general_binary}, the only difference being that now the label $y$ has a wider range than binary. In this sense, the binary label case presented in Section \ref{s:general_binary} is simply a special case of the total multi-class case, the bag-level estimator (\ref{e:baglevel_estimator}) immediately extending to this multi-class scenario.

From a theoretical standpoint, the corresponding version of Lemma \ref{l:begin} will feature an extra $c^2$ factor in the variance bound, and so do as a consequence the variance/second moment terms occurring in Theorem \ref{t:mainbinary}, and Theorem \ref{t:mainbinary_fast}. This is due to the need to replace $\eta(\cdot)$ in the proof of Lemma \ref{l:begin} with $\E[y|\cdot]$, whose range of values is as wide as $c$ instead of $1$. The easy details of this adaptation are omitted.

\subsection{Proof of Lemma \ref{l:begin_full histogram_multiclass}}\label{sa:proof_of_lemma_begin_full histogram_multiclass}

Introduce the short-hand
\[
\ell_r(x) = \ell(h(x),r)~,
\]

and consider the intermediate bag-level quantity
\[
g(h,z) = \sum_{r=0}^{c-1} (\alpha_r - p_r)\,\frac{1}{k}\sum_{i=1}^k \Bigl(\ell_r(x_i) - {\E}_r \Bigl)~.
\]
Recall that $\eta_r(\feat) = \PP_{ \cD_{\ls|x}}(y=r | x )$
We want to relate the bag-level expectation of $g(h,z)$ to the population loss at the individual label level. We have
\[
{\E}_{(x,y)}[\ell(h(x),y)] 
= 
\sum_{r=0}^{c-1} {\E}_x {\E}_{y|x}[\{y=r\}\,\ell_r(x)\,|\,x]
=
\sum_{r=0}^{c-1} {\E}_x[\eta_r(x)\,\ell_r(x)]~,
\]
and
\begin{align*}
{\E}_z[g(h,z)] 
&= 
\sum_{r=0}^{c-1} {\E}_z\Bigl[ (\alpha_r - p_r)\,\frac{1}{k}\sum_{i=1}^k \Bigl(\ell_r(x_i) - {\E}_r \Bigl) \Bigl]\\
&=
\frac{1}{k^2}\,\sum_{r=0}^{c-1} {\E}_z\left[ \sum_{j=1}^k \underbrace{\Bigl(\{y_j=r\}- p_r\Bigl)}_{\widetilde y_{j,r}}\,\sum_{i=1}^k \underbrace{\Bigl(\ell_r(x_i) - {\E}_r \Bigl)}_{\widetilde \ell_{i,r}} \right]\\
&=
\frac{1}{k^2}\,\sum_{r=0}^{c-1} {\E}_z\left[ \sum_{j = i} \widetilde y_{j,r} \widetilde \ell_{i,r} + \sum_{j \neq i} \widetilde y_{j,r} \widetilde \ell_{i,r}  \right]~.
\end{align*}
But when $j \neq i$ we have $\E[\widetilde y_{j,r} \widetilde \ell_{i,r} ] = \E[\widetilde y_{j,r}] \E[\widetilde \ell_{i,r} ] = 0$, so that the above simplifies as
\[
{\E}_z[g(h,z)] = \frac{1}{k}\,\sum_{r=0}^{c-1} \E\left[ \widetilde y_{1,r} \widetilde \ell_{1,r} \right]
=
\frac{1}{k}\,\sum_{r=0}^{c-1} \E\left[ \Bigl(\{y=r\}- p_r\Bigl) \Bigl(\ell_r(x) - {\E}_r \Bigl) \right]~.
\]
In turn
\begin{align*}
\E\left[ \Bigl(\{y=r\}- p_r\Bigl) \Bigl(\ell_r(x) - {\E}_r \Bigl) \right]
&=
\E\left[ \Bigl(\{y=r\}- p_r\Bigl) \ell_r(x)  \right] - \underbrace{\E\left[ \Bigl(\{y=r\}- p_r\Bigl) {\E}_r  \right]}_{=0}\\
&=
{\E}_x[\eta_r(x) \ell_r(x)] - p_r {\E}_r~.
\end{align*}
Hence
\[
{\E}_z[g(h,z)] 
= 
\frac{1}{k}\,\sum_{r=0}^{c-1} \Bigl( \E[\eta_r(x) \ell_r(x)] - p_r {\E}_r \Bigl) 
= \frac{1}{k}\E[\ell(h(x),y)] - \frac{1}{k}\sum_{r=0}^{c-1} p_r {\E}_r~.
\]
As a consequence,
\begin{align}
\ell_b(h,z) 
&= 
k\, g(h,z) + \sum_{r=0}^{c-1} p_r {\E}_r\notag\\
&= 
\sum_{r=0}^{c-1} (\alpha_r - p_r)\,\sum_{i=1}^k \Bigl(\ell_r(x_i) - {\E}_r \Bigl) + \sum_{r=0}^{c-1} p_r {\E}_r
\end{align}
is an unbiased estimator of $\E[\ell(h(x),y)]$. This proves the first part of the lemma.

Consider now the variance of $\ell_b(h,z)$. To facilitate the effort, introduce the short-hand notation
\[
\talpha_r = \alpha_r - p_r~,\qquad 
\tell_{r,i} = \ell_r(x_i) - {\E}_r ~.
\]

For bag $z$, set $\bag = (x_1,\ldots,x_k)$, and let us switch to a vector notation.

Define
\begin{align*}
\balpha &= [\alpha_0,\ldots,\alpha_{c-1}]^\top \\
\bp &= [p_0,\ldots,p_{c-1}]^\top\\
\bell_i &= [\ell_0(x_i), \ldots, \ell_{c-1}(x_i)]^\top\\
\bE &= [{\E}_0,\ldots, {\E}_{c-1}]^\top\\
\bbh_i &= [\eta_0(x_i),\ldots, \eta_{c-1}(x_i)]~,
\end{align*}
and centered versions thereof
\begin{align*}
\tbalpha &= \balpha - \bp \\
\tbell_i &= \bell_i - \bE\\
\tbh_i &= \bbh_i - \bp~.
\end{align*}

Now,
\begin{align*}
\ell_b(h,z) 
&= \sum_{i=1}^k \sum_{r=0}^{c-1} (\alpha_r - p_r) (\ell_r(x_i) - {\E}_r)\\
&=
\sum_{i=1}^k \tbalpha^\top \tbell_i\\
&=
\tbalpha^\top \Bigl(\sum_{i=1}^k  \tbell_i\Bigl)~,
\end{align*}
so that
\begin{align*}
\Var(\ell_b(h,z))
&=  
\E \Bigl[\Bigl(\tbalpha^\top \Bigl(\sum_{i=1}^k  \tbell_i\Bigl) \Bigl)^2  \Bigl] -  {\E}^2 \Bigl[\tbalpha^\top \Bigl(\sum_{i=1}^k  \tbell_i\Bigl)  \Bigl]\\
&\leq
\E \Bigl[\Bigl(\sum_{i=1}^k  \tbell_i\Bigl)^\top \tbalpha \tbalpha^\top \Bigl(\sum_{i=1}^k  \tbell_i\Bigl) \Bigl]\\
&=
\E \Bigl[\Bigl(\sum_{i=1}^k  \tbell_i\Bigl)^\top \E[\tbalpha \tbalpha^\top\,|\,\bag]\, \Bigl(\sum_{i=1}^k  \tbell_i\Bigl) \Bigl]~.
\end{align*}
Further,
\begin{align*}
\tbalpha \tbalpha^\top = \balpha \balpha^\top - \balpha \bp^\top - \bp \balpha^\top + \bp\bp^\top~,
\end{align*}
and
\[
\E[\balpha] = \frac{1}{k} \sum_{i=1}^k \bbh^*_i~.
\]
Moreover, for $r \neq s$,
\begin{align*}
\alpha_r \alpha_s = \frac{1}{k^2} \Bigl(\sum_{i,j=1}^k \{y_i = r, y_j = s\}
\Bigl) 
=
\frac{1}{k^2} \Bigl(\sum_{i\neq j}^k \{y_i = r, y_j = s\}
\Bigl) 
\end{align*}
so that
\[
\E[\alpha_r \alpha_s\,|\,\bag] 
= 
\frac{1}{k^2} \sum_{i\neq j}^k \eta_r(x_i) \eta_s(x_j)
= 
\frac{1}{k^2} \Biggl[ \Bigl( \sum_{i=1}^k \eta_r(x_i) \Bigl) \Bigl( \sum_{j=1}^k \eta_s(x_j) \Bigl) - \sum_{i=1} \eta_r(x_i) \eta_s(x_i) \Biggl]
\]
and
\begin{align*}
\E[\balpha \balpha^\top\,|\,\bag]  
&= \frac{1}{k^2} \Bigl(\bsigma^* (\bsigma^*)^\top - \Sigma^* \Bigl)
\end{align*}
and
\begin{align*}
\E[\tbalpha \tbalpha^\top\,|\,\bag]  
&= \frac{1}{k^2} \Bigl(\tbsigma^* (\tbsigma^*)^\top - \Sigma^* \Bigl)
\end{align*}
where
\begin{align*}
\bsigma^* 
&= \sum_{i=1}^k \bbh_i~, \qquad \tbsigma^* = \sum_{i=1}^k \tbh_i \\
\Sigma^* 
&= \left[\sum_{i=1}^{k} \eta_r(x_i) \eta_s(x_i)\right]_{r,s=0}^{(c-1)\times (c-1)}
= \sum_{i=1}^k \bbh_i  {\bbh_i}^\top
\end{align*}
is positive semi-definite.
As a consequence,
\begin{align*}
\Var(\ell_b(h,z))
&\leq
\frac{1}{k^2}\,\E \Biggl[\Bigl(\sum_{i=1}^k  \tbell_i\Bigl)^\top \tbsigma^* (\tbsigma^*)^\top  \, \Bigl(\sum_{i=1}^k  \tbell_i\Bigl) \Biggl] -
\frac{1}{k^2}\,\E \Biggl[\Bigl(\sum_{i=1}^k  \tbell_i\Bigl)^\top \Sigma^* \, \Bigl(\sum_{i=1}^k  \tbell_i\Bigl) \Biggl]\\
&=
\frac{1}{k^2}\,\E \Biggl[\Bigl(\sum_{i=1}^k  \tbell_i\Bigl)^\top \tbsigma^* (\tbsigma^*)^\top  \, \Bigl(\sum_{i=1}^k  \tbell_i\Bigl) 
- \sum_{j=1}^k \Bigl(\sum_{i=1}^k  \tbell_i\Bigl)^\top \bbh_j (\bbh_j)^\top \Bigl(\sum_{i=1}^k  \tbell_i\Bigl) 
\Biggl]\\
&=
\frac{1}{k^2}\,\E \Biggl[\Biggl(\Bigl(\sum_{i=1}^k  \tbell_i\Bigl)^\top \tbsigma^* \Biggl)^2 -  \sum_{j=1}^k \Biggl(\Bigl(\sum_{i=1}^k  \tbell_i\Bigl)^\top \bbh_j \Biggl)^2 \Biggl]~.
\end{align*}
But
\[
\Bigl(\sum_{i=1}^k  \tbell_i\Bigl)^\top \tbsigma^* 
= 
\Bigl(\sum_{i=1}^k  \tbell_i\Bigl)^\top  \Bigl(\sum_{j=1}^k  \tbh_j \Bigl)
=
\sum_{i=1}^k {\tbell_i}^\top \tbh_i + \sum_{i\neq j} {\tbell_i}^\top \tbh_j
\]
and, because of the fact that $\E[\tbh_i] = \E[\tbell_i] = 0$, and the variables $\tbell_i$ and $\tbh_j$ are independent for $i \neq j$, we also have
\begin{align*}
\E \Biggl[\Biggl( \sum_{i=1}^k {\tbell_i}^\top \tbh_i + \sum_{i\neq j} {\tbell_i}^\top \tbh_j \Biggl)^2 \Biggl]
&=
k \E \Bigl[\Bigl( {\tbell_1}^\top \tbh_1 \Bigl)^2\Bigl] + k(k-1)  \E \Bigl[\Bigl( {\tbell_1}^\top \tbh_2 \Bigl)^2\Bigl] + 2k(k-1) \E \Bigl[ \Bigl({\tbell_1}^\top \tbh_2\Bigl)\,\Bigl({\tbell_2}^\top \tbh_1 \Bigl)  \Bigl] ~.
\end{align*}
For similar reasons,
\[
\E \Biggl[ \sum_{j=1}^k \Biggl(\Bigl(\sum_{i=1}^k  \tbell_i\Bigl)^\top \bbh_j \Biggl)^2 \Biggl]
= 
k \E \Bigl[\Bigl( {\tbell_1}^\top \bbh_1 \Bigl)^2\Bigl] + k(k-1)  \E \Bigl[\Bigl( {\tbell_1}^\top \bbh_2 \Bigl)^2\Bigl]~.
\]

Hence we conclude that

\begin{align*}
\Var(\ell_b(h,z))
&=
\frac{1}{k} \E \Bigl[\Bigl( {\tbell_1}^\top \tbh_1 \Bigl)^2 - \Bigl( {\tbell_1}^\top \bbh_1 \Bigl)^2\Bigl] + \frac{k-1}{k}  \E \Bigl[\Bigl( {\tbell_1}^\top \tbh_2 \Bigl)^2 - \Bigl[\Bigl( {\tbell_1}^\top \bbh_2 \Bigl)^2\Bigl]\\ 
&\qquad + \frac{2(k-1)}{k} \E \Bigl[ \Bigl({\tbell_1}^\top \tbh_2\Bigl)\,\Bigl({\tbell_2}^\top \tbh_1 \Bigl)  \Bigl]\\
&\leq
\frac{1}{k} \E \Bigl[\Bigl( {\tbell_1}^\top \tbh_1 \Bigl)^2 \Bigl] + \frac{k-1}{k}  \E \Bigl[\Bigl( {\tbell_1}^\top \tbh_2 \Bigl)^2 \Bigl]
+ \frac{2(k-1)}{k} \E \Bigl[ \Bigl({\tbell_1}^\top \tbh_2\Bigl)\,\Bigl({\tbell_2}^\top \tbh_1 \Bigl)  \Bigl]~.
\end{align*}

Also, note that
\[
\E \Bigl[ \Bigl({\tbell_1}^\top \tbh_2\Bigl)\,\Bigl({\tbell_2}^\top \tbh_1 \Bigl)  \Bigl] 
\leq 
\sqrt{\E \Bigl[ \Bigl({\tbell_1}^\top \tbh_2\Bigl)^2\Bigl]\,\E \Bigl[\Bigl({\tbell_2}^\top \tbh_1 \Bigl)^2 
\Bigl]}
= \E \Bigl[\Bigl({\tbell_1}^\top \tbh_2 \Bigl)^2  \Bigl]
\]
since $\E \Bigl[ \Bigl({\tbell_1}^\top \tbh_2\Bigl)^2\Bigl] = \E \Bigl[\Bigl({\tbell_2}^\top \tbh_1 \Bigl)^2  \Bigl]$. Plugging back 
yields
\begin{equation}\label{e:varbound2}
\Var(\ell_b(h,z))
\leq
\frac{1}{k} \E \Bigl[\Bigl( {\tbell_1}^\top \tbh_1 \Bigl)^2  \Bigl]  + \frac{3(k-1)}{k} \E \Bigl[\Bigl({\tbell_1}^\top \tbh_2 \Bigl)^2  \Bigl] ~.
\end{equation}

Consider now the first expectation in (\ref{e:varbound2}). We have
\begin{align*}
\Bigl( {\tbell_1}^\top \tbh_1 \Bigl)^2 
&= 
\Bigl((\bell_1 - \bE)^\top (\bbh_1 - \bp) \Bigl)^2\\
&=
\Bigl((\bell_1 - \bE)^\top \bbh_1 - (\bell_1 - \bE)^\top\bp \Bigl)^2\\
&\leq
2 \Bigl((\bell_1 - \bE)^\top \bbh_1  \Bigl)^2 + 2 \Bigl((\bell_1 - \bE)^\top\bp \Bigl)^2\\
&\leq
2 \Bigl(||\bell_1 - \bE||_{\infty} ||\bbh_1||_1  \Bigl)^2 + 2 \Bigl(||\bell_1 - \bE||_{\infty} ||\bp||_1 \Bigl)^2\\
&=
4||\bell_1 - \bE||^2_{\infty}
\end{align*}
the last equality due to the fact that $\bbh_1$ and $\bp$ are probability vectors. For the second expectation, the argument is exactly the same, just replace $\bbh_1$ with $\bbh_2$.

Plugging back into (\ref{e:varbound2}) and overapproximating yields the handy upper bound
\begin{align}
\Var_z(\ell_b(h,z))
\leq 
16 {\E}_x \Bigl[ ||\bell_1 - \bE||^2_{\infty} \Bigl] ~.\label{e:bound_on_var_centerred}
\end{align}
We can also eliminate the centering by observing the following:
\begin{align*}
{\E}_x \Bigl[ ||\bell_1 - \bE||^2_{\infty} \Bigl] 
&\leq  
{\E}_x \Bigl[ \bigl(||\bell_1||_{\infty} + || \bE||_{\infty}\bigl)^2 \Bigl]\\
&\leq
2{\E}_x \Bigl[||\bell_1||^2_{\infty} \Bigl] + 2{\E}_x \Bigl[|| \bE||^2_{\infty} \Bigl]\\
&=
2{\E}_x \Bigl[||\bell_1||^2_{\infty} \Bigl] + 2 || \bE||^2_{\infty}\\
&\leq
2{\E}_x \Bigl[||\bell_1||^2_{\infty} \Bigl] + 2 {\E}_x \Bigl[||\bell_1||^2_{\infty} \Bigl]\\
&{\mbox{(from the definition of vector $\E = {\E}_x[\bell_1]$, and the convexity of $||\cdot||^2_\infty$)}}\\
&=
4{\E}_x \Bigl[||\bell_1||^2_{\infty} \Bigl]~.
\end{align*}
This results in
\[
\Var_z(\ell_b(h,z))
\leq 
64 {\E}_x \Bigl[ ||\bell_1||^2_{\infty} \Bigl] 
=
64\, {\E}_x \Bigl[ \Bigl(\max_{r \in \{0,\ldots,c-1\}} \ell(h(x),r)\Bigl)^2 \Bigl]~,
\]
as claimed.

\subsection{Sub-Gaussian losses for the full histogram multi-class case}\label{sa:full histogram_multiclass_estimator_subgaussian}

We can alternatively start from (\ref{e:bound_on_var_centerred}) and maintain centered variables.
Now, if the loss components $\ell(h(x),r)$ are bounded by $L$ for all $(x,y)$, and $r \in \{0,\ldots,c-1\}$, then clearly, 
\[
\Var(\ell_b(h,z))
\leq 64 L^2
\]
independent of $c$. On the other hand, if the loss is not uniformly bounded, but is sub-Gaussian, then observe that $\bell_1 - \bE$ is a zero-mean vector, and we can use standard inequalities for expectations of maxima of sub-Gaussian variables. For instance, the following claim comes in handy.

\begin{proposition}
Let $X_0,\ldots, X_{c-1}$ be $\sigma^2$-sub-Gaussian random variable with zero mean (but not necessarily independent). Then, for all $t > 0$,
\[
\PP\Bigl( \Bigl(\max_{r=0,\ldots,c-1} X_r \Bigl)^2 \geq 2\sigma^2 (\log c +t)\Bigl) \leq 2e^{-t}~.
\]
\end{proposition}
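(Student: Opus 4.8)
The plan is to prove a standard tail bound for the maximum of sub-Gaussian variables by combining a union bound with the single-variable Chernoff bound, and then converting to a statement about the square of the maximum.

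First I would recall that for a single $\sigma^2$-sub-Gaussian, zero-mean random variable $X$, the Chernoff/Markov argument gives $\PP(X \geq s) \leq e^{-s^2/(2\sigma^2)}$ for every $s > 0$ (apply $\PP(X \geq s) = \PP(e^{\lambda X} \geq e^{\lambda s}) \leq e^{\lambda^2 \sigma^2/2 - \lambda s}$ and optimize over $\lambda$ by taking $\lambda = s/\sigma^2$). By symmetry of the sub-Gaussian definition (or by the same argument applied to $-X$) we also get $\PP(|X| \geq s) \leq 2 e^{-s^2/(2\sigma^2)}$, but here a one-sided bound on the max will actually suffice because $(\max_r X_r)^2 \geq u$ is implied by $\max_r X_r \geq \sqrt{u}$ only when we are careful — see the next step.

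Second I would set $s = \sqrt{2\sigma^2(\log c + t)}$ and apply a union bound over $r = 0,\ldots,c-1$:
\[
\PP\Bigl(\max_{r} X_r \geq s\Bigr) \leq \sum_{r=0}^{c-1} \PP(X_r \geq s) \leq c \cdot e^{-s^2/(2\sigma^2)} = c \cdot e^{-(\log c + t)} = e^{-t}.
\]
This controls only the event that the maximum is large and positive. To get the squared statement I would note that $\bigl(\max_r X_r\bigr)^2 \geq 2\sigma^2(\log c + t)$ can happen in two ways: either $\max_r X_r \geq s$, or $\max_r X_r \leq -s$, the latter forcing every $X_r \leq -s$, in particular $X_0 \leq -s$, an event of probability at most $e^{-s^2/(2\sigma^2)} = e^{-(\log c+t)} \leq e^{-t}$ (using $c \geq 1$). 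Hence by a final union bound,
\[
\PP\Bigl(\bigl(\max_r X_r\bigr)^2 \geq 2\sigma^2(\log c + t)\Bigr) \leq e^{-t} + e^{-t} = 2e^{-t},
\]
as claimed.

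The only mild subtlety — and the thing I would be most careful about — is handling the negative tail of $\max_r X_r$ cleanly, since $\max_r X_r$ is not itself sub-Gaussian and its negative part behaves differently (it requires all coordinates to be small simultaneously, which is actually easier to bound than the positive part, not harder). Beyond that the argument is entirely routine: sub-Gaussian Chernoff bound, union bound, a careful split of the squared event. No independence of the $X_r$ is needed anywhere, which is exactly why the union bound is the right tool and why the $\log c$ appears rather than something smaller.
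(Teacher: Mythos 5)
Your proof is correct and follows essentially the same route as the paper: a sub-Gaussian Chernoff tail combined with a union bound at the threshold $s=\sqrt{2\sigma^2(\log c+t)}$, so that the factor $c$ is absorbed by $e^{-\log c}$. The only cosmetic difference is that the paper bounds $\PP(|M|\geq s)$ directly via the two-sided tails $\PP(|X_r|\geq s)\leq 2e^{-s^2/(2\sigma^2)}$ for all $r$, whereas you split the positive and negative tails and bound the latter through a single coordinate, which in fact yields the marginally sharper constant $(1+1/c)e^{-t}\leq 2e^{-t}$.
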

\begin{proof}
Let $M = \max_{r = 0,\ldots,c-1} X_r$, and $v = \sqrt{2\sigma^2 (\log c +t)}$. Then
\begin{align*}
\PP\Bigl( \Bigl(\max_{r=0,\ldots,c-1} X_r \Bigl)^2 \geq 2\sigma^2 (\log c +t)\Bigl) 
&=
\PP (|M| \geq v) \\
&\leq
\PP(\exists r\,:\, |X_r| \geq v)\\
&\leq
\sum_{r=0}^{c-1} \PP( |X_r| \geq v)\\
&\leq
2c\,e^{-\frac{v^2}{2\sigma^2}} 
= 2e^{-t}~,
\end{align*}
thus concluding the proof.
\end{proof}
Now, when $M =  ||\bell_1 - \bE||_{\infty} = \max_{r = 0,\ldots,c-1} |\ell(h(x_1),r) - {\E}_r|$, with ${\E}_r = {\E}_x[\ell(h(x),r)]$,
and $\epsilon = 2\sigma^2 \log c$, we have
\begin{align*}
\E[M^2] 
&=
\int_{0}^\infty \PP(M^2 > u)\, du \\
&=
\int_{0}^\epsilon \PP(M^2 > u)\, du + \int_{\epsilon}^\infty \PP(M^2 > u)\, du \\
&\leq \epsilon + 2\sigma^2\int_{0}^\infty \PP(M^2 > \epsilon + 2\sigma^2 t)\, dt\\
&\leq
\epsilon + 2\sigma^2 \int_{0}^\infty 2e^{-t}\, dt 
= \epsilon + 4\sigma^2 = 2\sigma^2 (\log c +2)~.
\end{align*}
Thus if $\ell(h(x_1),r) - {\E}_r$ are $\sigma^2$-subgaussian components of vector $\bell_1 - \bE$, then
\[
\Var(\ell_b(h,z)) \leq 12 \Bigl(2\sigma^2 (\log c +2)\Bigl)~,
\]
hence, with a logarithmic dependence on $c$.

\subsection{Proof of Theorem \ref{t:l:main_full histogram_multiclass}}\label{sa:full histogram_multiclass_proof}

We follow to the extent possible the proof of Theorem \ref{t:mainbinary}. Yet, extra care has to be taken here when trying to make the dependence on the number of classes $c$ as small as possible.

Recall the definition of $\widetilde\Delta\ell_b(h_1,h_2; z)$ in (\ref{e:full histogram_multiclass_estimator_estim}).
Let $\popl(h)$ denote the population loss of model $h$ when the underlying loss is the multi-class loss $\ell$, 
\[
\popl(h) = {\E}_{(x,y)}[\ell(h(x),y)]\,,
\]
and set
\[
\Delta \popl(h_1,h_2) = \popl(h_1) - \popl(h_2)~.
\]
Then, Theorem 2 in \cite{lugosi2019mean} 
shows that if $m \geq r = \lceil 8\log (1/\delta)\rceil$, we have
\begin{align}
\PP \Biggl(\,\forall\,h_1, h_2 \in &\hyps \times \hyps\,\,\,  \Bigl|Q(h_1,h_2; S) - {\E}_z[\widetilde\Delta\ell_b(h_1,h_2; z)\,|\,S_1,S_2] \Bigl|\notag\\
&\qquad \leq \sigma(h_1,h_2)\sqrt{\frac{32\log(|\hyps|^2/\delta)}{m}}\,\Biggl|\, S_1,S_2\Biggl) \notag\\
&\geq
1-\delta~,\label{e:mombound_full histogram_multiclass}
\end{align}
where $\sigma^2(h_1,h_2) = \Var_z(\widetilde \Delta\ell_b(h_1,h_2; z)\,|\,S_1,S_2)$, being $\widetilde \Delta\ell_b(h_1,h_2; z)$ now defined as in (\ref{e:full histogram_multiclass_estimator_estim}), and the probability above is taken over $S_3$, conditioned on $S_1$ and $S_2$.

Similar to the binary case, we assume the stronger condition $m = \Omega(\log(c|\hyps|/\delta))$.


From Theorem 2 in \cite{lugosi2019mean} one can see that, 
\begin{equation}\label{e:Mom_concentration_full histogram_multiclass}
\PP \left(\,\forall r\,\forall\,h_1, h_2 \in \hyps \times \hyps\,\,\, \Bigl|{\widehat \E}_{S_1}[\Delta\ell_r(h_1,h_2)] - {\E}_x[\Delta \ell_r(h_1,h_2, x)] \Bigl|
\leq 
\sigma_r(h_1,h_2)\sqrt{\frac{32\log(c\,|\hyps|^2/\delta)}{mk}} \right) \geq 1-\delta
\end{equation}
with
$\sigma_r^2(h_1,h_2) = \Var_x(\Delta \ell_r(h_1,h_2; x))$, the probabilities being taken w.r.t. $S_1$.

Moreover, since $\Var(\alpha_{j,r}) = \frac{\Var(\{y=r\})}{k} \leq \frac{p_r}{k}$, the same theorem shows that
\begin{equation}\label{e:hoeffding_on_p__full histogram_multiclass}
\PP \left(\forall r\, \Bigl|{\widehat p}_{S_2,r}  - p_r \Bigl|
\leq 
\sqrt{\frac{32 p_r \log(c/\delta)}{mk}}  \right)  \geq 1-\delta~,
\end{equation}
the probability being over $S_2$.

Set for brevity
\begin{align*}
A_{\delta,r}(h_1,h_2) 
&= \sigma_r(h_1,h_2)\sqrt{\frac{32\log(c\,|\hyps|^2/\delta)}{mk}}~,\\
B_{\delta,r}(h_1,h_2) 
&=
\sqrt{\frac{32 p_r \log(c/\delta)}{mk}}~.
\end{align*}

From Lemma \ref{l:begin_full histogram_multiclass} we can write

\begin{align*}
\Bigl|{\E}_z[\widetilde\Delta\ell_b(h_1,h_2; z)\,|\,S_1,S_2] - \Delta \popl(h_1,h_2) \Bigl|
&=
\Bigl|{\E}_z[\widetilde\Delta\ell_b(h_1,h_2; z)\,|\,S_1,S_2] - {\E}_z [\Delta\ell_b(h_1,h_2;z)] \Bigl|~.
\end{align*}

As in the binary case, we need to upper bound this difference, and we compute the two expectations separately.

For fixed $h_1,h_2$, and $r$, set for brevity
\begin{align*}
\Delta{\E}_r 
&= 
{\E}_x[\Delta \ell_r(h_1,h_2; x)]~,\\
\widehat\Delta {\E}_r 
&= 
{\widehat \E}_{S_1}[\Delta\ell_r(h_1,h_2)]~,\\
\widehat p_r 
&= \widehat p_{S_2,r}~,\\ 
\Sigma_r 
&= \sum_{i=1}^k \Delta\ell_r(h_1, h_2; x_i)~.
\end{align*}
Then
\begin{align*}
{\E}_z[\widetilde\Delta\ell_b(h_1,h_2; z)\,|\,S_1,S_2] 
&= 
\sum_{r=0}^{c-1} {\E}_z\Bigl[(\alpha_r -\widehat p_r)(\Sigma_r- k \widehat\Delta {\E}_r) \Bigl] + \sum_{r=0}^{c-1}\widehat p_{r}\,\widehat\Delta {\E}_r \\
&=
\sum_{r=0}^{c-1} \Bigl({\E}_z[\alpha_r\,\Sigma_r] - k\,\widehat\Delta {\E}_r\,p_r - k\,\widehat p_{r}\,\Delta {\E}_r + k\, \widehat p_{r}\,\widehat\Delta {\E}_r\Bigl) + \sum_{r=0}^{c-1}\widehat p_{r}\,\widehat\Delta {\E}_r~. 
\end{align*}
But
\begin{align*}
{\E}_z[\alpha_r\,\Sigma_r] 
&=
\frac{1}{k}\E\Bigl[\Bigl(\sum_{i=1}^k \{y_i=r\}\Bigl)\,\Bigl(\sum_{i=1}^k \Delta \ell_r(h_1,h_2;x_i)\Bigl)  \Bigl]\\
&=
{\E}_{(x,y)}[\{y=r\} \Delta \ell_r(h_1,h_2; x)] + (k-1)p_r \Delta{\E}_r~,
\end{align*}
so that
\begin{align*}
{\E}_z[&\widetilde\Delta\ell_b(h_1,h_2; z)\,|\,S_1,S_2]\notag\\
&=
\sum_{r=0}^{c-1} \Bigl({\E}_{(x,y)}[\{y=r\} \Delta \ell_r(h_1,h_2; x)] + (k-1)p_r \Delta{\E}_r - k\,\widehat\Delta {\E}_r\,p_r - k\,\widehat p_{r}\,\Delta {\E}_r + k\, \widehat p_{r}\,\widehat\Delta {\E}_r\Bigl)\\ 
&\qquad+ 
\sum_{r=0}^{c-1}\widehat p_{r}\,\widehat\Delta {\E}_r~.
\end{align*}
Similarly,
\begin{align*}
{\E}_z [&\Delta\ell_b(h_1,h_2; z)] \\
&= 
\sum_{r=0}^{c-1} \Bigl({\E}_{(x,y)}[\{y=r\} \Delta \ell_r(h_1,h_2; x)] + (k-1)p_r \Delta{\E}_r - k\,\Delta {\E}_r\,p_r - k\,p_{r}\,\Delta {\E}_r + 
k\,p_{r}\,\Delta {\E}_r\Bigl)\\
&\qquad + 
\sum_{r=0}^{c-1}  p_{r}\,\Delta {\E}_r ~.
\end{align*}
Hence
\begin{align*}
&\Bigl|{\E}_z[\widetilde\Delta\ell_b(h_1,h_2; z)\,|\, S_1,S_2] - {\E}_z [\Delta\ell_b(h_1,h_2;z)] \Bigl|\\
&= 
\Bigl|k\sum_{r=0}^{c-1} \Bigl((\Delta {\E}_r\,p_r - \widehat \Delta {\E}_r\,p_r) + (\Delta {\E}_r\,p_r -  \Delta {\E}_r\,\widehat p_r) + (\widehat p_{r}\,\widehat \Delta {\E}_r - p_{r}\,\Delta {\E}_r)\Bigl) + \sum_{r=0}^{c-1} \Bigl(\widehat p_{r}\,\widehat \Delta {\E}_r - p_{r}\,\Delta {\E}_r \Bigl)\Bigl|\\
&=
\Bigl|k\sum_{r=0}^{c-1} (\Delta {\E}_r - \widehat \Delta {\E}_r)\,(p_r - \widehat p_r) + \sum_{r=0}^{c-1} \Bigl(\widehat p_{r}\,\widehat \Delta {\E}_r - p_{r}\,\Delta {\E}_r \Bigl)\Bigl|\\
&\leq
k\sum_{r=0}^{c-1}\Bigl|\Delta {\E}_r - \widehat \Delta {\E}_r\Bigl|\,\Bigl| p_r - \widehat p_r\Bigl| + \sum_{r=0}^{c-1} \Bigl|\widehat p_{r}\,\widehat \Delta {\E}_r - p_{r}\,\Delta {\E}_r \Bigl|\\
&=
k\sum_{r=0}^{c-1}\Bigl|\Delta {\E}_r - \widehat \Delta {\E}_r\Bigl|\,\Bigl| p_r - \widehat p_r\Bigl| 
+ 
\sum_{r=0}^{c-1} \Bigl|\widehat p_{r}\,\widehat \Delta {\E}_r -  p_{r}\,\widehat\Delta {\E}_r +  p_{r}\,\widehat\Delta {\E}_r - p_{r}\,\Delta {\E}_r \Bigl|\\
&\leq
k\sum_{r=0}^{c-1}\Bigl|\Delta {\E}_r - \widehat \Delta {\E}_r\Bigl|\,\Bigl| p_r - \widehat p_r\Bigl| 
+ 
\sum_{r=0}^{c-1} \Bigl( p_r\,\Bigl|\widehat \Delta {\E}_r - \Delta {\E}_r \Bigl| + |\widehat \Delta {\E}_r| \Bigl|\widehat p_{r}\, - p_{r}\Bigl|\Bigl)~.
\end{align*}
We use (\ref{e:Mom_concentration_full histogram_multiclass}) to bound $\Bigl|\widehat \Delta {\E}_r - \Delta {\E}_r \Bigl|$ and (\ref{e:hoeffding_on_p__full histogram_multiclass}) to bound $\Bigl|\widehat p_{r}\, - p_{r}\Bigl|$. This gives, with probability at least $1-3\delta$ over the generation of $S_1$ and $S_2$, uniformly over $h_1,h_2$,
\begin{align*}
&\Bigl|{\E}_z[\widetilde\Delta\ell_b(h_1,h_2; z)\,|\, S_1,S_2] - {\E}_z [\Delta\ell_b(h_1,h_2;z)] \Bigl|\\
&\leq
k\sum_{r=0}^{c-1}A_{\delta,r}(h_1,h_2) \,B_{\delta,r}(h_1,h_2) 
+ 
\sum_{r=0}^{c-1} \Bigl(\widehat p_r\,A_{\delta,r}(h_1,h_2)  + |\Delta {\E}_r|\, B_{\delta,r}(h_1,h_2) \Bigl)\\
&=
O\Biggl(\sqrt{\frac{\log (c|\hyps|/\delta)}{m}}\,\sqrt{\frac{\log (c/\delta)}{m}}\,\,\sum_{r=0}^{c-1}\sigma_r(h_1,h_2)\sqrt{p_r} \Biggl)\\
&\qquad+
O\Biggl(\sqrt{\frac{\log (c|\hyps|/\delta)}{mk}}\,\,\sum_{r=0}^{c-1}  p_r\,\sigma_r(h_1,h_2) \Biggl)
+
O\Biggl(\sqrt{\frac{\log (c/\delta)}{mk}}\,\,\sum_{r=0}^{c-1} |\widehat\Delta {\E}_r| \sqrt{p_r} \Biggl)~.
\end{align*}
We apply the upper bound
\[
\sqrt{\frac{\log (c|\hyps|/\delta)}{m}}\,\sqrt{\frac{\log (c/\delta)}{m}} 
\leq 
\frac{\log (c|\hyps|/\delta)}{m}~,
\]
and then focus on the three sums ``$\sum_{r=0}^{c-1}$" above. We have 
\begin{align*}
\sum_{r=0}^{c-1}\sigma_r(h_1,h_2)\sqrt{p_r} 
&\leq 
\max_r \sigma_r(h_1,h_2) \Biggl(\sum_{r=0}^{c-1}\sqrt{p_r}\Biggl) \\
&\leq 
\sqrt{c}\,\max_r \sigma_r(h_1,h_2)~,
\end{align*}
\begin{align*}
\sum_{r=0}^{c-1}  p_r\,\sigma_r(h_1,h_2)   
&\leq 
\max_r \sigma_r(h_1,h_2) \Biggl( \sum_{r=0}^{c-1} p_r \Biggl) \\
&=
\max_r \sigma_r(h_1,h_2)~,
\end{align*}
and similarly
\[
\sum_{r=0}^{c-1} |\widehat\Delta {\E}_r| \sqrt{p_r}
\leq
\sqrt{c}\,\max_r |\widehat\Delta {\E}_r|~.
\]
We have therefore obtained
\begin{align*}
&\Bigl|{\E}_z[\widetilde\Delta\ell_b(h_1,h_2; z)\,|\, S_1,S_2] - {\E}_z [\Delta\ell_b(h_1,h_2;z)] \Bigl|\\
&=
O\Biggl(\Biggl(\frac{\sqrt{c}\,\log (c|\hyps|/\delta)}{m} + \sqrt{\frac{\log (c|\hyps|/\delta)}{mk}}\Biggl)\,\, \sigma(h_1,h_2) \Biggl)
+
O\Biggl(\sqrt{\frac{c\,\log (c/\delta)}{mk}}\,\max_r |\widehat\Delta {\E}_r| \Biggl)~,
\end{align*}
where $\sigma(h_1,h_2) = \max_r \sigma_r(h_1,h_2)$.
But clearly, from (\ref{e:Mom_concentration_full histogram_multiclass})
\[
|\widehat\Delta {\E}_r| \leq |\Delta {\E}_r| + A_{\delta,r}(h_1,h_2) 
=
|\Delta {\E}_r| +
O\left(\sigma_r(h_1,h_2)\sqrt{\frac{\log(c\,|\hyps|/\delta)}{mk}}\right)~,
\]
which we replace back into the previous display. This results in

\begin{align}
&\Bigl|{\E}_z[\widetilde\Delta\ell_b(h_1,h_2; z)\,|\, S_1,S_2] - {\E}_z [\Delta\ell_b(h_1,h_2;z)] \Bigl|\notag\\
&=
O\Biggl(\Biggl(\frac{\sqrt{c}\,\log (c|\hyps|/\delta)}{m} + \sqrt{\frac{\log (c|\hyps|/\delta)}{mk}}\Biggl)\,\, \sigma(h_1,h_2) \Biggl)
+
O\Biggl(\sqrt{\frac{c\,\log (c/\delta)}{mk}}\,\max_r |\Delta {\E}_r| \Biggl)~.\label{e:bias_full histogram_multiclass}
\end{align}

As for $\Var_z(\widetilde \Delta\ell_b(h_1,h_2; z)\,|\,S_1,S_2)$, we again observe that in the conditional space where $S_1$ and $S_2$ are given, the bound on this variance can be obtained by adapting the argument contained in Lemma \ref{l:begin_full histogram_multiclass}, as specified next.

Let 
\(
z = (\bag,\balpha),
\)
with $\bag = (x_1,\ldots, x_k)$, and introduce the short-hand notation
\begin{align*}
\balpha &= [\alpha_0,\ldots,\alpha_{c-1}]^\top \\
\widehat \bp &= [\widehat p_0, \ldots, \widehat p_{c-1}]^\top\\
\Delta\bell_i &= [\Delta\ell_0(h_1,h_2; x_i), \ldots, \Delta\ell_{c-1}(h_1,h_2; x_i)]^\top\\
\widehat \Delta\bE &= [\widehat\Delta{\E}_0,\ldots, \widehat\Delta{\E}_{c-1}]^\top~,\\
\bbh_i &= [\eta_0(x_i),\ldots, \eta_{c-1}(x_i)]^\top~,
\end{align*}
and its (approximately) centered versions 
\begin{align*}
\tbalpha &= \balpha - \widehat \bp \\
\widetilde \Delta\bell_i &= \Delta\bell_i - \widehat \Delta\bE\\
\tbh_i &= \bbh_i - \widehat \bp~.
\end{align*}
With this notation, we have
\begin{align*}
\widetilde \Delta\ell_b(h_1, h_2; z) 
&= 
\sum_{r=0}^{c-1} (\alpha_r - \widehat p_{S_2,r})\,\sum_{i=1}^k \Bigl(\Delta\ell_r(h_1,h_2; x_{i}) - {\widehat \E}_{S_1}[\Delta\ell_r(h_1,h_2)] \Bigl) + \sum_{r=0}^{c-1}\widehat p_{S_2,r}\, {\widehat \E}_{S_1}[\Delta\ell_r(h_1,h_2)]\\
&=
\tbalpha^\top \Bigl(\sum_{i=1}^k  \widetilde \Delta\bell_i\Bigl) 
\,+\,
\widehat \bp^\top \widehat \Delta\bE~.
\end{align*}
Thus, following the corresponding steps in the proof of Lemma \ref{l:begin_full histogram_multiclass}, in the conditional space where $S_1$ and $S_2$ are given (we again omit from the notation the conditioning on $S_1$ and $S_2$ for notational comfort), we can write
\begin{align*}
\Var_z\Bigl(\widetilde \Delta\ell_b(h_1, h_2; z)\Bigl)
&\leq
\E \Bigl[\Bigl(\sum_{i=1}^k  \widetilde\Delta\bell_i\Bigl)^\top \E[\tbalpha \tbalpha^\top\,|\,\bag]\, \Bigl(\sum_{i=1}^k  \widetilde\Delta\bell_i\Bigl) \Bigl]~,
\end{align*}
with
\begin{align*}
\E[\tbalpha \tbalpha^\top\,|\,\bag]  
&= \frac{1}{k^2} \Bigl(\tbsigma^* (\tbsigma^*)^\top - \Sigma^* \Bigl)~,
\end{align*}
where
\begin{align*}
\bsigma^* 
&= \sum_{i=1}^k \bbh_i~, \qquad \tbsigma^* = \sum_{i=1}^k \tbh_i \\
\Sigma^* 
&= \left[\sum_{i=1}^{k} \eta_r(x_i) \eta_s(x_i)\right]_{r,s=0}^{(c-1)\times (c-1)}
= \sum_{i=1}^k \bbh_i  {\bbh_i}^\top~.
\end{align*}
As a consequence, as in the proof of Lemma \ref{l:begin_full histogram_multiclass},
\[
\Var_z\Bigl(\widetilde \Delta\ell_b(h_1, h_2; z)\Bigl)
\leq
\frac{1}{k^2}\,\E \Biggl[\Biggl(\Bigl(\sum_{i=1}^k  \widetilde\Delta\bell_i\Bigl)^\top \tbsigma^* \Biggl)^2  \Biggl]~.
\]
But 
\[
\widetilde\Delta\bell_i = \Delta\bell_i - \Delta\E + (\Delta \E - \widehat \Delta \E)~,
\qquad
\tbh_i = \bbh_i - \bp + (\bp - \widehat \bp)~,
\]
so that
\begin{align*}
\Bigl(\sum_{i=1}^k  \widetilde\Delta\bell_i\Bigl)^\top \tbsigma^* 
&=
\Bigl(\sum_{i=1}^k  (\Delta\bell_i - \Delta\E) + k(\Delta \E - \widehat \Delta \E) \Bigl)^\top \Bigl(\sum_{i=1}^k (\bbh_i - \bp) + k(\bp - \widehat \bp) \Bigl)  \\
&=
\Bigl(\sum_{i=1}^k  (\Delta\bell_i - \Delta\E) \Bigl)^\top \Bigl(\sum_{i=1}^k (\bbh_i - \bp)\Bigl)
+
k\,\Bigl(\sum_{i=1}^k  (\Delta\bell_i - \Delta\E) \Bigl)^\top (\bp - \widehat \bp)\\
&\qquad+
k\,\Bigl(\Delta \E - \widehat \Delta \E \Bigl)^\top \Bigl(\sum_{i=1}^k (\bbh_i - \bp)\Bigl)
+
k^2\,\Bigl(\Delta \E - \widehat \Delta \E \Bigl)^\top (\bp - \widehat \bp)~,
\end{align*}
and, using $(a+b+c+d)^2 \leq 4a^2 +4b^2 + 4c^2 +4d^2$,
\begin{align*}
\Biggl(\Bigl(\sum_{i=1}^k  \widetilde\Delta\bell_i\Bigl)^\top \tbsigma^* \Biggl)^2
&\leq
4\Biggl(\Bigl(\sum_{i=1}^k  (\Delta\bell_i - \Delta\E) \Bigl)^\top \Bigl(\sum_{i=1}^k (\bbh_i - \bp) \Bigl) \Biggl)^2
+
4k^2\,\Biggl(\Bigl(\sum_{i=1}^k  (\Delta\bell_i - \Delta\E) \Bigl)^\top (\bp - \widehat \bp)\Biggl)^2\\
&\qquad+
4k^2\,\Biggl(\Bigl(\Delta \E - \widehat \Delta \E \Bigl)^\top \Bigl(\sum_{i=1}^k (\bbh_i - \bp)\Bigl)\Biggl)^2
+
4k^4\,\Biggl(\Bigl(\Delta \E - \widehat \Delta \E \Bigl)^\top (\bp - \widehat \bp)\Biggl)^2\\
&\leq
\underbrace{4\Biggl(\Bigl(\sum_{i=1}^k  (\Delta\bell_i - \Delta\E) \Bigl)^\top \Bigl(\sum_{i=1}^k (\bbh_i - \bp) \Bigl) \Biggl)^2}_{(I)}
+
\underbrace{4k^2\,\Bigl|\Bigl|\sum_{i=1}^k  (\Delta\bell_i - \Delta\E) \Bigl|\Bigl|_2^2 ||\bp - \widehat \bp||_2^2}_{(II)}\\
&\qquad+
\underbrace{4k^2\,\Bigl|\Bigl|\Delta \E - \widehat \Delta \E \Bigl|\Bigl|^2_2 \Bigl|\Bigl|\sum_{i=1}^k (\bbh_i - \bp)\Bigl|\Bigl|_2^2}_{(III)}
+
\underbrace{4k^4\,\Bigl|\Bigl|\Delta \E - \widehat \Delta \E \Bigl|\Bigl|^2_2 \Bigl|\Bigl|\bp - \widehat \bp\Bigl|\Bigl|_2^2}_{(IV)}~.
\end{align*}

Plugging back,

\[
\Var_z\Bigl(\widetilde \Delta\ell_b(h_1, h_2; z)\Bigl)
\leq
\frac{1}{k^2}\,\E \Bigl[ (I) \Bigl] + \frac{1}{k^2}\,\E \Bigl[ (II)\Bigl] + \frac{1}{k^2}\,\E \Bigl[(III)\Bigl] + \frac{1}{k^2}\,\E \Bigl[(IV) \Bigl]~.
\]

We treat the four terms separately.
For the first term, we can immediately apply the variance bound in Lemma \ref{l:begin_full histogram_multiclass}, leading to
\[
\frac{1}{k^2}\,\E \Bigl[ (I) \Bigl] 
\leq 
64\,\E \Bigl[\max_{r}  \Bigl(\Delta\ell_r(h_1,h_2; x)\Bigl)^2 \Bigl]~.
\]
We know from (\ref{e:hoeffding_on_p__full histogram_multiclass}) that
\[
||\bp - \widehat \bp||_2^2 
\leq 
\sum_{r=0}^{c-1} (B_{\delta,r}(h_1,h_2) )^2
= O \left( \frac{\log(c/\delta)}{mk}\right)~,
\]
and, from (\ref{e:Mom_concentration_full histogram_multiclass}), that
\[
\Bigl|\Bigl|\Delta \E - \widehat \Delta \E \Bigl|\Bigl|^2_2 
\leq
\sum_{r=0}^{c-1} (A_{\delta,r}(h_1,h_2) )^2
= 
O \left(\frac{\log(c\,|\hyps|/\delta)}{mk}\,\sum_{r=0}^{c-1}\sigma^2_r(h_1,h_2)\right)~.
\]
Moreover, since the vectors $\Delta\bell_i - \Delta\E$ and the vectors $\bbh_i - \bp$ are centered,
\[
\E\Biggl[\Bigl|\Bigl|\sum_{i=1}^k  (\Delta\bell_i - \Delta\E) \Bigl|\Bigl|_2^2\Biggl] 
= 
\E\Biggl[ \sum_{r=0}^{c-1}\Bigl(\sum_{i=1}^k  (\Delta\ell_r(h_1,h_2; x_i) - \Delta{\E}_r) \Bigl)^2\Biggl] 
=
k\,\sum_{r=0}^{c-1} \sigma^2_r(h_1,h_2)
\]
and
\begin{align*}
\E\Biggl[\Bigl|\Bigl|\sum_{i=1}^k (\bbh_i - \bp)\Bigl|\Bigl|_2^2\Biggl] 
&= 
\E\Biggl[ \sum_{r=0}^{c-1}\Bigl(\sum_{i=1}^k  (\eta_r(x_i) - p_r) \Bigl)^2\Biggl] \\
&=
k\,\sum_{r=0}^{c-1} \Var(\eta_r(x)) \\
&\leq
k\,\sum_{r=0}^{c-1} p_r(1-p_r)\\
&\leq 
k\,\sum_{r=0}^{c-1} p_r
= k~.
\end{align*}

Hence
\begin{align*}
\frac{1}{k^2}\,\E \Bigl[ (II)\Bigl] 
&= 
O \left( \frac{\log(c/\delta)}{m}\,\sum_{r=0}^{c-1} \sigma^2_r(h_1,h_2)\right)  \\
\frac{1}{k^2}\,\E \Bigl[ (III)\Bigl] 
&=
O \left( \frac{\log(c|\hyps|/\delta)}{m}\,\sum_{r=0}^{c-1} \sigma^2_r(h_1,h_2)\right)\\
\frac{1}{k^2}\,\E \Bigl[ (IV)\Bigl] 
&=
O \left(\Biggl(\frac{\log(c|\hyps|/\delta)}{m}\Biggl)^2\,\,\sum_{r=0}^{c-1} \sigma^2_r(h_1,h_2)\right)~.
\end{align*}
Piecing together, and leveraging the condition $m = \Omega(\log(c|\hyps|/\delta))$ yields
\begin{align}
\Var_z\Bigl(\widetilde \Delta\ell_b(h_1, h_2; z)\Bigl)
= 
O \left( \E \Bigl[ \max_{r} \Bigl( \Delta\ell_r(h_1,h_2; x)\Bigl)^2\Bigl] +  \frac{\log(c|\hyps|/\delta)}{m}\,\sum_{r=0}^{c-1} \sigma^2_r(h_1,h_2)\right)~.\label{e:variance_full histogram_multiclass}
\end{align}

We are now ready to combine (\ref{e:mombound_full histogram_multiclass}), (\ref{e:bias_full histogram_multiclass}), and (\ref{e:variance_full histogram_multiclass}). If we assume that $S_1$ and $S_2$ have been drawn in such a way that both  (\ref{e:mombound_full histogram_multiclass}) and (\ref{e:bias_full histogram_multiclass}) simultaneously hold for all $h_1,h_2 \in \hyps$, we conclude that
\begin{align*}
&\PP \Biggl(\,\forall\,h_1, h_2 \in \hyps \times \hyps\,\,\,  \Bigl|Q(h_1,h_2; S) - \Delta\popl(h_1,h_2) \Bigl|\notag\\
&\qquad \leq 
O\Biggl(\sqrt{ \E \Bigl[ \max_{r} \Bigl(\Delta\ell_r(h_1,h_2; x)\Bigl)^2\Bigl] +  \frac{\log(c|\hyps|/\delta)}{m}\,\sum_{r=0}^{c-1} \sigma^2_r(h_1,h_2)}\,\, \sqrt{\frac{\log(|\hyps|/\delta)}{m}}\\ 
&\qquad\qquad\qquad+ 
\sqrt{\frac{c\,\log (c/\delta)}{mk}}\,\max_r |\Delta {\E}_r| \Biggl)\Biggl) \\
&\geq
1-\delta~.    
\end{align*}

Similar to the proof of Theorem \ref{t:mainbinary}, we define
\begin{align*}
C_{\delta}&(h_1,h_2,|\hyps|,m,k,c)\\ 
&=
O\Biggl(\sqrt{ \E \Bigl[\max_{r}  \Bigl(\Delta\ell_r(h_1,h_2; x)\Bigl)^2\Bigl] +  \frac{\log(c|\hyps|/\delta)}{m}\,\sum_{r=0}^{c-1} \sigma^2_r(h_1,h_2)}\,\,\sqrt{\frac{\log(|\hyps|/\delta)}{m}}\\ 
&\qquad\qquad+ 
\sqrt{\frac{c\,\log (c/\delta)}{mk}}\,\max_r |\Delta {\E}_r| \Biggl)
\end{align*}
and make further overapproximations to simplify the above expression. In particular, we set
$$
\Delta\E 
= [{\E}_0,\ldots, {\E}_{c-1}]^\top 
= 
[{\E}[\Delta\ell_0(h_1,h_2; x)],\ldots,{\E}[ \Delta\ell_{c-1}(h_1,h_2; x)]]^\top 
= \E[\Delta \bell]
~,
$$
with 
\[
\Delta \bell = [\Delta\ell_0(h_1,h_2; x)],\ldots,\Delta\ell_{c-1}(h_1,h_2; x)]]^\top~.
\]

Then note that, by the convexity of $||\cdot||_\infty$,
\[
\max_r |\Delta {\E}_r| = ||\Delta {\E}||_\infty~ \leq \E[||\Delta\bell||_\infty] = {\E}_x \Bigl[\max_{r} |\Delta\ell_r(h_1,h_2; x)| \Bigl]~.
\]
Then
\begin{align*}
&C_{\delta}(h_1,h_2,|\hyps|,m,k,c)\\ 
&=
O\Biggl(\sqrt{ \E \Bigl[ \max_{r} \Bigl(\Delta\ell_r(h_1,h_2; x)\Bigl)^2\Bigl] +  \frac{\log(c|\hyps|/\delta)}{m}\,\sum_{r=0}^{c-1} \sigma^2_r(h_1,h_2)}\,\,\sqrt{\frac{\log(|\hyps|/\delta)}{m}}\\ 
&\qquad\qquad + 
\sqrt{\frac{c\log (c/\delta)}{mk}}\,{\E}_x \Bigl[\max_{r} |\Delta\ell_r(h_1,h_2; x)| \Bigl]  \Biggl)~.
\end{align*}

We then apply Proposition \ref{p:MoMknown} with $h_1 = h \in \hyps_{\beta/4}$ and $h_2 = \whs$, and following the same argument as in the proof of Theorem \ref{t:mainbinary}, we conclude that the resulting sample complexity
is
\begin{align*}
m = O \Biggl(
\frac{\Delta \ell^2(\hyps)\,\log(|\hyps|/\delta)}{\beta^2}
+ 
\frac{\Sigma(\hyps)\,\log(c|\hyps|/\delta)}{\beta}
+
\frac{c\,\Bigl(\Delta |\ell|(\hyps)\Bigl)^2\,\log(c/\delta)}{\beta^2\,k}
\Biggl)~,
\end{align*}

where
\begin{align*}
\Delta \ell^2(\hyps) = \max_{h \in \hyps} {\E}_x\Bigl[\max_r\Bigl( \Delta\ell_r(h,\whs; x)\Bigl)^2\Bigl]~,
\end{align*}

\begin{align*}
\Delta |\ell|(\hyps) = \max_{h \in \hyps} {\E}_x\Bigl[\max_r \Bigl|\Delta\ell_r(h,\whs; x)\Bigl|\Bigl]~,
\end{align*}
and
\begin{align*}
\Sigma(\hyps) =   \sqrt{\max_{h \in \hyps} \sum_{r=0}^{c-1} \sigma^2_r(h,\whs)}~.
\end{align*}

We further simplify the above expression by noting that
\begin{align*}
\Bigl(\Delta |\ell|(\hyps)\Bigl)^2 
&= 
\Bigl( \max_{h \in \hyps} {\E}_x\Bigl[\max_r \Bigl|\Delta\ell_r(h,\whs; x)\Bigl|\Bigl] \Bigl)^2 \\
&\leq
\max_{h \in \hyps} {\E}_x\Bigl[\Bigl(\max_r \Bigl|\Delta\ell_r(h,\whs; x)\Bigl|\Bigl)^2\Bigl]\\
&=
\max_{h \in \hyps} {\E}_x\Bigl[\max_r \Bigl(\Delta\ell_r(h,\whs; x)\Bigl)^2\Bigl]\\
&=
\Delta \ell^2(\hyps)~.
\end{align*}
Plugging back and rearranging concludes the proof.

\section{Experimental Setup for Section \ref{s:experiments} and Further Experimental Results}\label{sa:exp}

This appendix contains further details on our experimental setup, as well as test set results based on the Area Under the Curve (AUC) metric.

\subsection{Datasets and Models}

\paragraph{MNIST:}
MNIST \citep{lecun2010mnist} is a multi-class image prediction task where each example contains a $28\times28$ grayscale image of a hand-written digit and the label indicates which digit it is.
The data consists of 60,000 training images and 10,000 test images.
We consider a binarized version of MNIST where digits in $\{0, 2, 4, 6, 8\}$ are positive examples and digits on $\{1, 3, 5, 7, 9\}$ are negative, so that the task is to predict whether the digit is even or odd.
The model we use for MNIST is a Convolutional Neural Network (CNN) with the following layers: Convolution with 32 filters, max pooling with $2 \times 2$ window and stride, Convolution with 64 features, dropout layer with rate $0.5$, and finally a fully connected layer with a single output.
The two convolutional layers are followed by ReLU activations, while the final output has no activation so that it outputs a logit.

\paragraph{CIFAR-10:}
CIFAR-10 \citep{Krizhevsky09Cifar} is a multi-class image prediction task where each example contains a $32\times32$ color image which belongs to one of the following classes: \textsc{Airplane, Automobile, Bird, Cat, Deer, Dog, Frog, Horse, Ship,} or \textsc{Truck}.
The training data consists of 50,000 images and 10,000 test images.
We consider two binarized versions of CIFAR-10: Animal-vs-Machine, where the positive labels are \textsc{Bird, Cat, Deer, Dog, Frog,} and \textsc{Horse}, and Cat-vs-Rest, where \textsc{Cat} is the only positive label.
We use the same CNN architecture for CIFAR-10 as was used in MNIST.

\paragraph{Higgs:}
The Higgs dataset \citep{Baldi2014Higgs} is a simulated dataset where the goal is to distinguish between processes that produce Higgs bosons and a background process that does not.
The dataset consists of 11,000,000 examples and we split them into training and testing by taking the first 10,000 examples as test examples with the remaining examples as being training examples.
Each example has 21 features which are a mix of directly measured properties of the system and hand-crafted high level features.
Our model class is a fully connected model with 4 hidden layers each having 300 neurons and ReLU activations followed by a fully connected layer with 1 output and no activation so that it outputs a logit.

\paragraph{Adult:}
The Adult dataset \cite{UCIAdult} is a dataset derived from the 1994 US Census Database where the goal is to predict whether an individual's income exceeds \$50,000 per year or not.
The features include several categorical and numerical features describing the individual.
We pre-process the data by one-hot encoding all categorical features and rescaling each numerical feature so that the values fall within the interval $[0,1]$.
Our model class is a fully connected network with a single hidden layer with 32 neurons and ReLU activation, followed by the output layer with no activation.

\paragraph{Criteo:}
The Criteo Display Advertising Challenge \citep{criteo-display-ad-challenge} is a competition where the goal is to predict whether a given ad impression will result in the user clicking on the ad.
Each example is described by a total of 39 features, 13 of which are integer ``count'' based features, and 26 of which are categorical features.
To preserve privacy, Criteo pre-processed the dataset by hashing each of the categorical features into 32 bit hashes, and they do not provide high-level descriptions of what the original feature values corresponded to.
We use only the training data which consits of 37M ad impressions together with whether they were clicked or not.
The examples are ordered by ad impression time and come from a span of 7 days.
An important characteristic of this dataset is that the click rate changes periodically.
The model class we use for the Criteo dataset is a Deep Embedding Model, which works by mapping each feature value to an embedding vector and then applying several fully connected layers to the concatenated embeddings.
To embed the integer-valued features, we first discretize each feature into buckets using logarithmically-spaced bucket boundaries with 40 boundaries.
Then, each bucket is associated with a learned embedding vector.
For each categorical feature, we map the hashed value to an integer bucket by mapping hashing again and taking the modulus with a per-feature number of buckets.
For all integer features, we use 40 buckets and an embedding dimension of 10.
\Cref{tab:categorical_features} shows the number of buckets and embedding dimension used for the categorical features.
In general, the number of buckets was chosen to be a round number approximately equal to the number of distinct values that appeared at least 100 times in the training data.
Once the features are embedded, the model uses three fully connected hidden layers with 256, 128, and 64 neurons respectively, all using ReLU activations.
The final output layer has no activation as it outputs a logit.

\begin{table}[h!]
\centering
\begin{tabular}{|l|c|c|}
\hline
\textbf{Feature Name} & \textbf{Number of Buckets} & \textbf{Embedding Dimension} \\
\hline
categorical-feature-14 & 700 & 30 \\
categorical-feature-15 & 500 & 30 \\
categorical-feature-16 & 10000 & 70 \\
categorical-feature-17 & 10000 & 70 \\
categorical-feature-18 & 200 & 22 \\
categorical-feature-19 & 20 & 10 \\
categorical-feature-20 & 10000 & 70 \\
categorical-feature-21 & 400 & 30 \\
categorical-feature-22 & 10 & 10 \\
categorical-feature-23 & 10000 & 70 \\
categorical-feature-24 & 5000 & 50 \\
categorical-feature-25 & 10000 & 70 \\
categorical-feature-26 & 5000 & 50 \\
categorical-feature-27 & 50 & 10 \\
categorical-feature-28 & 5000 & 53 \\
categorical-feature-29 & 10000 & 70 \\
categorical-feature-30 & 10 & 10 \\
categorical-feature-31 & 5000 & 50 \\
categorical-feature-32 & 5000 & 50 \\
categorical-feature-33 & 10 & 10 \\
categorical-feature-34 & 10000 & 70 \\
categorical-feature-35 & 10 & 10 \\
categorical-feature-36 & 10 & 10 \\
categorical-feature-37 & 10000 & 70 \\
categorical-feature-38 & 10 & 20 \\
categorical-feature-39 & 10000 & 70 \\
\hline
\end{tabular}
\caption{Criteo Categorical Feature Specifications}
\label{tab:categorical_features}
\end{table}

\subsection{Instance-level Losses and Label Smoothing}
For completeness, in this section we briefly define the instance-level losses that we use in our experiments.
These losses are transformed into bag-level losses via the recipes for each of \textsc{GeneralUPM, EasyLLP}, and \textsc{PM}.

In the online Criteo learning experiment, we instantiate each LLP loss for the binary cross-entropy instance-level loss.
That is, when the model predicts a logit $\hat y \in \mathbb{R}$ for an example with label $y \in \{0,1\}$, the loss is given by
\[
\ell_\text{ce}(\hat y, y) = -y \log\bigl(\sigma(\hat y)\bigr) - (1-y)\log\bigl(1 - \sigma(\hat y)\bigr),
\]
where $\sigma$ is the logistic function.
We use the otpax \citep{deepmind2020jax} implementation of the cross entropy loss which is implemented in log space to avoid numerical issues.

For the batch learning experiments, we modify the cross-entropy loss to use \emph{label smoothing}, which is a common technique for preventing models from becoming overly confident over long training runs.
Concretely, this corresponds to defining a new loss
\[
\ell_\text{ces}(\hat y, y) = \ell_\text{ce}\bigl(\hat y, (1-\epsilon) y + \epsilon/2\bigr),
\]
where $\epsilon$ is a parameter controlling the strength of the label smoothing.
In our batch experiments we simply choose $\epsilon = 0.1$.
Note that we do not actually modify the labels of the training data.
We just apply \textsc{EasyLLP, GeneralUPM} and \textsc{PM} to the loss $\ell_\text{ces}$.

\subsection{Bag-level Loss Implementations}
In this section we review the definitions of  \textsc{EasyLLP} and \textsc{PM}, together with details of how our implementations estimate the quantities $p$, $\E[f_1(x)]$ and $\E[f_2(x)]$ appearing in the definitions of \textsc{EasyLLP} and \textsc{GeneralUPM}.
Throughout this section, fix a bag $z = (\bag,\alpha) = ((x_1,\ldots,x_k),\alpha)$ and instance-level loss function $\ell$ (like $\ell_\text{ce}$ or $\ell_\text{ces}$ from the previous section).

\paragraph{Proportion Matching (\textsc{PM}).}
The bag-level log loss for \textsc{PM} is simply defined as
\[
\ell_\text{PM}(h, z) = \ell( h(\bag), \alpha )
\]
where
\[
h(\bag) = \frac{1}{k}\,\sum_{i=1}^k h(x_i)~.
\]

\paragraph{\textsc{EasyLLP} \citep{10.5555/3666122.3666778}. }
Starting from an instance-level loss $\ell$, the bag-level Easy LLP loss is defined as
\[
\ell_\textsc{EasyLLP}(h, z)
= \frac{1}{k} \sum_{i=1}^k 
    \bigl(k (\alpha - p) + p\bigr) \cdot \ell(h(x_i), 1) 
    +
    \bigl(k(p-\alpha) + (1-p)\bigr) \cdot \ell(h(x_i), 0).
\]

\paragraph{Implementation of Parameter Estimation.}
The definitions of both \textsc{EasyLLP} and \textsc{GeneralUPM} require estimates of the label marginal $p = \PP(y = 1)$, which is the probability of observing a positive example from the underlying distribution.
Beyond this, \textsc{GeneralUPM} also requires estimates of $\E[f_1(x)]$ and $\E[f_2(x)]$.
In this section we describe how those quantities are estimated in our experiments.

\vspace{1em}
\noindent
\textit{Label Marginal $p$.}
In our batch learning experiments we estimate $p$ to be the average label proportion among all the bags in the training data.
This calculation is done once before training begins and the same value of $p$ is used in every evaluation of the LLP loss functions.
For the online Criteo experiment, as described in the main body, we process the data in chunks containing $2^{16}$ examples.
After reading each chunk, we estimate $p$ to be the average label proportion among all bags in the chunk, and that value is used in every evaluation of the LLP loss function for that chunk alone.
This is essential for achieving good performance on the Criteo dataset because the actual click rate (i.e. value of $p$) changes over time due to changes in the set of users interacting with ads at different times of the day, new ad campaigns launching, etc.

\vspace{1em}
\noindent
\textit{Model's Average Loss.}
Recall that $f_1(x) = \ell(h(x), 0)$ and $f_2(x) = \ell(h(x),1) - \ell(h(x), 0)$ correspond to the model's loss on example $x$ if the true label were $0$, and the difference in loss between label $1$ and $0$, respectively.
Since these quantities depend on the model parameters, we cannot estimate them once at the beginning of training and treat them as a constant.
In particular, their values change during training, but also the gradient of these terms with respect to the model parameters is non-zero.
To estimate the expected value of $f_1$ and $f_2$ on a random $x$ drawn from the data distribution we employ the following scheme:
Each SGD training batch consists of at least $B \geq 2$ bags.
When evaluating the \textsc{GeneralUPM} loss on one bag within a batch, we estimate the expected value of $f_1(x)$ and $f_2(x)$ by averaging their values over the feature vectors contained in the remaining $B-1$ bags.
This results in unbiased estimates that are uncorrelated with the other components of the LLP loss for that bag.
The variance of the resulting estimate depends on the relative size of the bag and batch size, since when the bag size approaches the batch size, we have fewer examples for estimating the expectations.

\subsection{Complete Log Loss Plots}
\Cref{fig:batchResultsUnZoomed} shows the complete set of results for the batch experiments reporting the testing average log loss (including bag sizes smaller than $2^4$ and bag size $2^{11}$).

\begin{figure}
    \centering
    \includegraphics[width=0.315\linewidth]{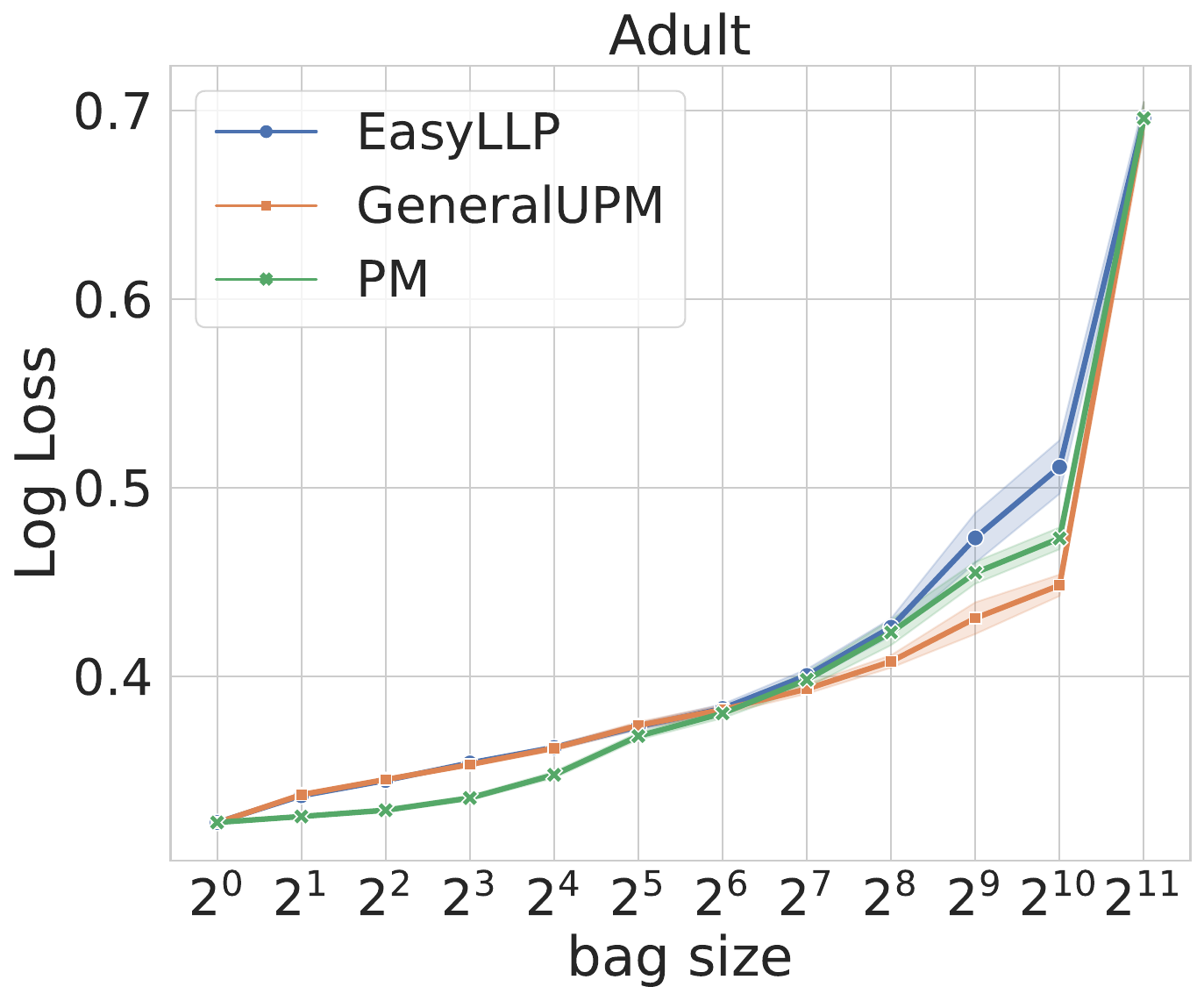}
    \includegraphics[width=0.32\linewidth]{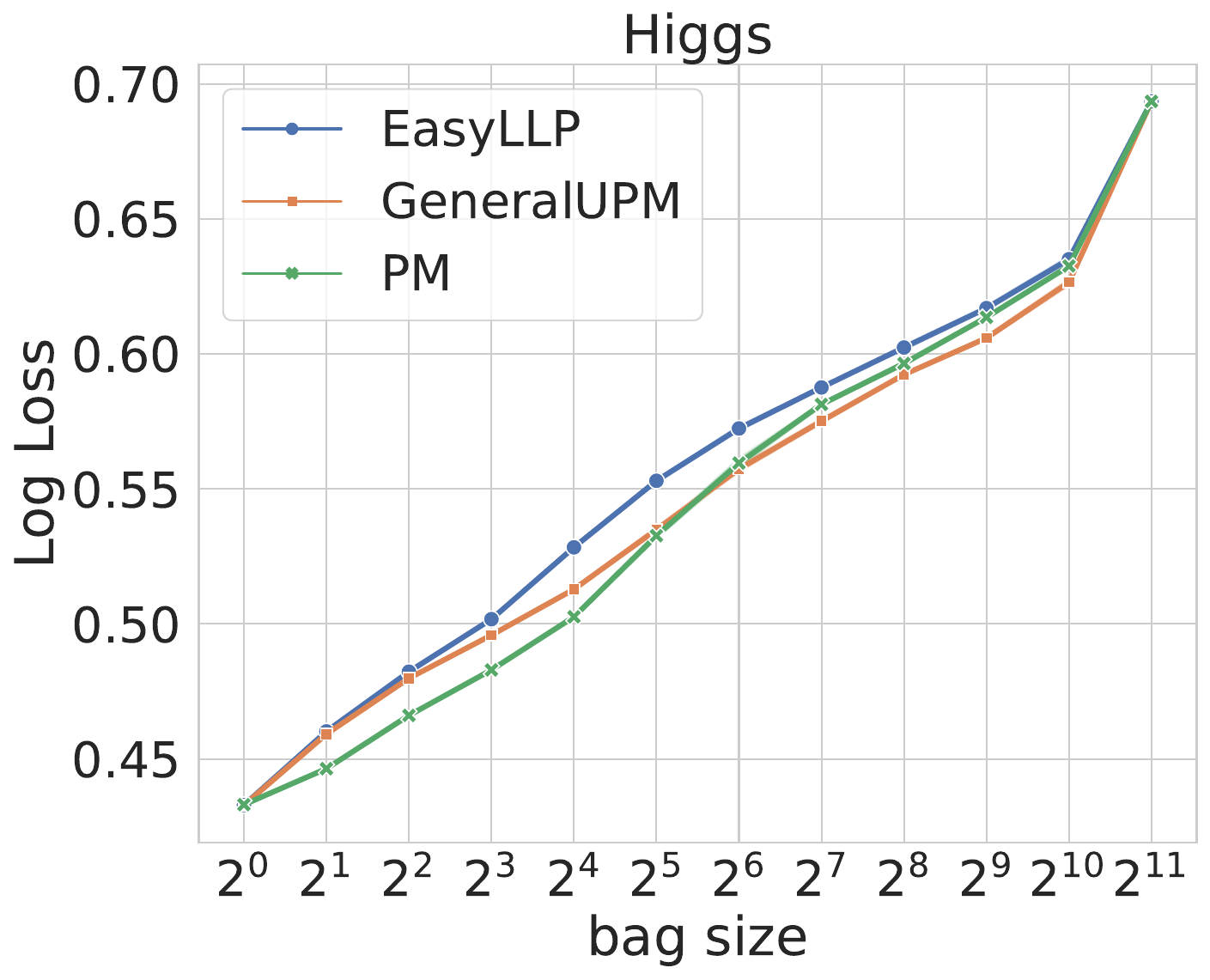}
    \includegraphics[width=0.31\linewidth]{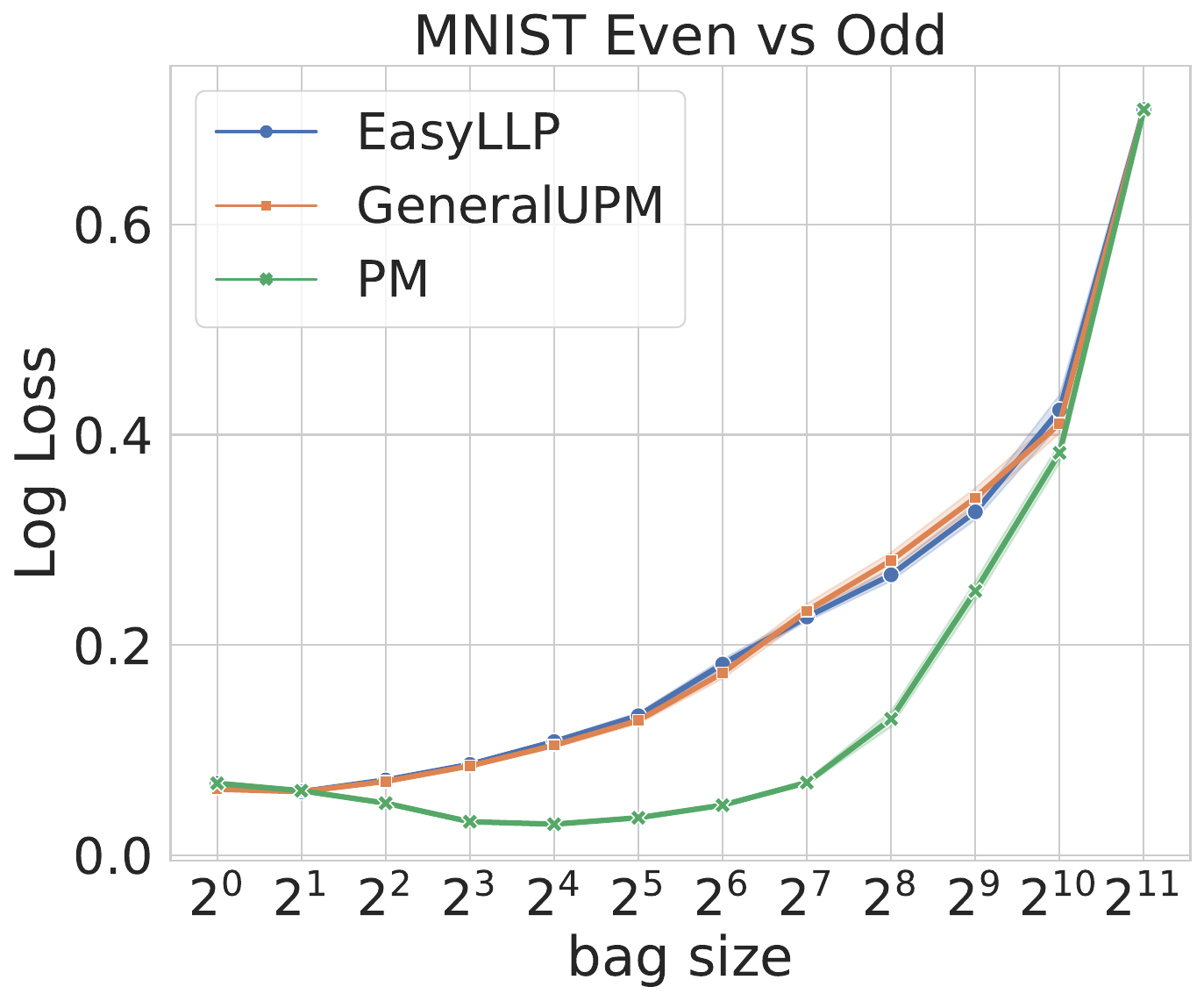}
    \includegraphics[width=0.315\linewidth]{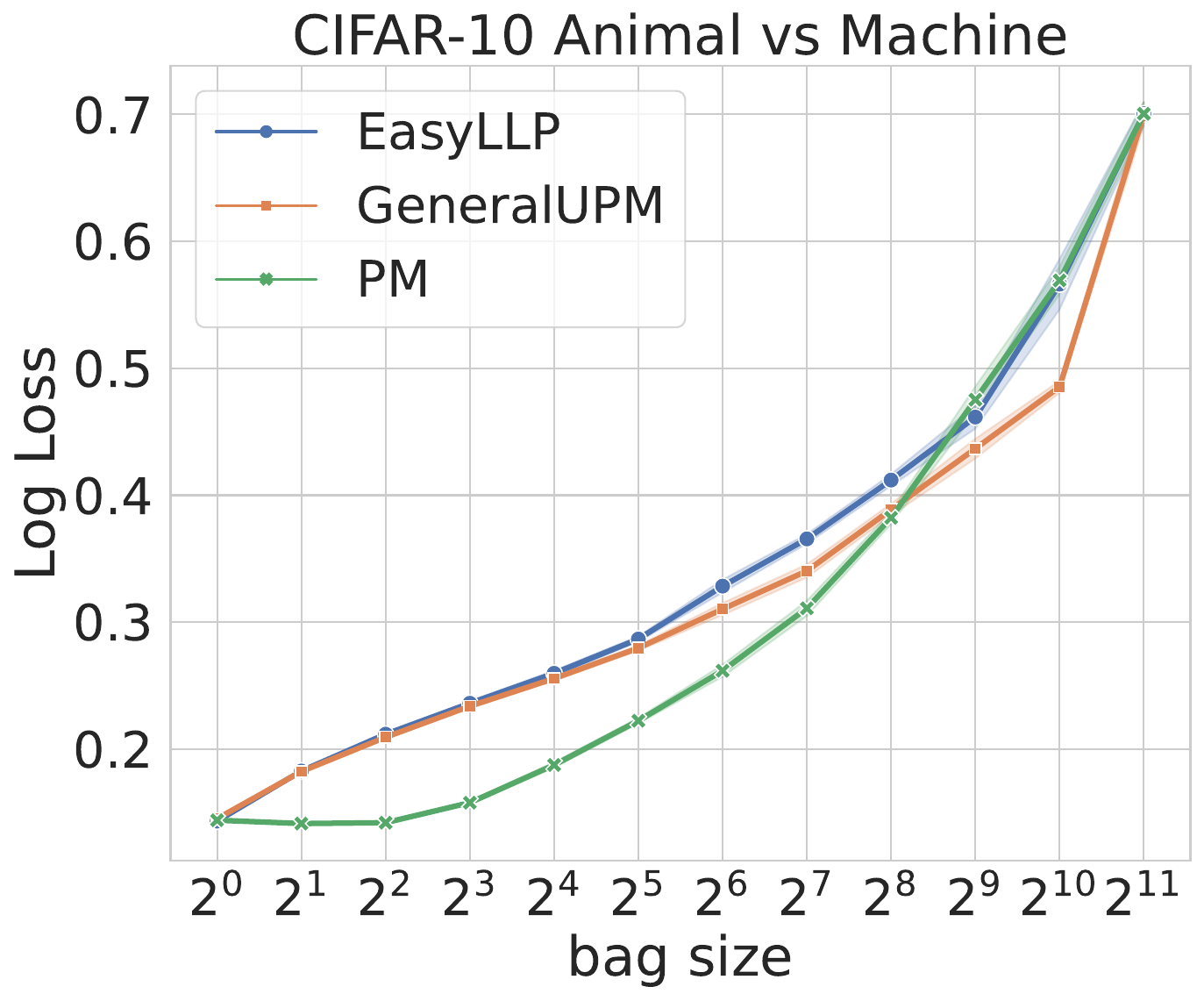}
    \includegraphics[width=0.32\linewidth]{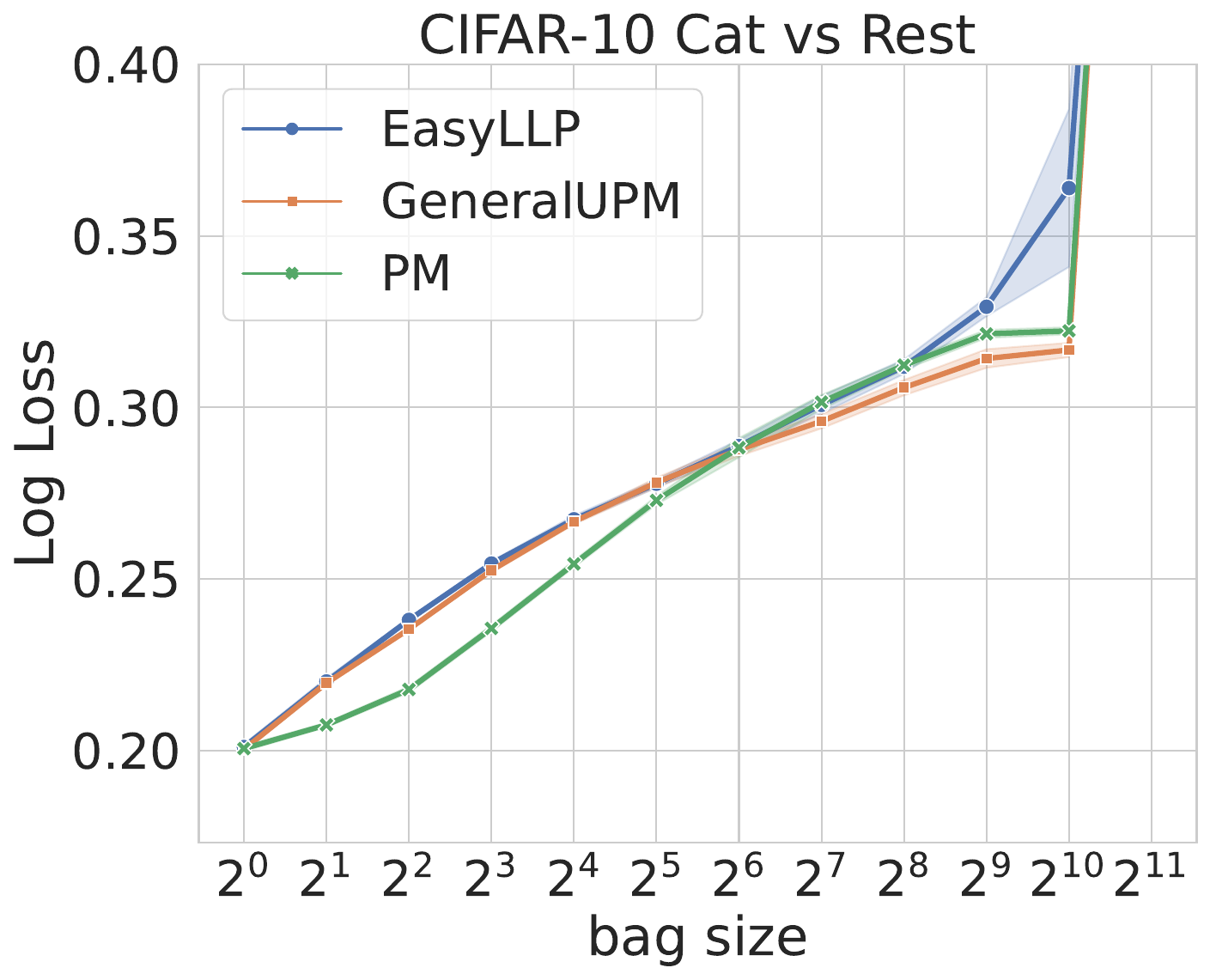}
    \caption{Average test log loss when training using each aggregate loss in the batch setting for all bag sizes. For each bag size we report the lowest log loss achieved over all learning rate and stopping epoch combinations. Error bars indicate one standard error in the mean across repetitions.}
    \label{fig:batchResultsUnZoomed}
\end{figure}

\subsection{AUC Plots}\label{sa:auc_plots}
\Cref{fig:batchResultsAUC} and \Cref{fig:onlineResultsAUC} present results that are identical to the setting of \Cref{fig:batchResults} and \Cref{fig:onlineResults} except that we report the AUC of models rather than the average log loss. Also \Cref{fig:batchResultsAUCUnZoomed} contains AUC results for all bag sizes in the batch setting.

The conclusions based on AUC are essentially the same as those from log loss with the exception of the Online Criteo experiment, where we find that when performance is measured by AUC, \textsc{EasyLLP} also out-performs \textsc{PM} at large bag sizes.

\begin{figure}
    \centering
    \includegraphics[width=0.315\linewidth]{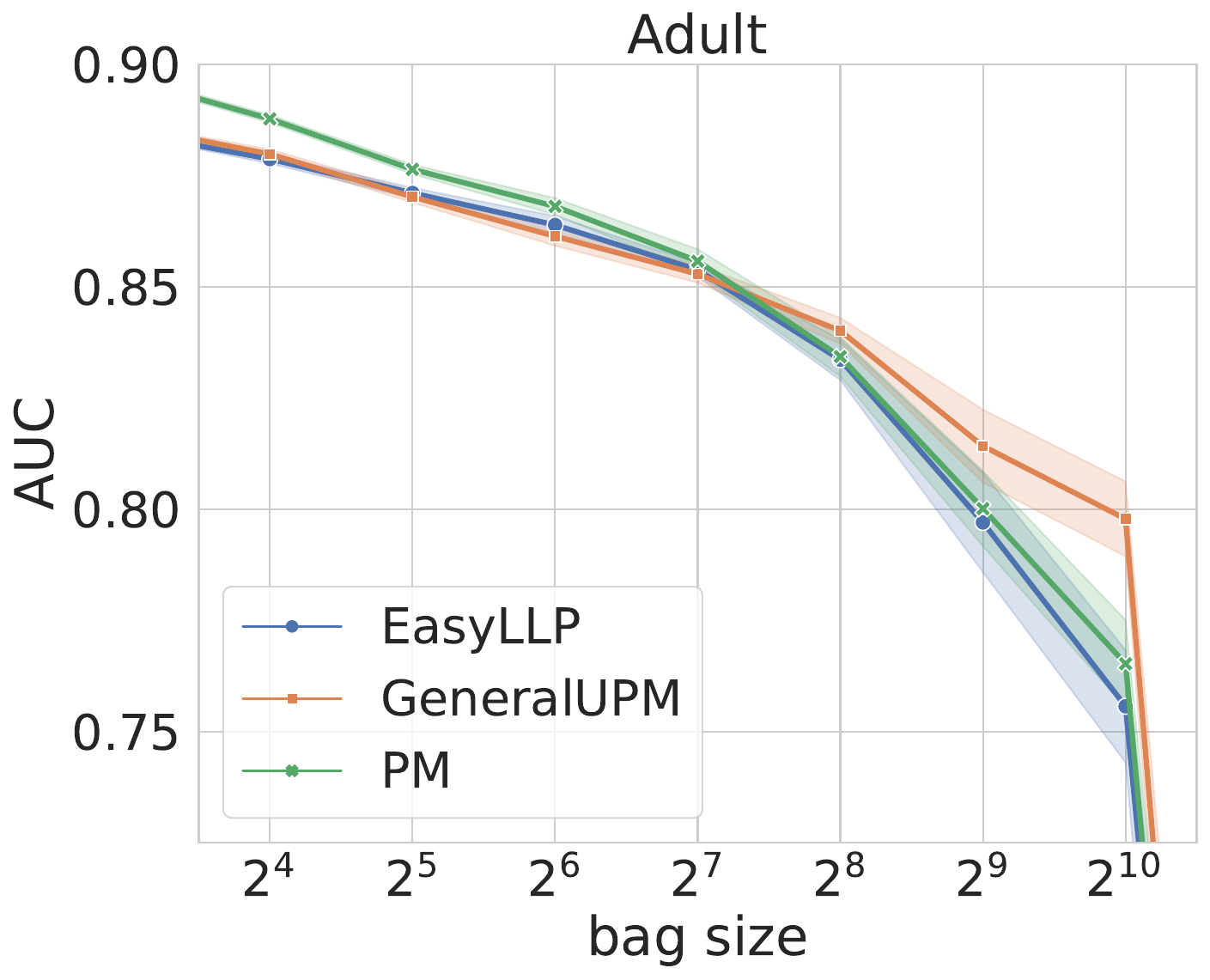}
    \includegraphics[width=0.32\linewidth]{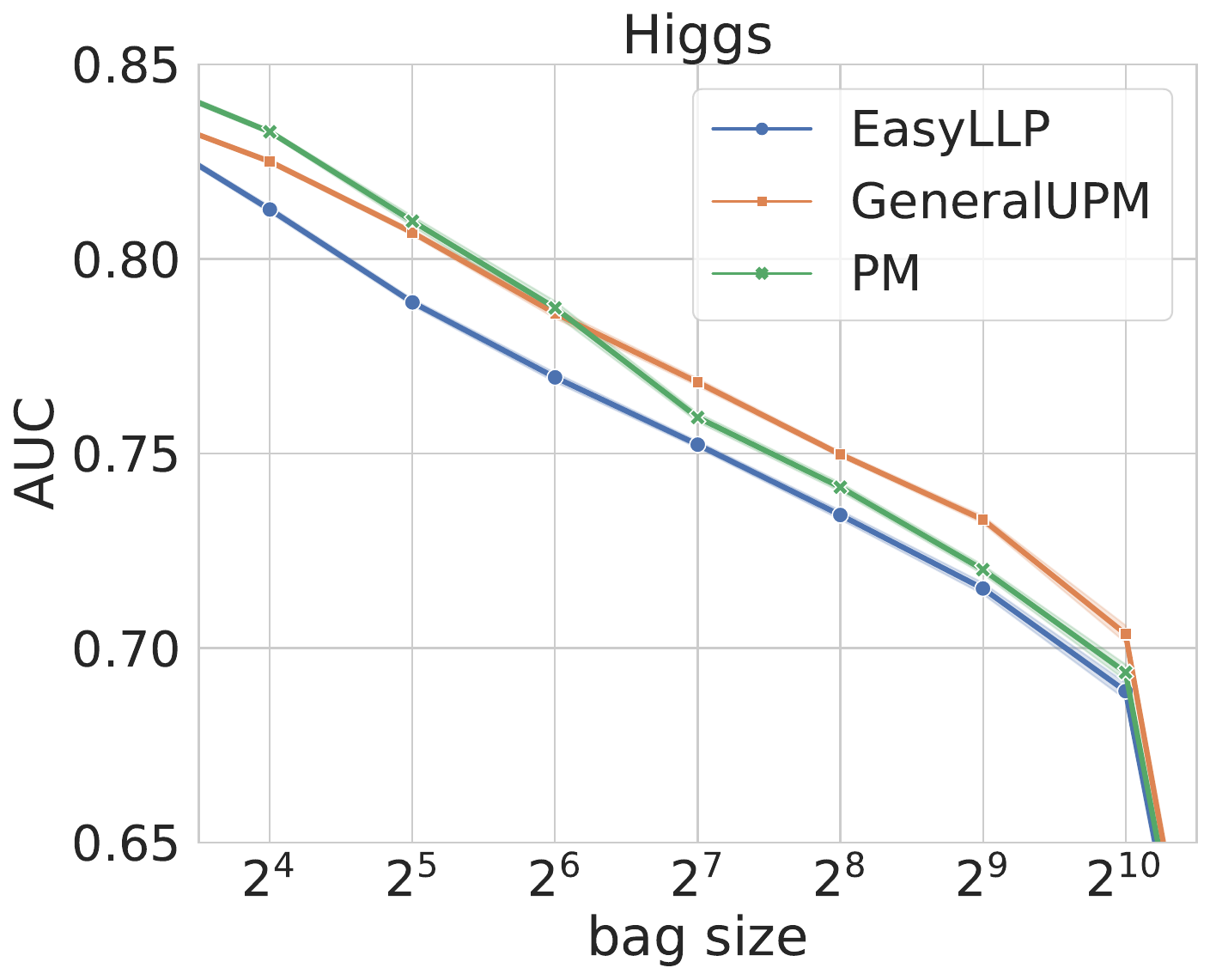}
    \includegraphics[width=0.31\linewidth]{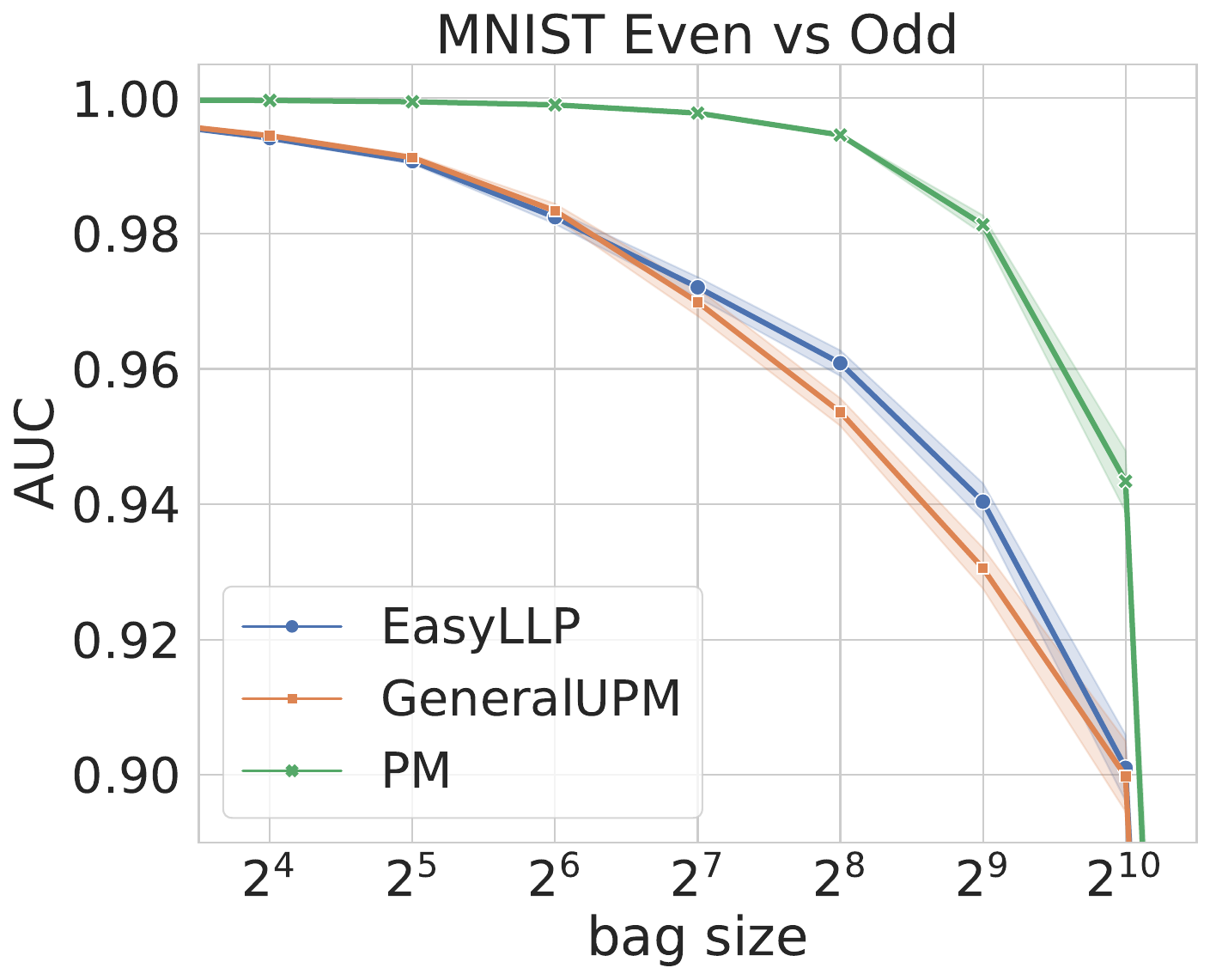}
    \includegraphics[width=0.315\linewidth]{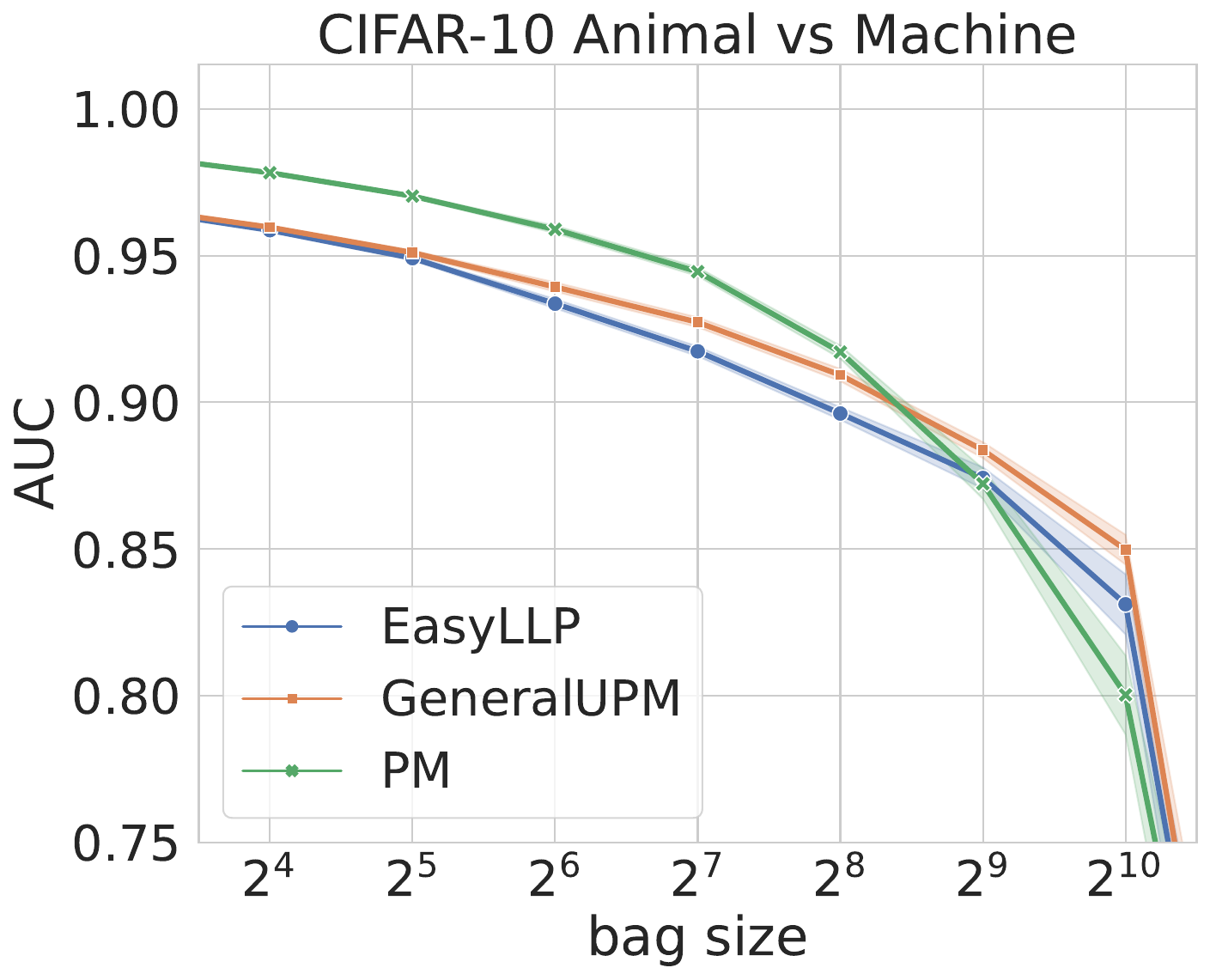}
    \includegraphics[width=0.32\linewidth]{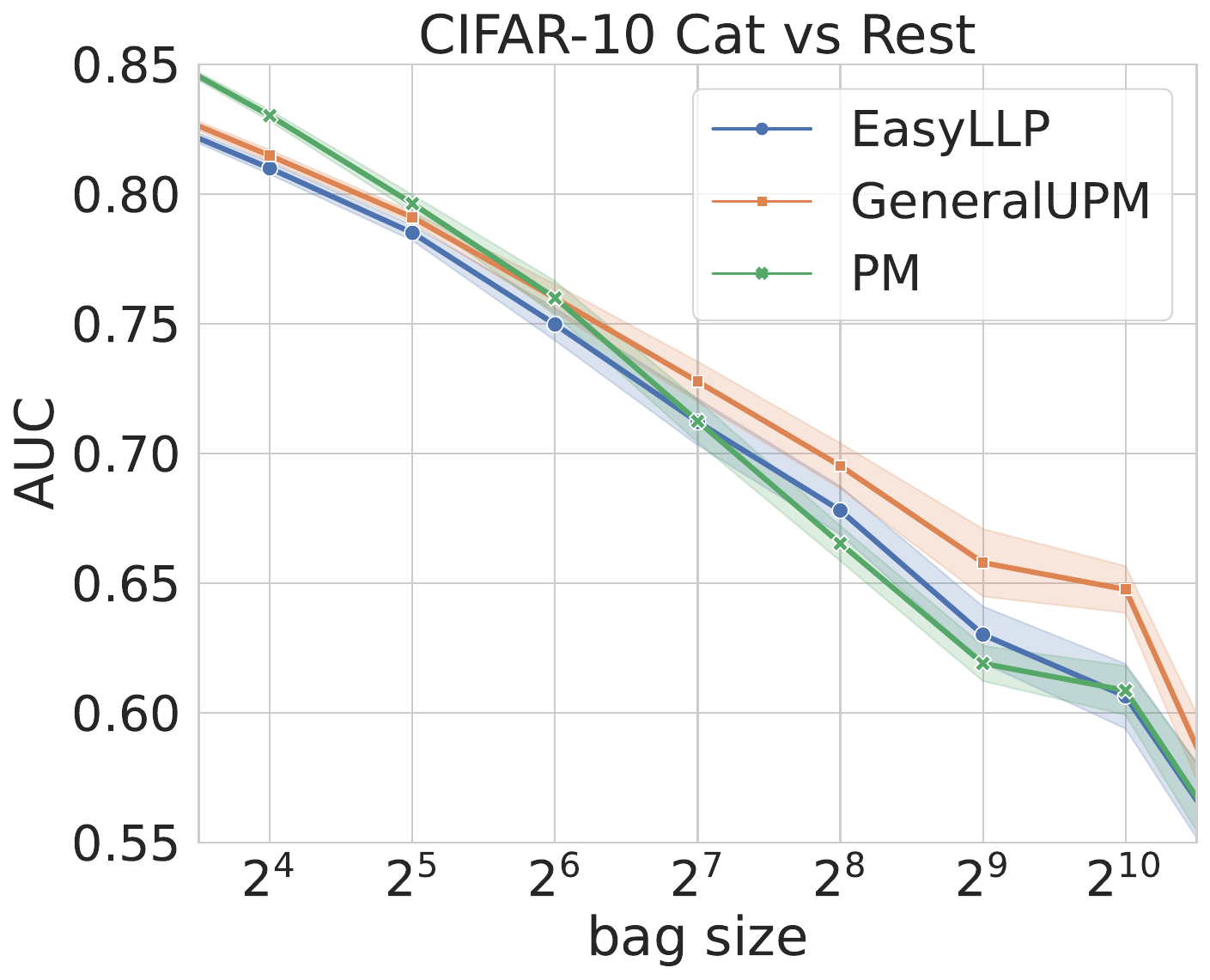}
    \caption{Average test AUC when training using each aggregate loss in the batch setting. For each bag size we report the highest AUC achieved over all learning rate and stopping epoch combinations. Error bars indicate one standard error in the mean across repetitions.}
    \label{fig:batchResultsAUC}
\end{figure}

\begin{figure}
    \centering
    \includegraphics[width=0.315\linewidth]{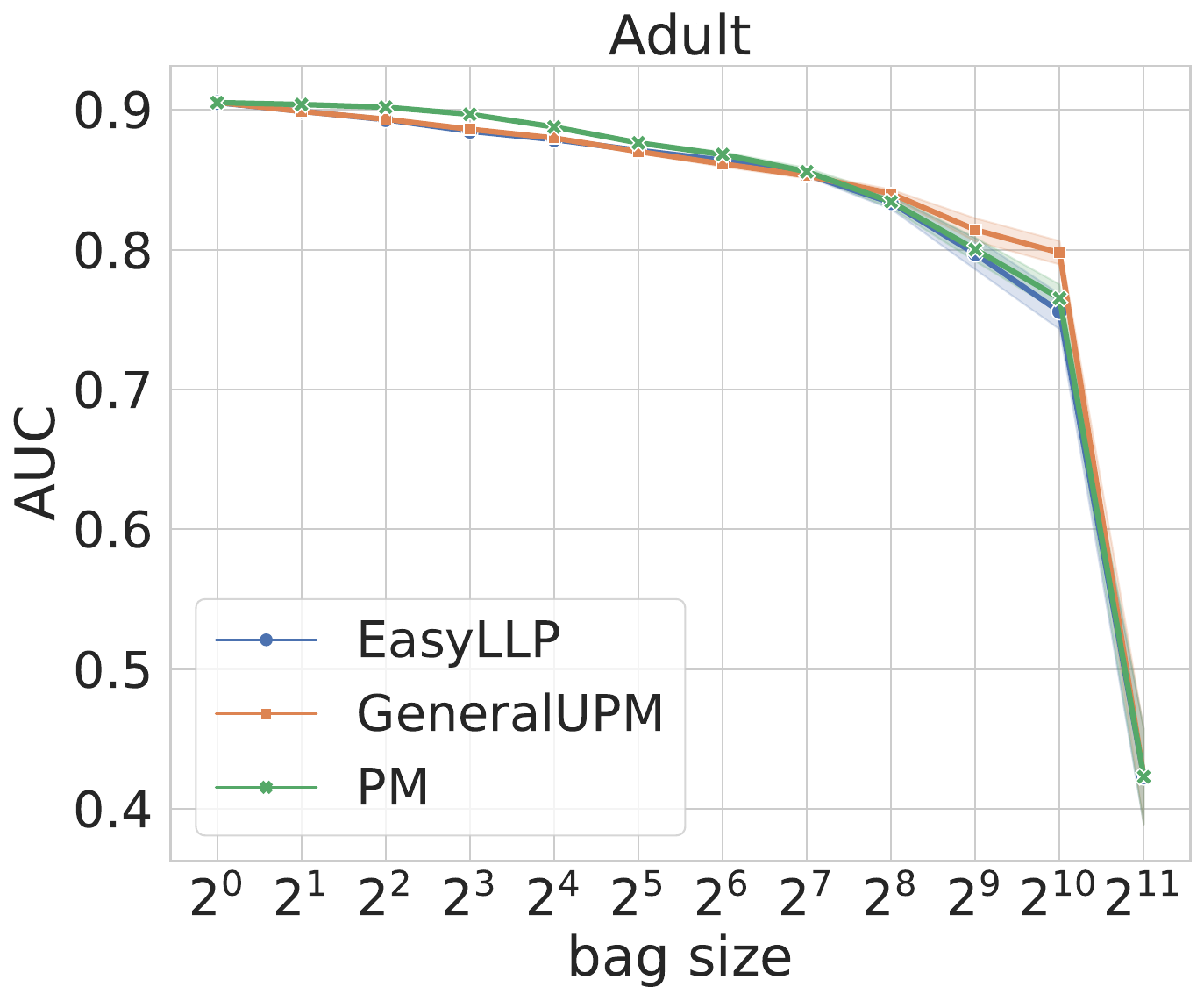}
    \includegraphics[width=0.32\linewidth]{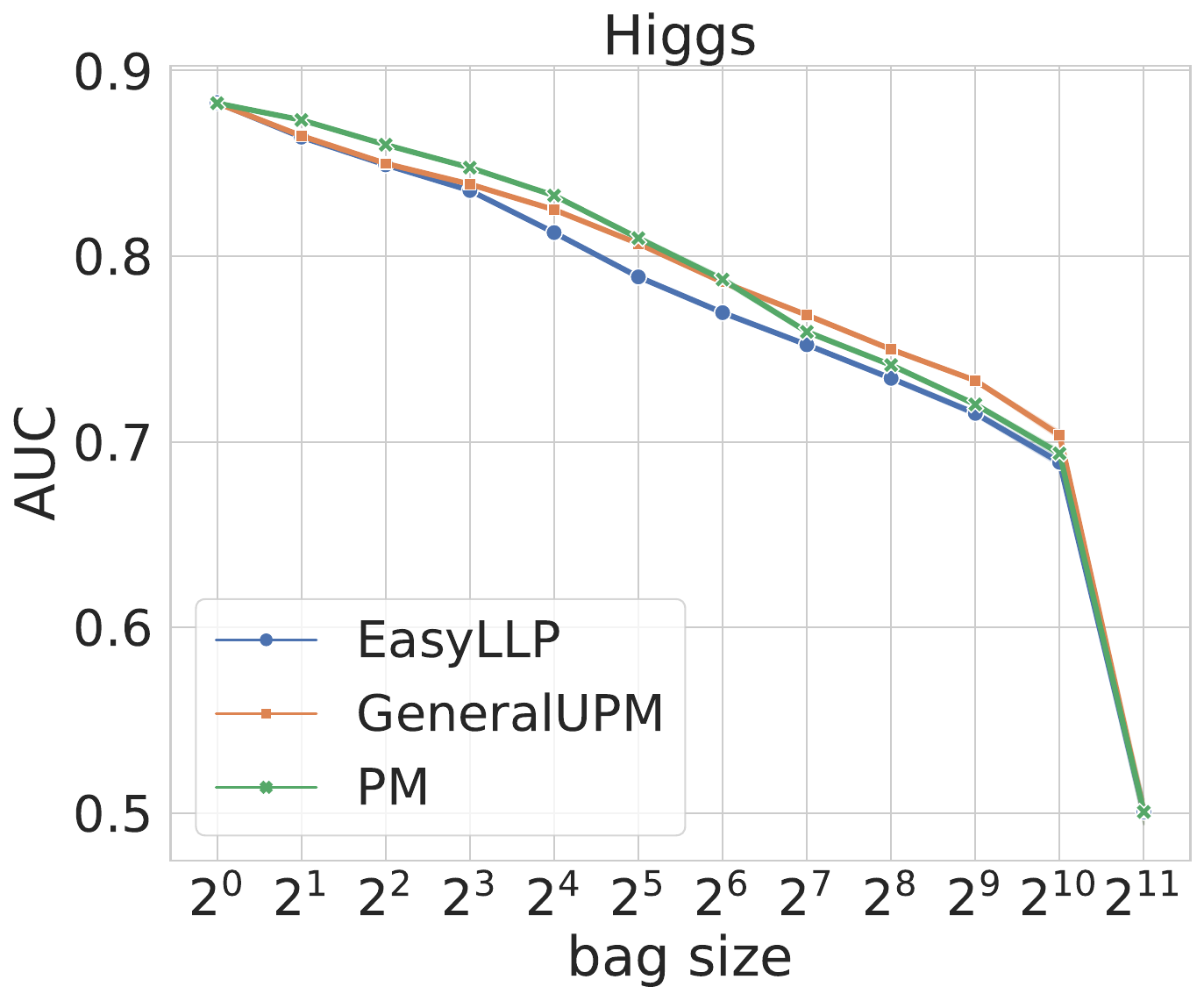}
    \includegraphics[width=0.31\linewidth]{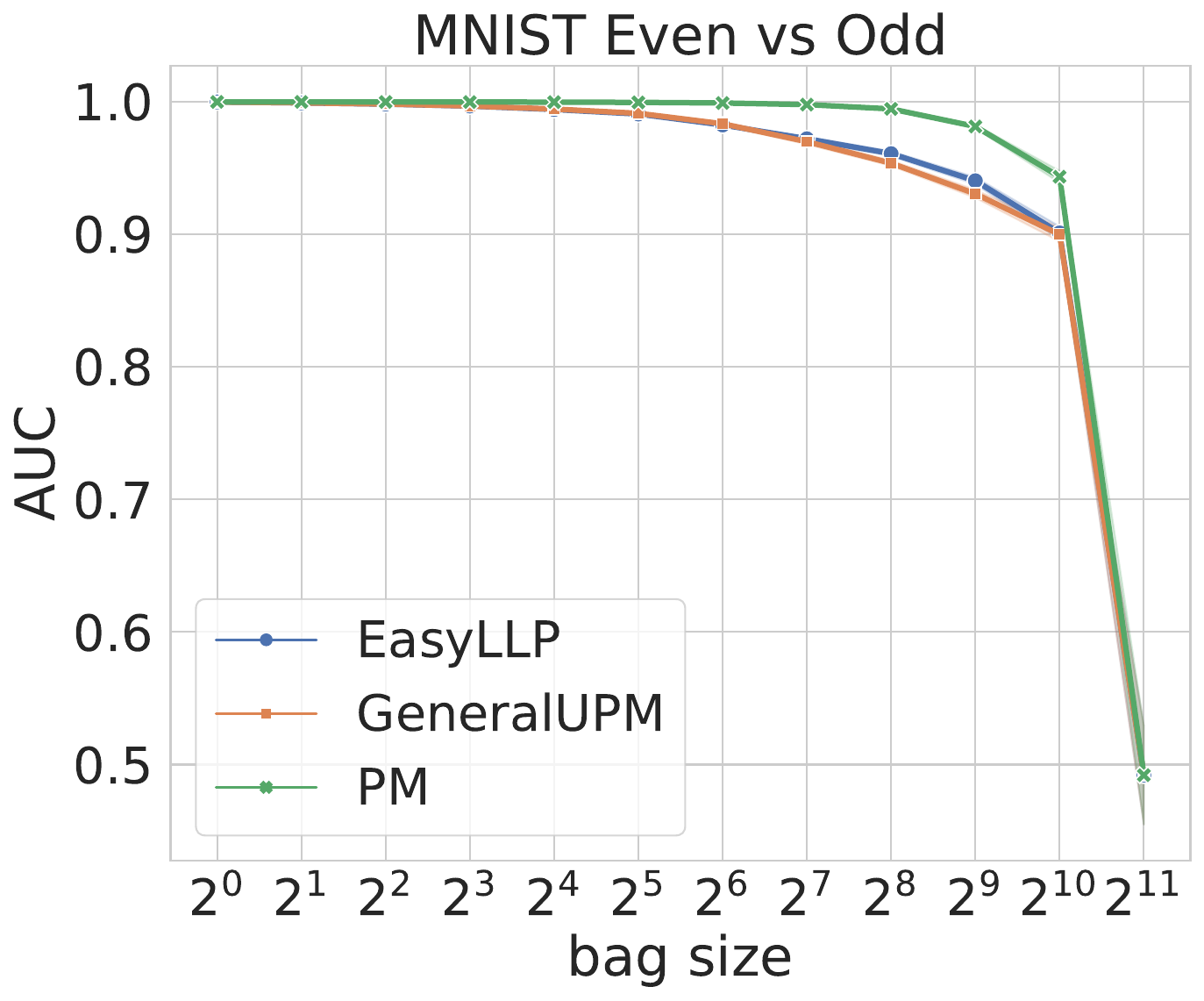}
    \includegraphics[width=0.315\linewidth]{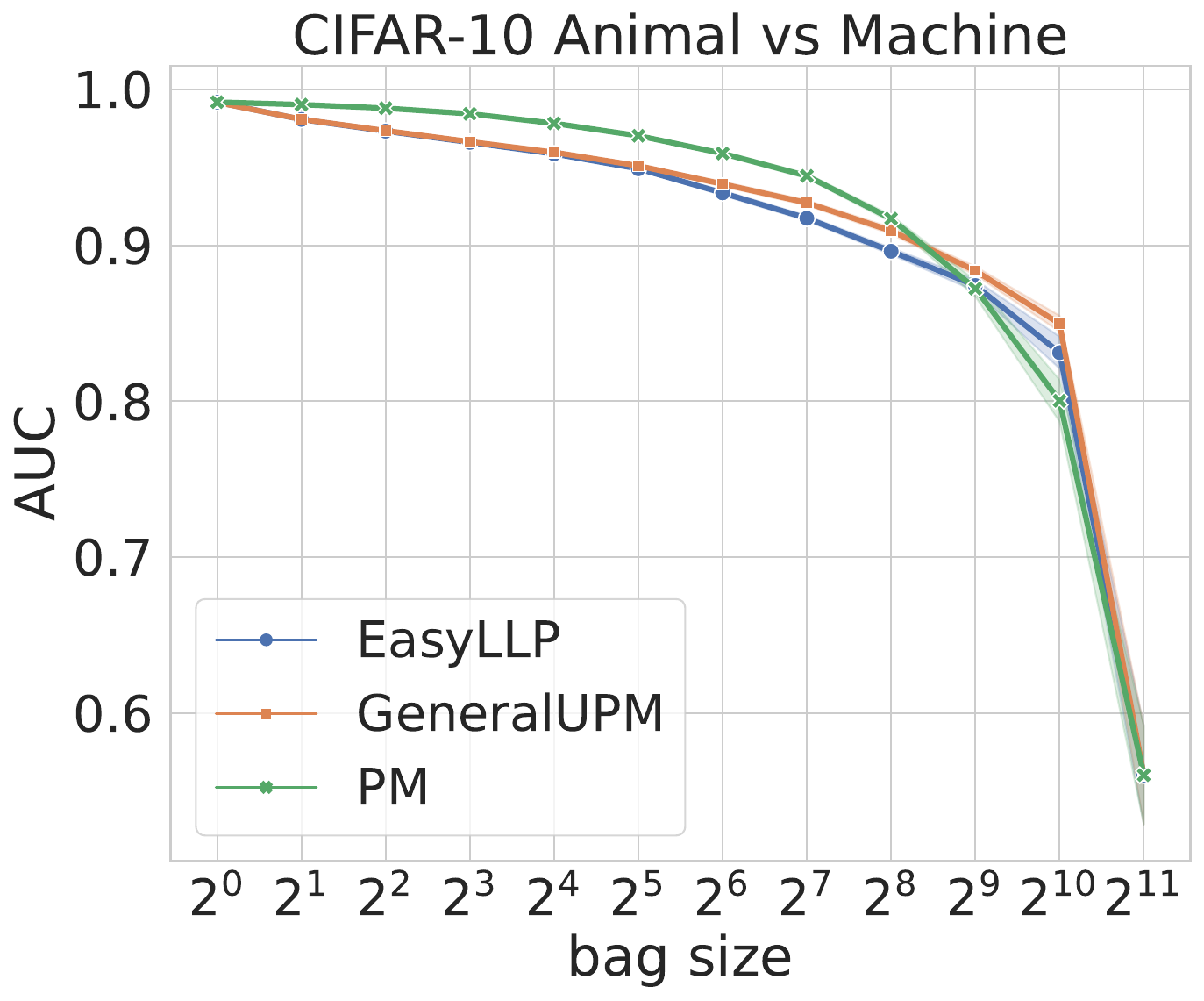}
    \includegraphics[width=0.32\linewidth]{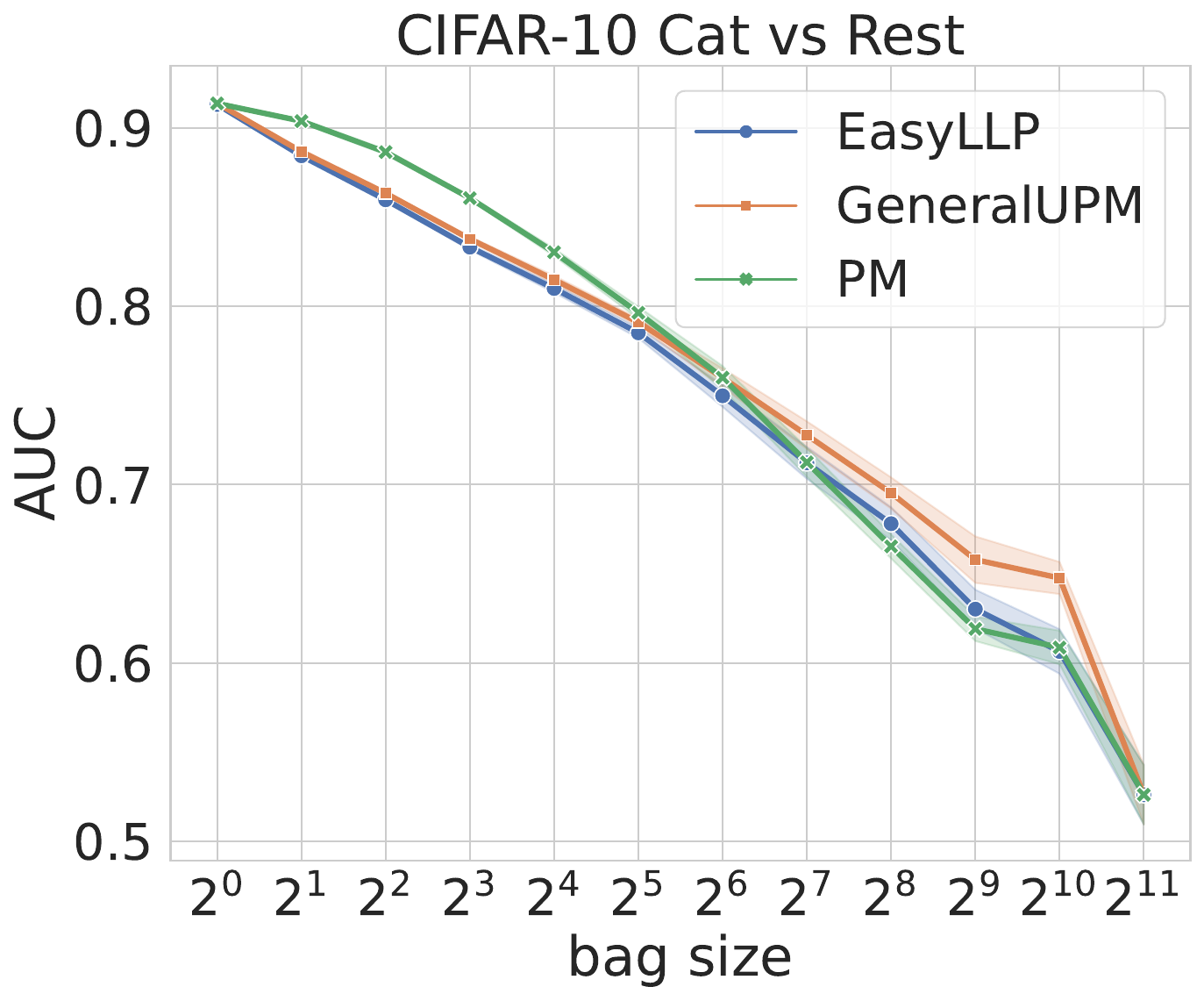}
    \caption{Average test AUC when training using each aggregate loss in the batch setting for all bag sizes. For each bag size we report the highest AUC achieved over all learning rate and stopping epoch combinations. Error bars indicate one standard error in the mean across repetitions.}
    \label{fig:batchResultsAUCUnZoomed}
\end{figure}

\begin{figure}
    \centering
    \includegraphics[width=0.32\linewidth]{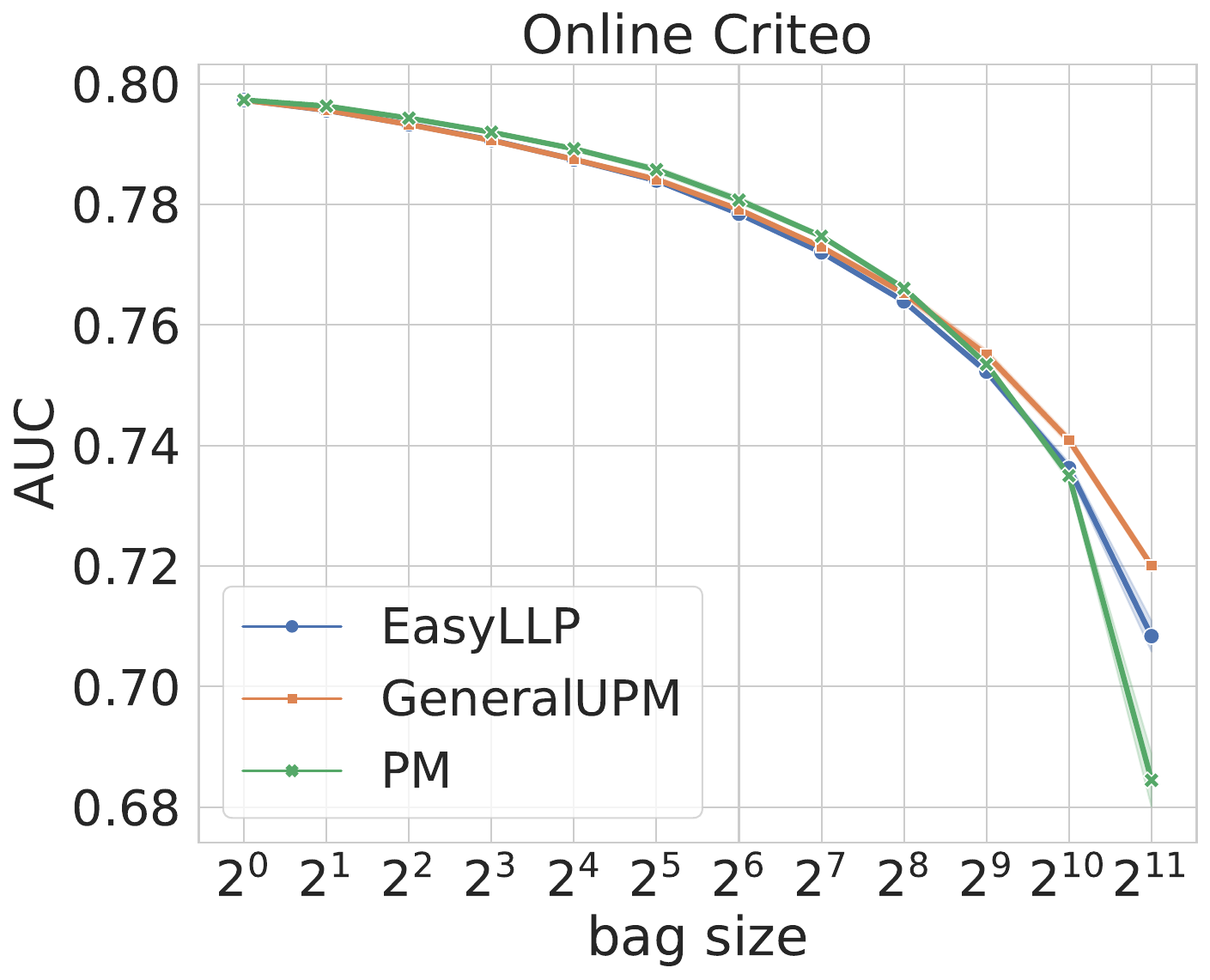}
    \includegraphics[width=0.32\linewidth]{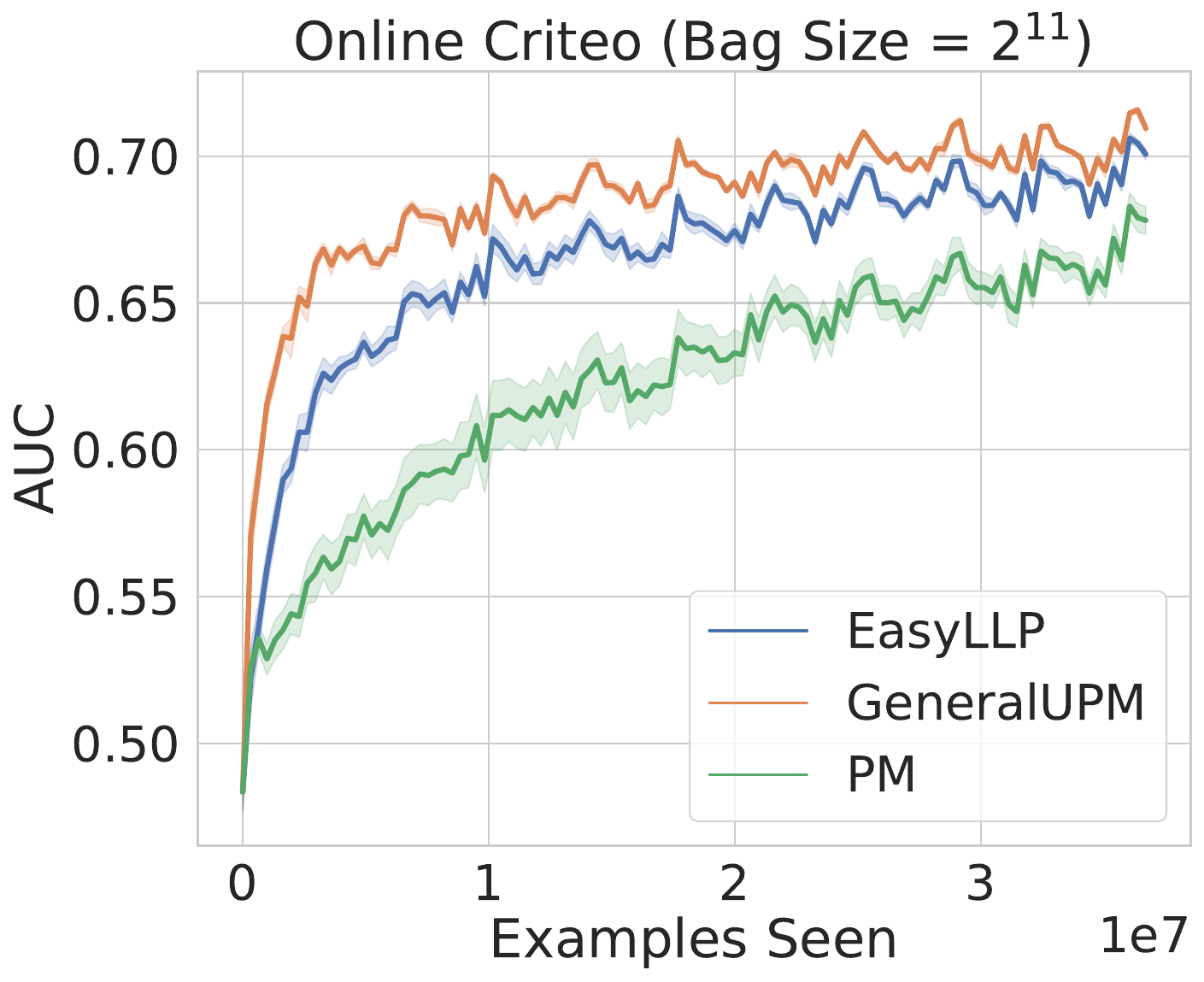}
    \caption{The left figure depicts the average chunk AUC for each LLP loss and bag size. The right figure shows the per-chunk AUC during online training for bag size $2^{11}$. Error bars indicate one standard error in the mean across repetitions.
    }
    \label{fig:onlineResultsAUC}
\end{figure}

\subsection{Computing Resources}
All experiments were conducted on a cluster of virtual machines using NVIDIA Tesla p100 GPUs.
Each machine was equipped with 64GB memory and a virtualized CPU.


\end{document}